\def\C{{\mathcal C}}
\def\L{{\mathcal L}}
\def\X{{\mathcal X}}
\def\Y{{\mathcal Y}}
\def\Z{{\mathcal Z}}
\newtheorem{theorem}{Theorem}[section]
\newtheorem{corollary}{Corollary}[section]
\newtheorem{proof}{Proof}
\DeclareMathOperator*{\argmin}{arg\,min}
\begin{document}

\begin{frontmatter}
	\title{A Binary Characterization Method for Shape Convexity and Applications}
	
	\author[mymainaddress]{Shousheng Luo}
	\author[mysecondaryaddress]{Jinfeng Chen}
	\author[mymainaddress]{Yunhai Xiao\corref{mycorrespondingauthor}}
	\cortext[mycorrespondingauthor]{Corresponding author}
	\ead{yhxiao@henu.edu.cn}
	\author[mythridaddress]{Xue-Cheng Tai}
	
	\address[mymainaddress]{Center for Applied Mathematics of Henan Province, Henan University, Kaifeng 475000, China}
	\address[mysecondaryaddress]{School of Mathematics and Statistics, Henan University, Kaifeng 475000, China}
	\address[mythridaddress]{Department of Mathematics,  Hong Kong Baptist University, Kowloon Tong, Hong Kong}

	\begin{abstract}
		Convexity prior is one of the main cue for human vision and shape completion with important applications in image processing, computer vision.
		This paper focuses on  characterization methods for convex objects and applications in image processing.
		We present a new method for convex objects representations using binary functions, that is, the convexity of a region is equivalent to a simple quadratic inequality constraint on its indicator function.
	   Models are proposed firstly by incorporating this result for image segmentation with convexity prior and convex hull computation of a given set with and without noises.
	   Then, these models are summarized to a general optimization problem
	   on binary function(s) with the quadratic inequality.
		Numerical algorithm is proposed based on linearization technique, where
		the linearized problem  is solved by a proximal alternating direction method of multipliers with guaranteed convergent.
		Numerical experiments demonstrate the efficiency and effectiveness of the proposed methods for image segmentation and convex hull computation in accuracy and computing time.
	\end{abstract}
	
	\begin{keyword}
		Convexity prior \sep image segmentation \sep convex hull \sep  proximal ADMM
	\end{keyword}
\end{frontmatter}

\section{Introduction}
\label{sec1}
Shape priors are widely used in image processing, computer vision and
other fields. The commonly concerned generic shape priors contain
star shape \cite{2019star-convex,2022star,Veksler2008StarSP}, convexity \cite{Gorelick2016convexity,luo2019convex}, connectivity \cite{2019connecvtivity}, constant width \cite{Parameteric2022}.
The mathematical characterization methods for the concerned priors are very important in real applications.
In this paper, we focus on the characterization method of convexity and its applications because the convexity prior is widely used in
image segmentation and computer vision.
Convexity prior is one of the main cue for human vision and shape completion.
Therefore, many characterization methods for convexity have been proposed in the literature. These methods can be roughly categorized into
two classes:  boundary or curvature  methods
\cite{ukwatta2013efficient,bae2017ALMEuler,2022Luo} and region methods \cite{Gorelick2016convexity,royer2016convexity}.

The boundary methods are based on the fact that the convexity of a region is equivalent to the nonnegativity of the curvature of
its boundary. In order to compute the curvature $\kappa$ of an evolving boundary curve efficiently, the curve is usually embedded into a two dimensional function, named level set function $\phi$, and $\kappa=\nabla\cdot\frac{\nabla\phi}{|\nabla\phi|}$ \cite{goldman2005curvature} for each level set curve. In the pioneer work, the nonnegative curvatures of the boundary curve(s) are penalized only in the models to steer and preserve the convexity of the
concerned objects \cite{ukwatta2013efficient,yang2017ventricle,bae2017ALMEuler}.
Later, a special level set function, called signed distance function (SDF), is used to compute the  curvature. Not only the computation formula is simplified as $\Delta\phi$ due to $|\nabla\phi|=1$ for SDF, but also it is proved that
the convexity of the concerned region is equivalent to $\Delta\phi\geq0$ on the whole image domain rather than on the object boundary only \cite{yan2020convexity,luo2019convex}. This method is generalized to multiple convex objects \cite{luo2019convex}, convex ring \cite{2022Luo}, 3D convex object \cite{2021Luo} characterization, and applied to compute convex hull of given sets with and without outliers \cite{li2021new}.

The region methods are based on the mathematical definition of convexity, i.e.,
all the line segments between any two points in the region belong to the region as well.
For these methods, binary function, which is 1 on the concerned region and 0 otherwise, i.e., the indicator function of the concerned region, is introduced.
Based on the convexity definition and the introduced binary function,
all $1$-$0$-$1$ configurations on all lines in the image domain should be excluded
to obtain a convex result in \cite{Gorelick2016convexity}. Then
this method is extended to multiple convex objects segmentation \cite{gorelick2017multi}.
Another region based method employs the fact that
the line segment of the two ends of any curve in a convex region is in the convex region  \cite{royer2016convexity}.

We lack efficient algorithms for the involved minimization problems of the two aforementioned approaches.
The minimization problems for the region methods are NP-hard and only approximation solutions can be obtained   by penalization and linearization techniques
because the associated problems are combination optimization issues \cite{Gorelick2016convexity,gorelick2017multi,royer2016convexity}.
The minimization problems for the
boundary methods contain a nonsmooth and nonconvex term from curvature formula \cite{2022Luo,yang2017ventricle,bae2017ALMEuler}, which is very difficult to tackle.
Therefore, it is deserved to develop a new characterization method for convexity shape prior.

Very recently, we proposed a novel method for convex shape characterization in our short paper \cite{binary2021Luo}.
Let $o$ be the indicator function associated with the concerned object $D$.
The convexity $D$ is equivalently characterized by
a quadratic constraint on $o$ , that is
$
[b_r\ast o](x)[1-o(x)]\leq \frac{1}{2}[1-o(x)],~~ \forall r\geq0,
$
where '$\ast$' denotes the convolution operation of two functions, and $b_r$ is a positive radial function defined on the disc with radius $r$ and center $(0,0)$ such that $\int_{\mathbb{R}^2}b_r(x)dx=1$.

This method combines the region method and boundary method.
On one hand, this method characterizes convexity by the nonnegativity of curvature
on the boundary essentially.
On the other hand, binary function is adopted to denote the
object of interest like the region based method.
Thus, this method uses binary function rather than general level set function to compute the boundary curvature.

The current paper extends this method in three aspects.
Firstly, a general model
is proposed, which can handle
multiple objects segmentation with convexity prior and convex hull computation of a given set. A model for multiple objects segmentation with
convexity prior is proposed by incorporating
the characterization method into a probability model, and
two variational models for convex hull computation of a given set with and without noises are designed, respectively. Then, the models are
summarized as a general model mathematically.
Secondly, an efficient algorithm is designed for the general model.
Linearization technique for the quadratic terms is employed to
alleviate the solving difficulties,
and then a proximal alternating direction method of multipliers (ADMM) \cite{PADMM} is designed for the linearization problem.
The proximal ADMM is very simple and efficient with closed form solutions to all sub-steps.
Thirdly, an interactive technique is adopted
to improve segmentation accuracy for complicated images, e.g. images with cluttered edges or illumination bias. Extensive experiments show the effectiveness and efficiency of the proposed models and algorithms.

The rest of this paper is organized as follows. Section \ref{sec2} introduces some notations and preliminary results.
The proposed  models are presented in Section \ref{ourm}.
Section \ref{sec3} is devoted to the
algorithms for the general model.
Numerical experiments and some performance comparisons are demonstrated in Section \ref{sec4}. Section \ref{sec5} concludes this paper and discusses some interesting topics for further research.
\section{Preliminaries}\label{sec2}
\setcounter{equation}{0}
In this section, we will introduce some notations, and present a
brief review on the proximal ADMM for convex minimization problem.
\subsection{Notations}

Finite spaces are denoted by capital letters, e.g. $\mathcal{X}$ and $\mathcal{Y}$.
For a given  symmetric (self-adjoint) positive definitive (SPD) matrix $H$ and any $x,y\in\mathcal{X}=\mathbb{R}^d$, we define $\langle x, y \rangle_H=\langle Hx,y\rangle$, and  $\|x\|_H=\sqrt{\langle x,x\rangle_H}$, where $\langle\cdot,\cdot\rangle$ denotes the
canonical inner product, i.e. $\langle x, y\rangle=\sum_{i=1}^d{x_i y_i}$.
The subscribe $_H$ will be omitted when $H$ is the identical operator. And $x\geq y$ means that $x_i\geq y_i$ for all the components of
$x$ and $y$.
We use $^{\top}$ to denote the transpose (or conjugate) of a vector and a matrix.
Let $f:\mathcal{X}\longrightarrow (-\infty,+\infty]$ be a closed proper convex function, $\partial f(x)$ denotes the
subdifferential of $f$ at $x$. In addition, we also use ${0}$ and ${1}$ to denote the vector with all components equaling to 0 and 1, respectively.

Let $\Omega\subset\mathbb{R}^2$ be an image domain and $D$
be a subregion of $\Omega$.
Usually, the indicator function of a given set $\mathbb{S}$ is denoted by $\mathbb{I}_\mathbb{S}$, i.e. $\mathbb{I}_\mathbb{S}(x)=1$ for $x\in \mathbb{S}$ and 0 otherwise.
Let 
$b_r(x)=\rho(\|x\|)\geq0$ for $\|x\|<r$ and 0 otherwise,
and satisfy $\int_{\|x\|<r}b_r(x)dx=1$.
For any given two vector-valued functions
$p(x):=(p_0(x),p_1(x),\cdots,p_n(x))^\top$ and
$q(x):=(q_0(x),q_1(x),\cdots,q_n(x))^\top$,
operation $p\odot q$ means
$
p\odot q(x)= \left(p_0(x)q_0(x),p_1(x)q_1(x),\cdots,p_n(x)q_n(x) \right)^\top,
$
and the convolution between $b_r(x)$ and $p$ is defined as
\begin{equation}\label{convolution}
	\big(b_r * p\big)_i(x) =
	\left\{
	\begin{array}{ll}
		\int_{\mathbb{R}^2} b_r (x-y)p_i(y) dy,& i=1,2,\ldots,n,\\
		0,& i=0.
	\end{array}
	\right.
\end{equation}

\subsection{Brief review on proximal ADMM}\label{review}

Consider the following convex minimization problem
\begin{equation}\label{th:convergence}
	\begin{array}{ll}
		\min\limits_{x, y}~ h(x) + g(y),~~
		\text{s.t.}~ Ax + By = c,
	\end{array}
\end{equation}
where  $h:\X \rightarrow (-\infty,+\infty]$ and $g:\Y \rightarrow (-\infty,+\infty]$ are closed proper convex functions, $A: \X \rightarrow \Z$ and $B: \Y \rightarrow \Z$ are linear operators, and $c\in \Z$ is a given vector.
A pair $(\bar x,\bar y)\in\X\times\Y$ is said to be a solution of (\ref{th:convergence}) if there exists a vector $\bar z\in\Z$ such that the following Karush-Kuhn-Tucker (KKT) system holds
\begin{equation}\label{kkt}
	0\in A^{\top} \bar z+\partial h(\bar x), \quad 0\in B^{\top} \bar z+\partial g(\bar y),\quad\mbox{and} \quad A\bar x + B\bar y = c.
\end{equation}

Given an initial pair  $(x^0, y^0, z^0) \in \X \times \Y \times \Z$, the proximal ADMM 
\cite{PADMM} for (\ref{th:convergence}) takes the following iterative framework
\begin{equation}\label{fadmm}
	\left\{
	\begin{array}{l}
		x^{j+1} = \argmin\limits_{x \in \X} h(x) + \langle z^j, Ax \rangle + \frac{{\mu}}{2} \lVert Ax + By^j - c \lVert ^2 + \frac{1}{2} \lVert x - x^j \lVert _S^2,\\
		y^{j+1} = \argmin\limits_{y \in \Y} g(y) +  \langle z^j, By \rangle + \frac{{\mu}}{2} \lVert Ax^{j+1} + By - c \lVert ^2 + \frac{1}{2} \lVert y - y^j \lVert _T^2,\\
		z^{j+1} = z^j + \tau \mu (Ax^{j+1}+ By^{j+1} -c),\\
	\end{array} \right.
\end{equation}
where $z^j$ plays the role of multiplier, ${\mu} > 0$ is a penalty parameter, $\tau \in (0,(1+\sqrt{5})/2)$ is a step length, $S$ and $T$ are two self-adjoint positive semi-definite linear operators.
Under some proper assumptions, the sequence $\{(x^j, y^j, z^j)\}$  generated by (\ref{fadmm}) converges to an accumulation point $(\bar x,\bar y, \bar z)$ such that $(\bar x, \bar y)$ is an optimal solution of  (\ref{th:convergence}). For more details on the proof, one can refer to  \cite[Theorem B]{PADMM}.

\section{Convexity characterization and applications}\label{ourm}
In this section, we will present the characterization method for convex object(s). Then, models for applications to object(s) segmentation with convex prior and
convex hull computation are proposed, which are summarized into a variational model in the next section to design a general algorithm framework.

In \cite{binary2021Luo}, the following theorem for characterization of a single convex object was presented.
\begin{theorem}\label{th:single}
	Let $o$ be the indicator function of a region $D \subset\mathbb{R}^2$. Then $D$ is convex if and only if
	\begin{equation}
		(b_r * {o})(1 - {o}) \leq \frac{1}{2}(1 - {o}), ~\forall r>0.
	\end{equation}
\end{theorem}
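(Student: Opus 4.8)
The plan is to recast the constraint geometrically and then argue the two implications separately. First I would note that the inequality is nontrivial only where $o(x)=0$: wherever $o(x)=1$ both sides vanish, so it suffices to show that for every $x\notin D$ and every $r>0$ one has $(b_r*o)(x)\le\frac12$. Since $b_r$ is nonnegative, radial and integrates to $1$ on the disc $B_r(x)$, the number $(b_r*o)(x)=\int_{B_r(x)}b_r(x-y)\,o(y)\,dy$ is exactly the $b_r$-weighted fraction of $B_r(x)$ lying in $D$. The single geometric fact I would isolate as a lemma is that, by radial symmetry, any half-disc $\{y\in B_r(x):\langle y-x,v\rangle\le 0\}$ carries weight exactly $\frac12$, while the minor circular segment cut off by a chord not passing through the center $x$ carries weight strictly less than $\frac12$.

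For the forward implication I would assume $D$ convex and fix $x\notin D$. Projecting $x$ onto the closed convex set $\overline D$ yields a nearest point $x^\ast$ together with a separating (or, if $x\in\partial D$, a supporting) line, so that $D$ is contained in the closed half-plane $H=\{y:\langle y-x^\ast,\,x-x^\ast\rangle\le 0\}$. Intersecting with the disc, $D\cap B_r(x)\subset H\cap B_r(x)$ is contained in the circular segment cut by a chord at distance $\|x-x^\ast\|\ge 0$ from the center $x$; by the lemma its weight is at most $\frac12$, which is the desired bound, with equality possible only when $x\in\partial D$.

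The reverse implication is the main obstacle, and I would prove its contrapositive: if $D$ is not convex I must exhibit a single pair $(x,r)$ violating the bound. Writing the convolution in polar coordinates about $x$ as $(b_r*o)(x)=\int_0^r s\,\rho(s)\,A(s)\,ds$, where $\rho$ is the radial profile of $b_r$ (normalized so that $2\pi\int_0^r s\rho(s)\,ds=1$) and $A(s)=\int_0^{2\pi} o\big(x+s(\cos\theta,\sin\theta)\big)\,d\theta$ is the angular measure of directions in which the circle of radius $s$ about $x$ meets $D$, the bound $(b_r*o)(x)\le\frac12$ becomes $\int_0^r s\,\rho(s)\,[A(s)-\pi]\,ds\le 0$ for all $r$. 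A violation therefore occurs precisely when $D$ wraps around $x$, i.e. occupies more than half of the directions over a range of radii carrying enough weight, which is exactly the behavior near a reentrant (concave) feature of $\partial D$ or at a point that $D$ encircles. The crux is thus to show that non-convexity necessarily produces such a point: starting from $a,b\in D$ and $c\in(a,b)\setminus D$, I would track how $D$ surrounds $c$ and select the offending center and radius.

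I expect this last step to be where the real work lies, for two reasons. It is the natural place where the non-smoothness of $\partial D$ must be handled by a measure-theoretic rather than a differential argument, and it is where I anticipate needing the region to be connected: the condition is genuinely insufficient without it, since two disjoint discs satisfy $(b_r*o)(x)\le\frac12$ at every external point yet are not convex. Consequently I would state and use connectedness explicitly in the reverse direction and take care that the constructed center $x$ lies outside $D$ while the chosen radius $r$ reaches the part of $D$ that encircles it.
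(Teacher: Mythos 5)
Your reduction to external points and your forward implication are correct and complete, and your connectedness caveat is a genuine catch: for two small discs far apart, every circle centered at an external point $x$ meets each disc in an arc of angle at most $\pi$ (supporting-line argument applied to each disc separately), and the ranges of radii at which the two discs are visible from $x$ are either disjoint or consist only of radii at which both subtended angles are tiny; integrating against the radial weight gives $(b_r*o)(x)\le\frac{1}{2}$ for all $r$, although the union is not convex. So ``region'' in the statement must indeed be read as connected. Note also that the paper itself contains no proof of this theorem --- it defers entirely to \cite{2021Luo} --- so the only thing to assess is your argument on its own merits.

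The genuine gap is the reverse implication, and it is larger than your sketch acknowledges, because the guiding picture --- violation occurs ``near a reentrant feature \emph{or at a point that $D$ encircles}'' --- is partly wrong, and wrong in exactly the way that makes the naive constructions fail. Take a thin annulus, inner radius $1$, outer radius $1+\epsilon$, with the uniform weight $b_r=\frac{1}{\pi r^2}$ used in the paper's experiments: it is connected, non-convex, and encircles every point of its hole, yet the coverage seen from the hole's center never exceeds $1-\frac{1}{(1+\epsilon)^2}\approx 2\epsilon$ for any $r$. One can check that violations occur only for centers $x$ within distance $O(\epsilon^{2})$ of the inner (concave) boundary, and only for radii in the window $\sqrt{\dist(x,D)}\ll r\lesssim \epsilon$, where the gain of order $r$ coming from the concavity of the near boundary beats the loss of order $\dist(x,D)/r$ while $B_r(x)$ has not yet crossed the outer boundary. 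So encirclement buys nothing, the violating pairs $(x,r)$ are confined to a tiny collar with a matched two-scale choice of radius, and what a proof must actually establish is that \emph{every} connected closed non-convex region has such a concave boundary feature together with compatible scales --- a quantitative statement which, for non-smooth boundaries, needs a real geometric or measure-theoretic argument (extremal inscribed discs in a complementary component, or an extremal boundary point inside the region bounded by a path in $D$ from $a$ to $b$ and the segment $[a,b]$). Your closing plan, ``track how $D$ surrounds $c$ and select the offending center and radius,'' names the destination but supplies none of this; as it stands, the hard half of the equivalence is missing, and the midpoint-of-the-gap or center-of-the-hole choices it suggests provably do not work.
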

One can refer to \cite{2021Luo}
for the proof of Theorem \ref{th:single}.
For the case of multiple convex objects, it is easy to obtain the following corollary.

\begin{corollary}\label{th:mul}
	Let $u_i$ be the indicator function of region $D_i \subset \mathbb{R}^2$ for $i=1,\dots, P$. Let $u(x)= (u_0(x),u_1(x), u_2(x), \ldots, u_P(x))^\top$, where
	$u_0$ is the indicator function of the background $\Omega\backslash \bigcup_{i=1}^PD_i$.
	Then, $D_1, D_2, \ldots, D_P$ are convex if and only if
	\begin{equation}\label{cons2}
		\C_{r}(u):= (u - 1)\odot(b_{r}*u)- \frac{1}{2}(u - 1) \geq 0, ~\forall r>0.
	\end{equation}
\end{corollary}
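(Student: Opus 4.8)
The plan is to reduce the vector-valued inequality to $P+1$ scalar inequalities and then invoke Theorem \ref{th:single} on each object separately. Since $\C_r(u) \geq 0$ is understood componentwise, I would first record that it holds for all $r>0$ if and only if each component satisfies $[\C_r(u)]_i \geq 0$ for all $r>0$, where
\begin{equation}
[\C_r(u)]_i = (u_i - 1)\,(b_r * u)_i - \tfrac{1}{2}(u_i - 1), \quad i = 0, 1, \dots, P.
\end{equation}
The key structural observation is that the convolution defined in (\ref{convolution}) is diagonal: $(b_r * u)_i$ depends only on $u_i$, namely $(b_r * u)_i = b_r * u_i$ for $i \geq 1$ and $(b_r * u)_0 \equiv 0$. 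Hence the constraints decouple completely and no cross terms between distinct objects appear.

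Next I would dispose of the background component $i = 0$. Because $(b_r * u)_0 \equiv 0$ by (\ref{convolution}), the zeroth constraint collapses to $-\tfrac{1}{2}(u_0 - 1) \geq 0$, i.e. $u_0 \leq 1$, which is automatically satisfied by any indicator function. Thus the $i=0$ component is vacuous and imposes no condition on the geometry.

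For each $i \in \{1, \dots, P\}$ I would then rewrite the scalar constraint using $u_i - 1 = -(1 - u_i)$, turning $[\C_r(u)]_i \geq 0$ into
\begin{equation}
(b_r * u_i)(1 - u_i) \leq \tfrac{1}{2}(1 - u_i), \quad \forall r > 0.
\end{equation}
This is precisely the condition of Theorem \ref{th:single} applied with $o = u_i$, so it holds for all $r>0$ if and only if $D_i$ is convex. Combining the three observations --- componentwise equivalence, the vacuous background term, and the per-object application of Theorem \ref{th:single} --- yields that $\C_r(u) \geq 0$ for all $r>0$ is equivalent to the simultaneous convexity of $D_1, \dots, D_P$.

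The argument is essentially bookkeeping, and I expect the main point requiring care to be the treatment of the background channel: one must verify both that the convolution definition (\ref{convolution}) makes $(b_r*u)_0$ vanish and that this renders the $i=0$ inequality trivially true, so that the background indicator (whose support is generally nonconvex) is not erroneously forced to be convex. The sign manipulation translating the $\odot$ form into the $(1-u_i)$ form of Theorem \ref{th:single} is routine, but should be written out to confirm that the inequality directions match.
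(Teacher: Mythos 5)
Your proof is correct and follows essentially the same route as the paper's: apply Theorem \ref{th:single} to each object's indicator function and assemble the resulting scalar inequalities into the vector form using the componentwise meaning of $\geq$, the definition of $\odot$, and the convolution rule (\ref{convolution}). You are in fact more explicit than the paper, which leaves the treatment of the background channel (that $(b_r * u)_0 \equiv 0$ reduces the $i=0$ inequality to the vacuous $u_0 \leq 1$, so the generally nonconvex background is not constrained) buried in the phrase ``using the definition of operations $\odot$ and $\ast$.''
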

\begin{proof}
	From Theorem \ref{th:single}, we know that the convexity of the $i$-th region $D_i$ can be characterized by
	\begin{align}\label{eq:ch1}
		(u_i - 1)(b_{r} * u_i) - \frac{1}{2}(u_i - 1) \geq 0.
	\end{align}
	Reformulating both sides of this inequality as a vector and using the definition of operations $\odot$ and $\ast$ defined in (\ref{convolution}), we can obtain the assertion of this result.
\end{proof}

\subsection{Image segmentation with convexity prior}
Image  segmentation  is  one  of  the  fundamental  topics  in  the  fields  of  image  processing  and  computer vision, which aims to partition an image domain into
several meaningful sub-regions.
Over the past decades, numerous approaches have been proposed in the literature.
These methods can be roughly categorized into model-based methods \cite{MS1989,CV2001} and (deep) learning based methods \cite{encoder-decoder,he2017mask}.
Although the learning based methods have achieved great success and are
the hot topics nowadays, this paper focuses on model-based methods for image segmentation with convexity prior.

For a given image $I(x)$ defined on $\Omega$,
we assume the foreground consists of $P$ regions of interest, named $D_i$ for $i=1,\ldots,P$,
and the background is denoted by  $D_0:=\Omega \backslash \bigcup_{i=1}^P D_i$.
Let $u_i$ be the indicator function of $D_i$ for $i=0,1,\cdots,P$, and $u=(u_0,u_1,\cdots,u_P)^\top$.
Therefore,
$u\in \hat{\mathbb{U}}=\{u=(u_0,u_1,\cdots,u_P)^\top| u_i \in \{0,1\}, \sum_{i=0}^{P}u_i=1\}$.
According to the results of Corollary  \ref{th:mul},
the segmentation model for multiple convex objects can be formulated as
\begin{equation}
	\begin{aligned} \label{eq:modu}
		\min\limits_{u\in\hat{\mathbb{U}}} \int_{\Omega}\sum_{i=0}^P f_i u_i dx +  \lambda\sum_{i = 0}^P  \L_{\sigma}(u_i)
		,~~\textrm{s.t.} ~\mathcal{C}_{r}({u}) \geq 0,
	\end{aligned}
\end{equation}
where
$f_i$ (will be given later) is to measure the similarity of $I$ in $D_i$,  and
$\L_{\sigma}(u_i)= \sqrt{\frac{\pi}{\sigma}} \int_{\Omega}u_i(x) \big[G_\sigma*(1-u_i)\big](x)dx$ is an approximation of the boundary length of $D_i$
\cite{31} with $G_\sigma(x)=\frac{1}{2\pi\sigma^2}\text{exp}\big(-\frac{\|x\|^2}{2\sigma ^2}\big)$  and $\sigma^2 \ll 1$. Here, $\lambda >0$ is a trade-off parameter to balance the two terms.

In real applications, labels on the objects and background are necessary to
alleviate the difficulties of segmentation and improve segmentation accuracy.
Denote the possible labels on $D_i$ by $R_i$ for $i=0,1,\cdots,P$. Define
$\delta_{i}=(\delta_{i0},\cdots,\delta_{iP})^\top$
and $\delta_{ij}(x)=1$ if $x\in R_i$ and $0$ if $x\in R_j(j\neq i)$ for $i,j=0,1,\cdots,P$
and
let $ f:=(f_0,f_1,\cdots,f_P)^\top$.
Then, we can modify and concise (\ref{eq:modu}) as
\begin{equation}
	\begin{aligned} \label{eq:modr}
		\min\limits_{ u\in\hat{\mathbb{U}}\bigcap \mathbb{C}_1}\langle  f, u\rangle + \lambda\mathcal{L}_\sigma(u),
		~~\textrm{s.t.}~ \mathcal{C}_r(u) \geq 0,
	\end{aligned}
\end{equation}
where $\langle f, u\rangle=\int_{\Omega}\sum_{i=0}^P f_i u_i dx,\mathcal{L}_\sigma(u)=
\sum_{i=0}^P\mathcal{L}_\sigma(u_i)$, and
$\mathbb{C}_1=\{u\in\mathbb{R}^{P+1}|u_i(x)=\delta_i(x),x\in R_i, i=0,1,2,\cdots, P\}$.

As for the similarity function $f_i$, we adopt a probability-based method used in \cite{Jun2013GMM}.
For $x$ in the domain of image $I$, let the probability of $x$ belonging to the region $D_i$ be $p_i(x|I)$. Then, the
similarity function $f_i$ is computed as
\begin{equation}\label{eq:f}
	f_i(x):=-\text{ln}p_i(x|I).
\end{equation}
There are many methods to estimate the probability $p_i(x|I)$ in the literature \cite{rother2004grabcut}.
Here, we employ the one in  \cite{binary2021Luo}, which is
detailed as follows.
According to the subscribed labels $R_i (i=0,1,\cdots,P)$ on the foreground and background,
the probability density function $p_i(x|I)$ is computed by
\begin{equation}\label{eq:pb}
	p_i(x|I)=\frac{\text{exp}(-d_i(x,I(x);R_i))}
	{\sum_{i=0}^{P}\text{exp}(-d_i(x,I(x);R_i))},
\end{equation}
where
\begin{equation}\label{eq:dist}
	d_i(x,I(x);R_i)=\left\{
	\begin{array}{ll}
		\sum_{y\in R_i}\|I(y)-I(x)\|, ~~~~~~~~~~~~~~~~~{i=0},\\[1mm]
		\sum_{y\in R_i}\big[\|I(y)-I(x)\| + \omega \|x-y\|^2\big],  {i=1,\ldots,P},
	\end{array} \right.
\end{equation}
in which $\omega>0$ is a given parameter.

\subsection{Convex hull computation of given set}
The convex hull of a set of points is the smallest convex set containing  the given points, which is one of the fundamental problems in computational geometry and computer vision.
Convex hull issues arise from various areas, such as data clustering \cite{Fuzzy_Systems}, robot motion planning \cite{H781966}, collision detection \cite{T8059847}, image segmentation \cite{condat2017convex}, and disease diagnosis \cite{zhang2009convex}.
There are a lot of methods to compute the exact convex hull of a given set, e.g.
quickhull method in \cite{barber1996quickhull, 1977ConvexHulls}.
However, few algorithms can tackle the case containing noises or outliers.
Variational method is one of the ways for this issue \cite{li2021new}.  Using the characterization method in Theorem \ref{th:single}, we propose a model to compute the convex hull of a given set $\mathbb{S}$. This method not only can compute the convex hull exactly but also can be extended to compute the convex hull of $\mathbb{S}$ with outliers.

Assume $\mathbb{S}$ is the given set. The exact convex hull of $\mathbb{S}$ is to seek a minimal convex region including $\mathbb{S}$. If the given set is noise free, the convex hull can be obtained by solving
\begin{equation}\label{eq:mod_convexhull}
	\begin{aligned}
		\min\limits_{u\in\hat{\mathbb{U}}\bigcap \mathbb{C}_2} \langle b,{u}\rangle,~~
		\textrm{s.t.}~ \mathcal{C}_{r}(u) \geq 0,
	\end{aligned}
\end{equation}
where $b(x)=(0,\mathbb{I}_\Omega(x))^{\top}$,
$\mathbb{C}_2=\{u=(u_0,u_1)^\top|u_1(x)=\mathbb{I}_\mathbb{S}(x),x\in \mathbb{S}\}$ guarantees the finding region includes the given set, and the objective function denotes the area $\int_\Omega u_1(x)dx$ of the finding region $\{x|u_1(x)=1\}$.
The constraint set $\mathbb{C}_2$ is to make the yielding convex set include the given set $\mathbb{S}$.

If the given set $\mathbb{S}$ is polluted by noise, we should compute a convex set to approximate the real convex hull, i.e. the convex set
is allowed to exclude some points in $\mathbb{S}$ that may be noises.
In order to exclude the noises in $\mathbb{S}$, the model (\ref{eq:mod_convexhull}) is modified as
\begin{equation}\label{eq:convexhull_noise}
	\min_{u\in\hat{\mathbb{U}}}~\langle{b},u\rangle + \gamma\int_{\Omega} (u_1-1)^2\mathbb{I}_{\mathbb{S}} dx + \lambda \mathcal{L}{_\sigma(u)},~~
	\textrm{s.t.}~\mathcal{C}_{r}(u)\geq0,
\end{equation}
where the inclusion constraint in (\ref{eq:mod_convexhull}) is moved in the objective function as a penalty, a boundary length regularization term $\mathcal{L}_\sigma(u)$ is introduced to eliminate outliers.
Let $\hat{b}=(0,\gamma\mathbb{I}_\mathbb{S})$. 
By the idempotence of binary function, the model (\ref{eq:convexhull_noise}) can be simplified as
\begin{equation}\label{eq:convexhull_noisec}
	\min_{u\in\hat{\mathbb{U}}}~\langle b-\hat{b},u\rangle +\lambda \mathcal{L}{_\sigma(u)}, ~\textrm{s.t.}~\mathcal{C}_{ r}({{u}}) \geq 0
\end{equation}.
\section{Numerical algorithm}\label{sec3}
This section is devoted to the numerical algorithms for the models of image segmentation (\ref{eq:modr}) and convex hull computation with (\ref{eq:convexhull_noisec}) and without (\ref{eq:mod_convexhull}) noise.
Firstly, we unify the concerned models as a general form
\begin{equation}
	\begin{aligned} \label{eq:mod_all}
		\min\limits_{u\in\hat{\mathbb{U}}\bigcap\mathbb{C}} ~\langle g, u\rangle + \lambda\sqrt{\frac{\pi}{\sigma}} {\langle} u,  G_\sigma *(1- u) {\rangle},~~
		\textrm{s.t.} ~ \mathcal{C}_{r}(u) \geq 0.
	\end{aligned}
\end{equation}
By choosing $\mathbb{C}=\mathbb{C}_1$ (resp. $ \mathbb{C}_2$ and $\mathbb{R}^{P+1}$) and $g=f$ (resp. ${b}$ and $b-\hat{b}$),
the model (\ref{eq:mod_all}) can cover (\ref{eq:modr}) (resp.  (\ref{eq:mod_convexhull}) and (\ref{eq:convexhull_noisec})).
Therefore, we need to present the algorithms for (\ref{eq:mod_all}) only.

\subsection{Linearization of (\ref{eq:mod_all})}
Let ${u}^k$ be an estimation of the minimizer to (\ref{eq:mod_all}).
By the first-order Taylor expansion  at ${u}^k$,
the boundary length term and the
convexity constraint term in
(\ref{eq:mod_all}) can be approximated by
\begin{align}
	\mathcal{L}_\sigma(u)&\cong
	\sqrt{\frac{\pi}{\sigma}}\langle {u}, G_\sigma*(1 - 2{u}^k) \rangle + \sqrt{\frac{\pi}{\sigma}}\langle {u}^k, G_\sigma* {u}^k \rangle,\label{eq:lin_length}\notag\\
	\mathcal{C}_{r}({{u}})
	&\cong {{u}}^k\odot  b_{r}*{u}^k + 2({{u}}-{u}^k)\odot b_{r}*{{u}}^k   - 1\odot  b_{r}*{{u}} - \frac{1}{2}{{u}} + \frac{1}{2}\notag\\
	&={A}_r^k u + \frac{1}{2}  - {{u}}^k\odot  b_{r}*{{u}}^k,\notag
\end{align}
where ${A}^k_r u:=2{ u}\odot b_r*{{u}}^k - \frac{1}{2}{u} - b_r*{u}$.
By ignoring the constant term $\sqrt{\frac{\pi}{\sigma}}\langle {u}^k, G_\sigma* {u}^k \rangle$, the model (\ref{eq:mod_all}) can be approximated by
\begin{equation}
	\begin{aligned} \label{eq:modL}
		\min\limits_{u\in\hat{\mathbb{U}}\bigcap \mathbb{C}}~\langle g,u\rangle +   {\lambda}\sqrt{\frac{\pi}{\sigma}} \big{\langle} {u}, G_\sigma*(1-2{u}^k) \big{\rangle},~~
		\text{s.t.} ~A_r^k {{u}} \geq c_r^k,
	\end{aligned}
\end{equation}
where
$c_r^k:= {u}^k\odot b_r*{u}^k - \frac{1}{2} $. In order to minimize (\ref{eq:modL}) efficiently, we only choose a few radial functions $b_{r_e}(e=1,2\cdots,E)$ for
the constraint in  (\ref{eq:modL}), the details and effectiveness of which will be discussed in the numerical experiments section. Let
$
{A}_k =[
\begin{matrix}
	{A}_{r_1}^k, {A}_{r_2}^k, \cdots, {A}_{r_E}^k
\end{matrix}
]^\top,
~~
{c}_k = [
\begin{matrix}
	c_{r_1}^k, c_{r_2}^k, \cdots,	c_{r_E}^k
\end{matrix}
]^{\top}.
$
Relaxing $\hat{\mathbb{U}}$ as
${\mathbb{U}}=\{u\in\mathbb{R}^{P+1}|u_i\in [0,1],\sum_{i=0}^Pu_i=1\}$, which is commonly used in the literature,
we can reformulate (\ref{eq:modL}) as
\begin{equation}\label{eq:modLL}
	\begin{aligned}
		\min\limits_{u\in \mathbb{U}\bigcap \mathbb{C}, v}~ \underbrace{
			\langle u, g+\lambda\sqrt{\frac{\pi}{\sigma}} G*(1-2{u}^k)\rangle}\limits_{F(u;{u}^k)}
		+\underbrace{\delta_{\mathbb{R}^{E(P+1)}_+}(v)}\limits_{G(v)}
		~\text{s.t.}~ A_ku -v= c_k,
	\end{aligned}
\end{equation}
where $v$ is called a remaining variable, $\delta_{\mathbb{R}^{E(P+1)}_+}(v)$ is
the characterization function of $\mathbb{R}^{E(P+1)}_+$, that is
$\delta_{\mathbb{R}^{E(P+1)}_+}(v)=0$ for $v\in {\mathbb{R}^{E(P+1)}_+}(x)$ and $+\infty$ otherwise, with $\mathbb{R}^{E(P+1)}_+$ denoting the first quadrant.
The minimization problem (\ref{eq:modLL}) is a convex optimization issue, which can be solved by efficient algorithms, e.g. the proximal ADMM.

\subsection{The proximal ADMM for (\ref{eq:modLL})}
This section is devoted to the proximal ADMM for  (\ref{eq:modLL}).
Using the proximal ADMM reviewed in (\ref{fadmm}),
we can obtain the following iteration procedure for (\ref{eq:modLL})
\begin{equation}\label{eq:subproblem}
	\begin{aligned}
		\left\{
		\begin{array}{l}
			u^{k,j+1}=\argmin\limits_{u\in \mathbb{U}\bigcap \mathbb{C}} F(u;{u}^k)+\langle  z^{k,j}, A_ku\rangle+\frac{\mu}{2}\lVert  A_ku -  v^{k,j} -  c\lVert_2^2
			+ \frac{1}{2}\lVert u-u^{k,j}\lVert_{S_k}^2\\
			v^{k,j+1}=\argmin\limits_{v}G(v) -  \langle z^{k,j}, v \rangle + \frac{{\mu}}{2} \lVert A_ku^{k,j+1} -v - c \lVert ^2 + \frac{1}{2} \lVert v - v^{k,j} \lVert_{T_k}^2\\
			z^{k,j+1}= z^{k,j}+\tau\mu(  A_k u^{k,j+1} -  v^{k,j+1} -  c),
		\end{array}
		\right.\notag
	\end{aligned}
\end{equation}
where $\tau\in (0, (1+\sqrt{5})/2),\mu>0$, and $S_k$ and $T_k$ are self-adjoint positive semi-definite matrices defined subsequently.
We can obtain closed-form solution by
choosing proper $S_k$ and $T_k$.
Firstly, for fixing $v^{k,j}$ and $z^{k,j}$,
let $S_k:=\alpha_k I-\mu A_k^\top A_k$,  where $\alpha_k>0$ such that $S_k$ is positive and semi-definite.
Then, the subproblem of $u$ update in (\ref{eq:subproblem}) can be simplified
\begin{align*}
	u^{k,j+1}
	=&\argmin_{u} \ \delta_{\mathbb{U}\bigcap \mathbb{C}}(u)+\frac{\alpha_k}{2}\lVert u- \xi^{k,j} \lVert_2^2=\Pi_{\mathbb{U}\bigcap \mathbb{C}}( \xi^{k,j}),\notag
\end{align*}
where
$\xi^{k,j} := \frac{1}{\alpha_k} \big(\mu A_ku^{k} +(\alpha_k I -\mu A_k^\top A_k)u^{k,j} -  g- \lambda\sqrt{\frac{\pi}{\sigma}} G*(1-2{u}^k) \big).$
The projection can be done elementwise, and we have
\begin{align}
	u^{k,j+1}(x)=&\left\{\begin{array}{ll}
		\delta_i(x), x\in R_i,i=0,1,\cdots,P,\\
		\text{Proj}_{\mathbb{U}}(\xi^{k,j}(x)), \text{otherwise}.
	\end{array}\right.
\end{align}
For the projection on $\mathbb{U}$, we need to seek $t^\ast$ such that the summation of $\max\{\xi^{k,j}-t^\ast,0\}$  is $1$
\cite[Proposition 28.26]{Combettes2017Convex}.
Here, we adopt the method in \cite{2022Projections}  to determine $t^\ast$ by sorting the given vector.

Secondly, by choosing $T_k:=0$,
the update of $v$ has the following form
\begin{equation}
	\begin{aligned}
		v^{k,j+1}
		=&	\argmin_{v}\delta_{\mathbb{R}^{E(P+1)}_+}(v) + \frac{\mu}{2}\lVert  v -  A_k u^{k,j+1} +  c - { z^{k,j}}/\mu\lVert_2^2 \\
		=&\max\{0, A_k {u}^{k,j+1} -  c_k+ { z^{k,j}}/\mu\}.
	\end{aligned}
\end{equation}
Using the latest $u^{k,j+1}$ and $v^{k,j+1}$, the multiplier $z$ is then updated immediately.
The convergence of the iteration procedure is guaranteed for selections of $\alpha_k, S_k$ and $T_k$,
because it is the semi-proximal ADMM  \cite{PADMM} reviewed in Subsection \ref{review}. One may refer to \cite[Theorem B]{PADMM}  for more details on its proof.
\begin{theorem}\label{assu}
	Let $\{(u^{k,j},v^{k,j},z^{k,j})\}$ be the sequence generated by Algorithm \ref{alg:PADMM} from an initial point $\{(u^{k,0},v^{k,0},z^{k,0})\}$. Assume that $\tau\in(0,(1+\sqrt{5})/2)$ and $\alpha_k$ is chosen such that $S_k$ is positive semi-definite. Then, the sequence $\{(u^{k,j},v^{k,j},z^{k,j})\}$ converges to a accumulation point $\{(\hat{u}^{k}, \hat{v}^{k}, \hat{z}^{k})\}$, and $(\hat{u}^{k}, \hat{v}^{k})$ is the solution of the primal
	problem (\ref{eq:modLL}).
\end{theorem}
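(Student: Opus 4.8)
The plan is to recognize that problem (\ref{eq:modLL}), together with its iteration (\ref{eq:subproblem}), is nothing but a concrete instance of the generic two-block convex program (\ref{th:convergence}) driven by the semi-proximal ADMM (\ref{fadmm}), so that the asserted convergence follows verbatim from \cite[Theorem B]{PADMM}. The reduction is carried out through the identification $h(u)=F(u;{u}^k)+\delta_{\mathbb{U}\bigcap\mathbb{C}}(u)$ and $g(v)=G(v)=\delta_{\mathbb{R}^{E(P+1)}_+}(v)$, with linear operators $A=A_k$ and $B=-I$ and right-hand side $c=c_k$, so that the equality constraint $A_ku-v=c_k$ is precisely $Au+Bv=c$. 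Under this dictionary the three updates in (\ref{eq:subproblem}) coincide exactly with the $x$-, $y$- and $z$-steps of (\ref{fadmm}), with proximal operators $S=S_k$ and $T=T_k$.

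First I would check that $h$ and $g$ are closed proper convex functions. The map $u\mapsto F(u;{u}^k)$ is affine in $u$ (with ${u}^k$ fixed), hence convex and finite everywhere; the relaxed simplex $\mathbb{U}$ is a closed convex polytope and each admissible $\mathbb{C}$ (namely $\mathbb{C}_1$, $\mathbb{C}_2$, or $\mathbb{R}^{P+1}$) is an affine constraint set, so $\mathbb{U}\bigcap\mathbb{C}$ is closed and convex and $\delta_{\mathbb{U}\bigcap\mathbb{C}}$ is closed proper convex; adding the finite affine term $F$ preserves these properties. Likewise $G=\delta_{\mathbb{R}^{E(P+1)}_+}$ is the indicator of a closed convex cone, hence closed proper convex. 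Since $A_k$ and $-I$ are linear and $c_k$ is a fixed vector, all structural hypotheses of (\ref{th:convergence}) hold.

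Next I would verify the algorithmic hypotheses required by \cite[Theorem B]{PADMM}. The step length obeys $\tau\in(0,(1+\sqrt{5})/2)$ and $\mu>0$ by assumption, and the proximal operators must be self-adjoint and positive semi-definite. The choice $T_k=0$ is trivially admissible, while $S_k=\alpha_k I-\mu A_k^\top A_k$ is self-adjoint and, by the standing hypothesis that $\alpha_k$ is taken large enough (e.g. $\alpha_k\geq\mu\lambda_{\max}(A_k^\top A_k)$), positive semi-definite; this is exactly what makes the $u$-subproblem collapse to the projection $\Pi_{\mathbb{U}\bigcap\mathbb{C}}(\xi^{k,j})$ recorded after (\ref{eq:subproblem}). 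With these verifications, \cite[Theorem B]{PADMM} yields convergence of $\{(u^{k,j},v^{k,j},z^{k,j})\}_j$ to a limit $(\hat{u}^{k},\hat{v}^{k},\hat{z}^{k})$ solving the KKT system (\ref{kkt}), whence $(\hat{u}^{k},\hat{v}^{k})$ is optimal for (\ref{eq:modLL}).

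The main obstacle I anticipate is not the routine convexity and semi-definiteness bookkeeping, but securing the solvability requirement of \cite[Theorem B]{PADMM}, namely that (\ref{th:convergence}) admits a point satisfying the KKT system (\ref{kkt}). Because both $h$ and $g$ carry indicators of affinely constrained sets, one must check a constraint qualification guaranteeing the existence of a multiplier $\hat{z}^{k}$; concretely, one verifies that the feasible set of (\ref{eq:modLL}) is nonempty and that a relative-interior condition linking $\ri(\dom h)$, $\ri(\dom g)$ and the range of the constraint holds. This is where I would concentrate the argument, exploiting that $\mathbb{U}\bigcap\mathbb{C}$ is a nonempty compact polytope and that $G$ has full-dimensional domain, so that the qualification—and hence existence of the KKT point—is automatic.
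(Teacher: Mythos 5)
Your proposal is correct and takes essentially the same route as the paper: the paper's entire proof consists of observing that the iteration is an instance of the semi-proximal ADMM reviewed in Subsection \ref{review} and importing convergence from \cite[Theorem B]{PADMM}, which is exactly your reduction with $h(u)=F(u;u^k)+\delta_{\mathbb{U}\bigcap\mathbb{C}}(u)$, $g=G$, $A=A_k$, $B=-I$, $c=c_k$. If anything, your treatment is more careful than the paper's, since you flag and address the solvability assumption of \cite[Theorem B]{PADMM} (existence of a KKT point via feasibility plus a relative-interior constraint qualification), a hypothesis the paper's theorem statement and citation silently omit.
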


In light of above analysis, the proximal ADMM for the linearized problem (\ref{eq:modLL}) can be
summarized as Algorithm \ref{alg:PADMM}.
Because the proximal ADMM is for the linearization sub-problem (\ref{eq:modLL}) of (\ref{eq:mod_all}) only, it is terminated after a few rather than large iterations (for convergence) to save computation cost.
\begin{algorithm}[!t]
	\caption{(Proximal ADMM for problem (\ref{eq:modLL}))}\label{alg:PADMM}
	\begin{itemize}
		\item[Step 0.] Choose positive scalars $\mu,\alpha_k>0$ and $\tau\in(0,(1+\sqrt{5})/2)$ such that $S_k$ is positive and semi-definite. Set $(u^{k,0}, v^{k,0}, z^{k,0})=(u^{k},0,0)$ , $j=0$ and maximal iteration number $J$.
		\item[Step 1.] While $j<J$:
		\begin{itemize}
			\item[-] Compute $\xi^{k,j}$ and $u^{k,j+1}(x) =\Pi_{\mathbb{U}\bigcap\mathbb{C}}(\xi^{k,j}(x)))$;
			\item[-] Compute $v^{k,j+1} = \max\{0, A_k {u}^{k,j+1} -  c+ {z^{k,j}}/\mu\}$;
			\item[-] Compute $z^{k,j+1} = z^{k,j} +\tau\mu(A {u}^{k,j+1}-v^{k,j+1}-c_k)$;
			\item[-] $j=j+1$.
		\end{itemize}
		\item[Step 2.] Output $u^{k,J}$.
	\end{itemize}
\end{algorithm}

\subsection{Algorithm for (\ref{eq:mod_all})} \label{subsection:algorithm}
It is very easy to obtain the algorithm (Algorithm \ref{alg:proposed}) for (\ref{eq:mod_all}) after presenting algorithm \ref{alg:PADMM} for the linearization problem (\ref{eq:modLL}). Here some details are discussed.
\begin{algorithm}[!t]
	\caption{(Algorithm for problem (\ref{eq:mod_all}))}\label{alg:proposed}
	\begin{itemize}
		\item[Step 0.]
		Input some necessary data according to the concerned problems (labels for segmentation and sets for convex hull computation),
		set $\lambda$ and $\sigma$ for the boundary length term, tolerance $\epsilon$, $\theta$ and $r_0$ narrow belt  determination, $\omega$ for computing the probability density function, choose
		the radial functions $b_r$ for convexity constraint, and  let $k=0$.
		\item[Step 1.] Determine $g$ according to the issues,  image segmentation or convex hull computation
		\item[Step 2.] Initialize  $u^0$, and set $err=1>\epsilon$.
		\item[Step 3.] While $err>\epsilon$, do the following steps:
		\begin{itemize}
			
			\item[step 3.1] Update ${u}^{k}$ on the narrow belt $S_\theta$ using the output of Algorithm \ref{alg:PADMM} to obtain $u^{k+1}$;
			\item[step 3.2]  $k:= k+1$;
			\item[step 3.3] Compute $err=\|{u}^{k}-{u}^{k-100}\|/\|{u}^{k-100}\|$  if $\mod(k,100)=0$.
		\end{itemize}
		\item[Step 4.] Output result $u^k$.
		\item[Step 5.] Interaction: Go to step 2 after given additional labels for image segmentation issue if $u^k$ is not desired.
	\end{itemize}
\end{algorithm}

\emph{Initialization methods: }
We adopt different initialization methods for
the segmentation and convex hull issues.
For image segmentation problem (\ref{eq:modr}), the initial estimate $u^0_j$ is set as the convex hull of the labels on the $j$-th foregrounds ($j=1,2,\cdots,P$), and $u^0_0=1-\sum_{j=1}^Pu_j^0$. On the other hand, for the issues of convex hull computation (i.e. $P=1$), the initialization of $u_1^0$ is just the indicator function of the given set, and $u_0^0=1-u_1^0$ in (\ref{eq:mod_convexhull})(\ref{eq:convexhull_noisec}).

\emph{Update in a narrow belt}:
We should note that the proposed convex constraint is essentially a local method, which is only effective near the boundary(ies) of the objects. In order to avoid the influences of the intensities far from the boundary(ies) and improve the stability of the algorithm, we only update $u^k$ in a narrow belt of the boundary(ies): $S_\theta=\{x||u(x)-b_{r_0}*u(x)|<\theta\}$ where $r_0,\theta>0$ are two user-specified parameters.

\emph{Termination criterion:}
As for the termination of Algorithm \ref{alg:proposed},
we stop the iteration  when the relative varying between the estimates at two iterations is smaller than a given tolerance. In order to avoid early stop and improve stability, the relative error are computed every 100 iterations.

\emph{Interactive procedure:}
Obviously, it is very hard to draw proper labels
one shot for complicate images to yield
desired results for segmentation issue.
Therefore, it is necessary sometimes to
subscribe additional labels and run the
code repeatedly if the yielding results are not
desired. So, interactive procedure is combined in Algorithm \ref{alg:proposed}.

\section{Numerical experiments}\label{sec4}
In this section, we conduct a series of experiments
for image segmentation and convex hull computation to verify the effectiveness of the proposed models  and the efficiency of the  algorithms.
All experiments are run on a PC with Inter(R) Core(TM) i7-6700 CPU @ 3.40GHz 3.40GHz and 8.00GB memory using Matlab2016a.

All the parameters in Algorithms \ref{alg:PADMM} and \ref{alg:proposed} and the proposed models are set the following default values if they are not specifically given.
In Algorithm \ref{alg:PADMM}, we select penalty parameter $\mu=1$ and $\alpha_k=1+\mu$ to obtain the self-adjoint positive semi-definite matrix $S_k$, $\tau=1$ for step length, $J=5000$ for maximal iteration number. As for initialization of $u^0$, it has been discussed in Subsection \ref{subsection:algorithm}. In Algorithm \ref{alg:proposed}, for image segmentation problem, we let
$\lambda=2$ in model (\ref{eq:mod_all})  and $\sigma=0.01$ for boundary length penalization, $\theta=0.1$ and $r_0=3$ are used for the determination of the belt $S_{\theta}$, $\omega=0.1$ in  (\ref{eq:dist}) for the probability computation, $\epsilon=10^{-3}$ for the tolerance.
As for the radial function $b_r$, $b_r(x)=0$ if $\|x\|>r$, and $b_r(x)=1/{\pi r^2}$ if $\|x\|\leq r$
is selected in all experiments.
In addition,
only three radial functions are used for $k$-th step with radii being
$\text{mod}(l+1,14),\text{mod}(l+3,14),\text{mod}(l+5,14)$ in Algorithm \ref{alg:proposed}, where
and  $l:=2\text{mod}(\lfloor{k/100}\rfloor,7)$ mod denotes the remainder operation.

Shape-distance between
the output result and the corresponding ground truth,
which is also known as the Hausdorff distance  \cite{li2021new, 2022Luo},
is computed to measure the accuracy.
Let $D^\prime$ be the estimate of a shape $D$ (ground truth) for a given problem. Then, the shape-distance between $D^\prime$ and $D$ is defined as
\begin{equation}\label{disdd}
	\text{S-dist} (D^{'},D):=\frac{\max\left\{\max\limits_{x\in D} \min\limits_{y\in D^\prime}  \|x-y\|_2, \max\limits_{y\in D^\prime} \min\limits_{x\in D} \|y-x\|_2\right\}}{2\sqrt{Area(D)/\pi}}.
\end{equation}

\subsection{Influence of the choices of $r$s}\label{exp:inf}
It is obvious that the choices of radii $r$s in the model (\ref{eq:mod_all}) are very important to keep the convexity of the results.
In order to investigate its influence and provide a guidance for real applications, we do some
experiments on an image with different  choices of $r$s, i.e.  $ \{1, 2, 3, 4, 5\}$,
$ \{1, 4, 7, 10, 13\}$, or  $ \{ 1, 12, 23, 34, 45\}$,
and display the results from left to right
in Fig. \ref{fig:rt}, where some parts of interest are zoomed.
Observing the images in Fig. \ref{fig:rt},
we find that the smaller
$r$s (say $ \{1, 2, 3, 4, 5\} $)
cannot eliminate nonconvexity of smooth boundary if the curvature is near zero.
The larger $r$s (say $ \{1, 12, 23, 34, 45\}$) cannot suppress
small oscillations of boundary.
But the moderate values (say $ \{1, 4, 7, 10, 13\}$) can
overcome these defects, and have the ability to yield convex objects.
Besides, the running time for these three  cases are
about $7s$, $15s$, and $87s$, respectively. This is reasonable because a larger $r$ will result in higher computational cost in the convolution operation.
\begin{figure}[!t]
	\centering
	\includegraphics[width=3cm]{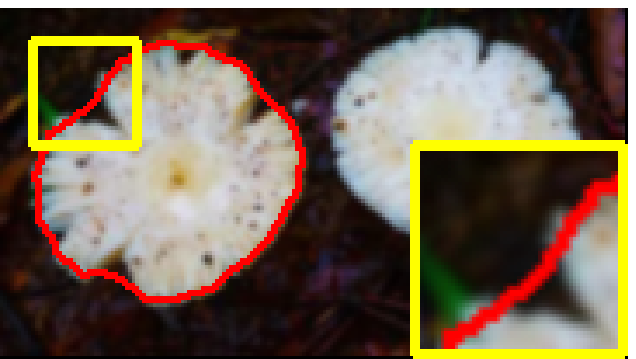}
	\includegraphics[width=3cm]{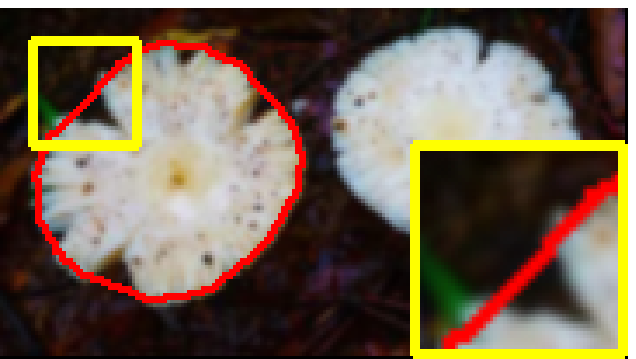}
	\includegraphics[width=3cm]{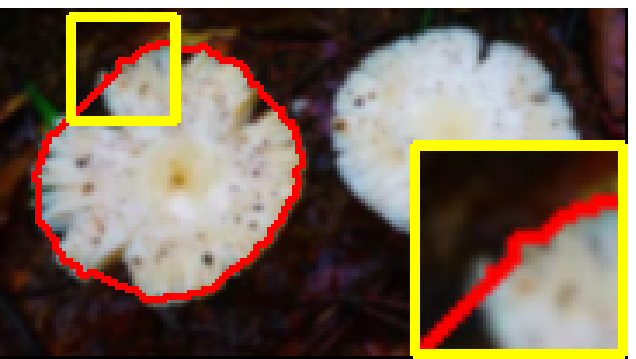}
	\caption{The radius configurations for the results from left to right are $ \{1, 2, 3, 4, 5\}$, $ \{1, 4, 7, 10, 13\}$ and $ \{1, 12, 23, 34, 45\}$.}
	\label{fig:rt}
\end{figure}

Based on the observations above,
we choose the aforementioned radii of radial functions for the experiments in this paper to keep the convexity of results and save computational cost.

\subsection{Image segmentation}\label{exp:seg}
The proposed method is performed on images with single and multiple convex objects directly. Then, our method is applied to ring shape with convex boundary prior using two steps, i.e. extract inner and outer boundary separately. At last, experiments on
complicate images using interactive procedure are demonstrated.

The proposed method is compared with the 1-0-1 method \cite{Gorelick2016convexity}
and level set function (abbreviated as LS) method \cite{2022Luo}.
We compare the proposed method with
1-0-1 method with $5\times5$ and $11\times 11$ stencils and penalty parameter 10, denoted by 101(5) and 101(11), respectively.
One can refer to  \cite{Gorelick2016convexity} for more details.

\subsubsection{Single object segmentation}
In this part, experiments on an image set consisting of $32$ images are conducted.
These 32 images are chosen from the data set at
\url{http://vision.csd.uwo.ca/code/}.
The superiority of the proposed method to the 1-0-1 and LS methods is demonstrated visually and
quantitatively (see Fig. \ref{fig:SegImg} and Tab. \ref{tab:shdist_single}).
\begin{figure*}[!t]
	\centering
	\includegraphics[height=1.4cm, width=1.4cm]{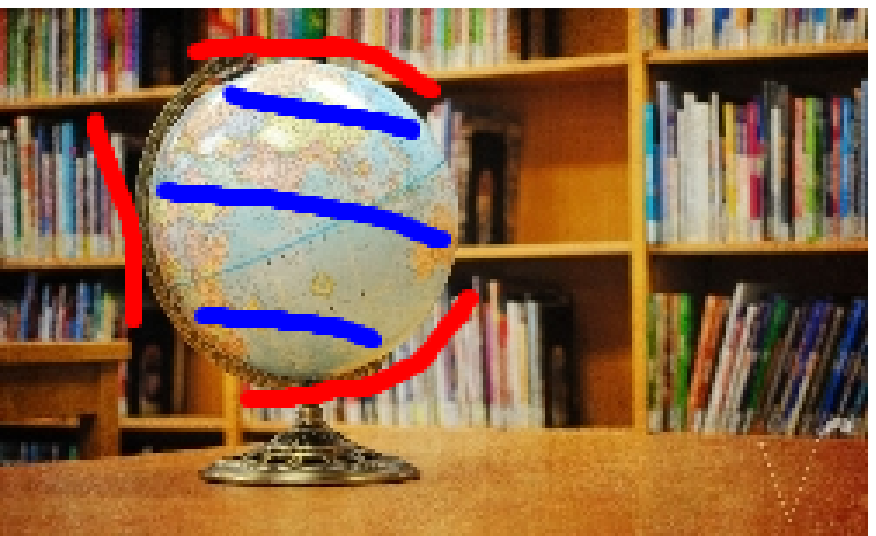}
	\includegraphics[height=1.4cm, width=1.4cm]{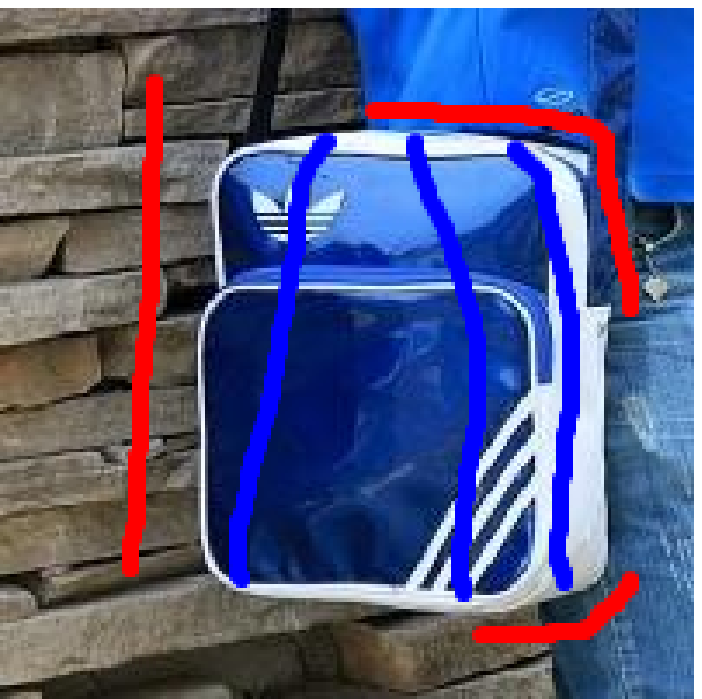}
	\includegraphics[height=1.4cm, width=1.4cm]{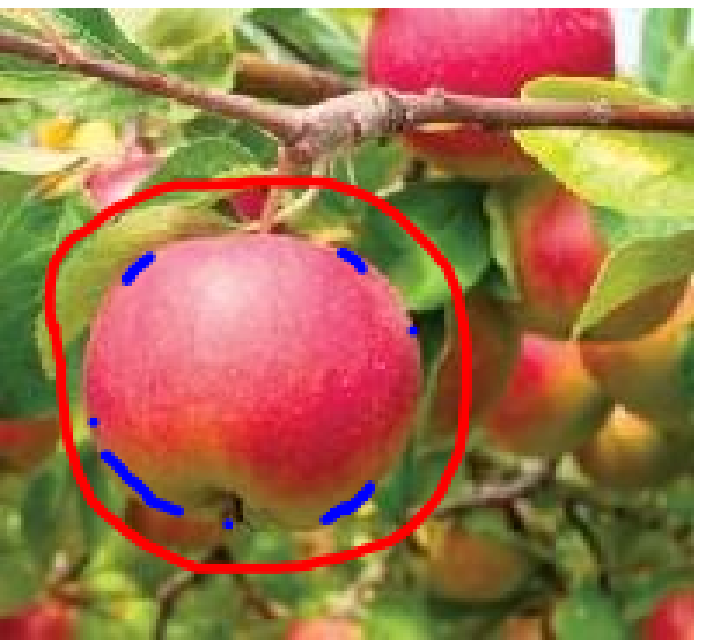}
	\includegraphics[height=1.4cm, width=1.4cm]{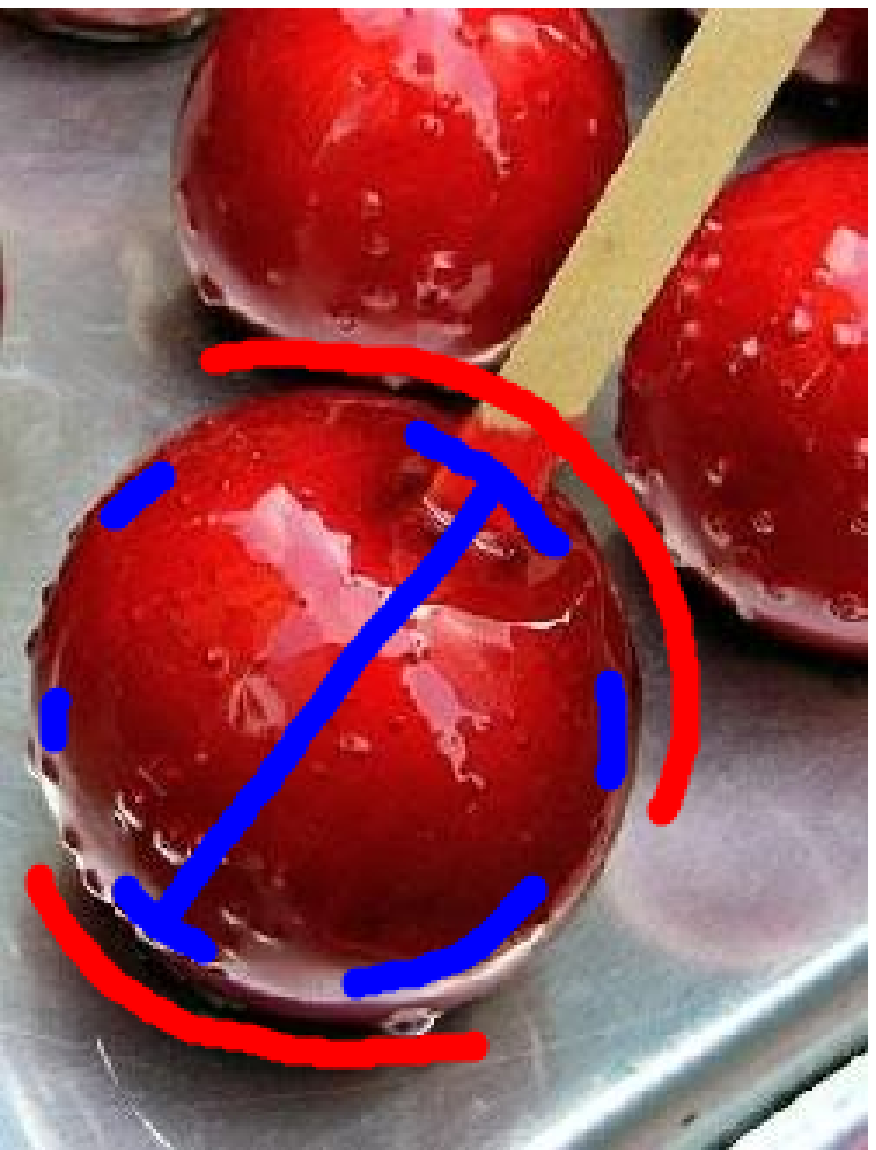}
	\includegraphics[height=1.4cm, width=1.4cm]{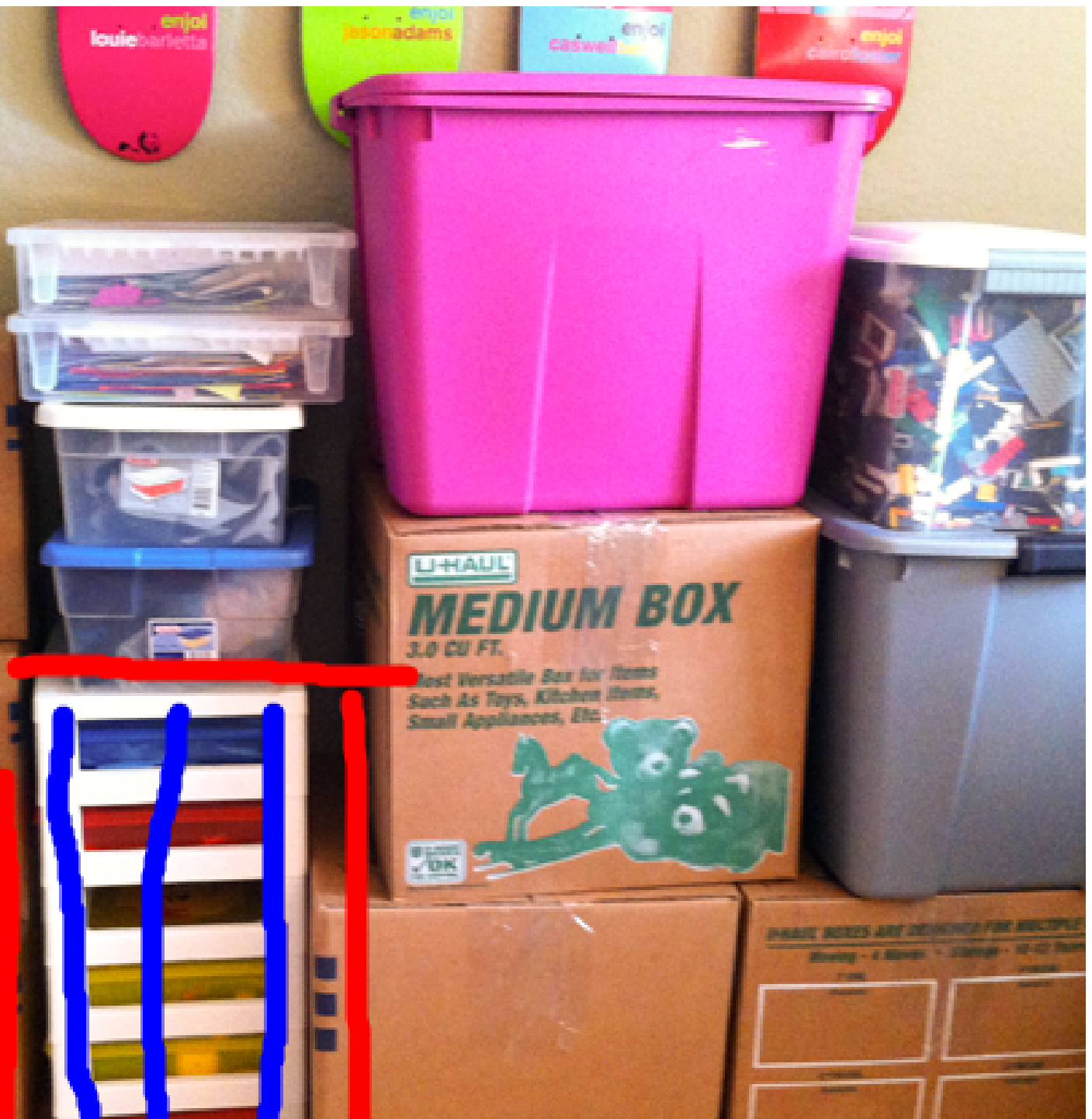}
	\includegraphics[height=1.4cm, width=1.4cm]{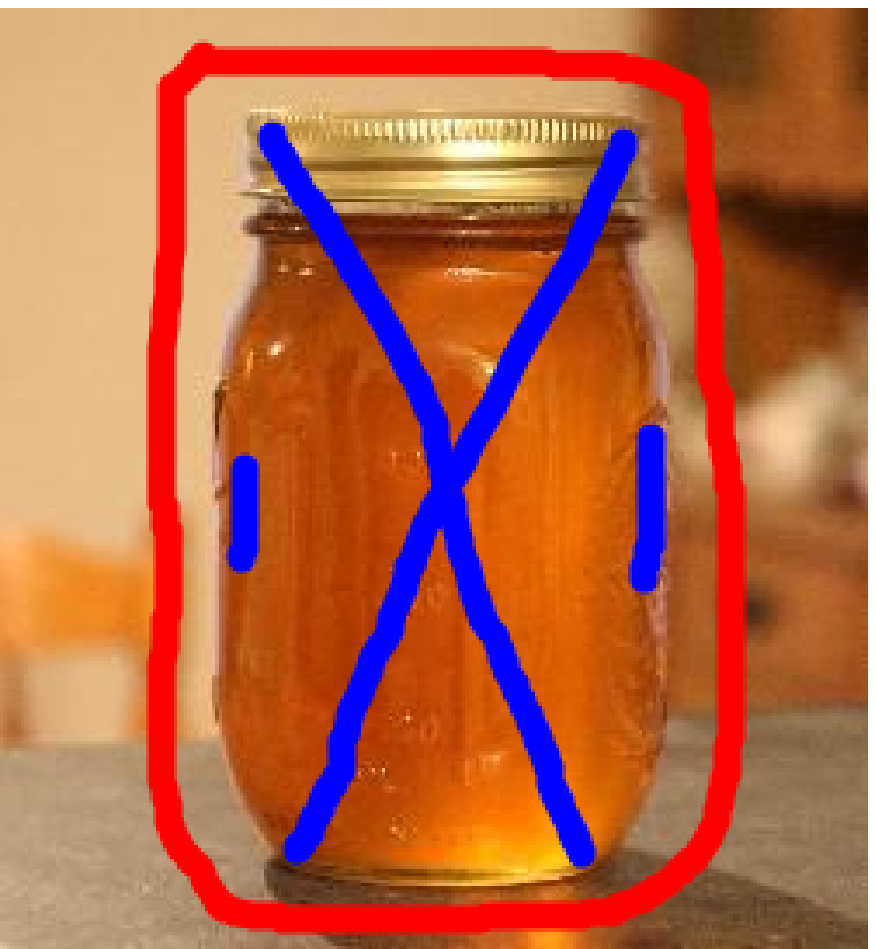}
	\includegraphics[height=1.4cm, width=1.4cm]{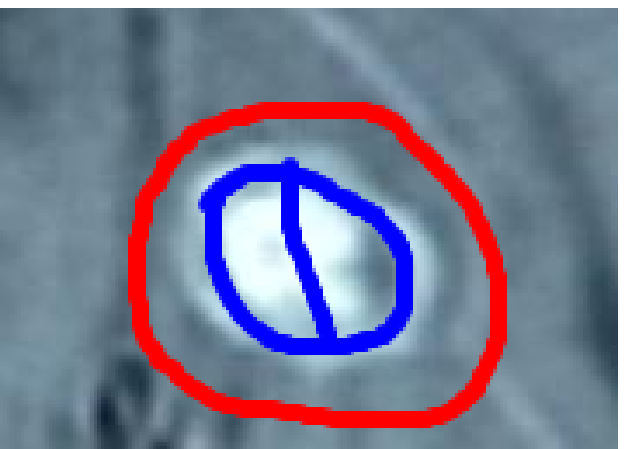}
	\includegraphics[height=1.4cm, width=1.4cm]{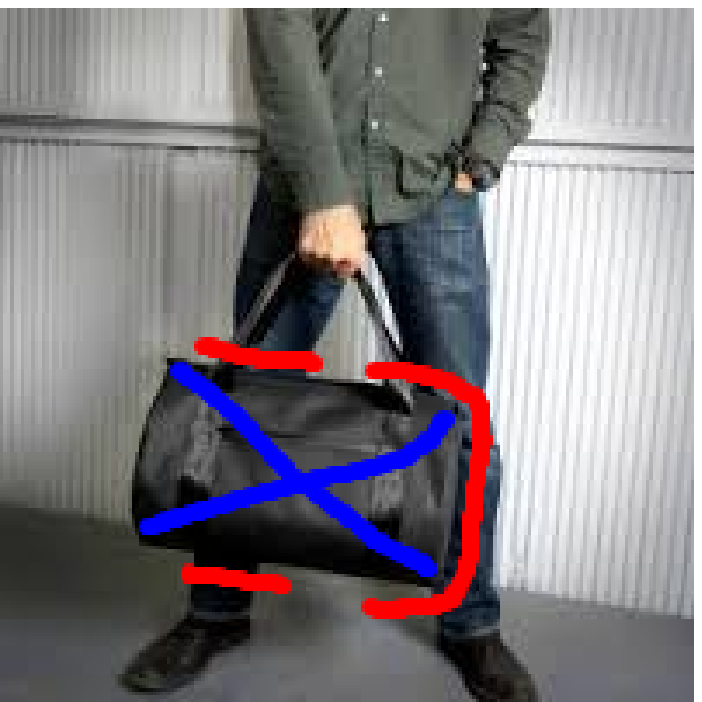}
	\includegraphics[height=1.4cm, width=1.4cm]{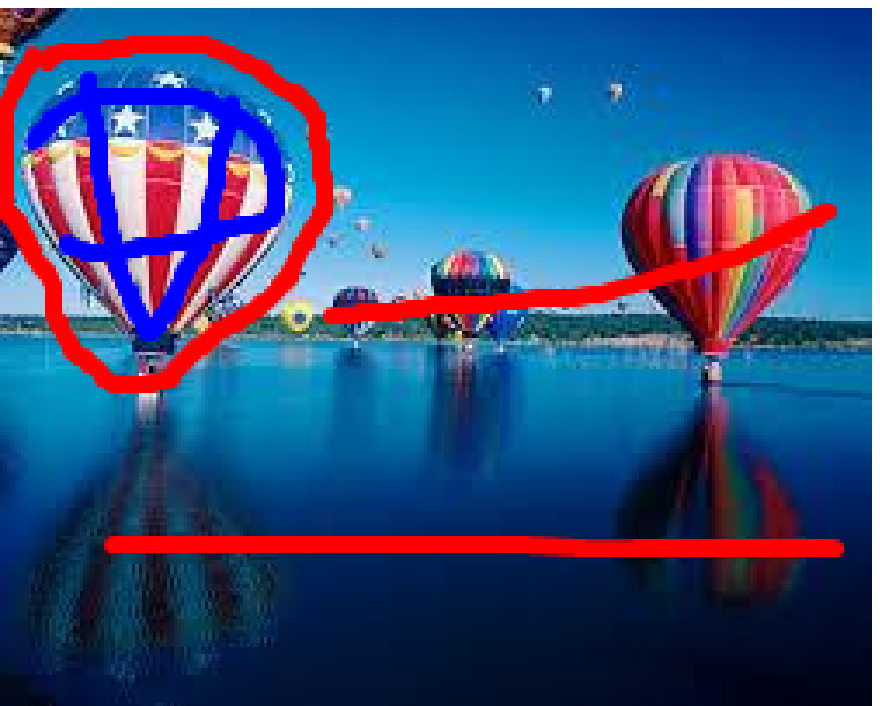}
	\includegraphics[height=1.4cm, width=1.4cm]{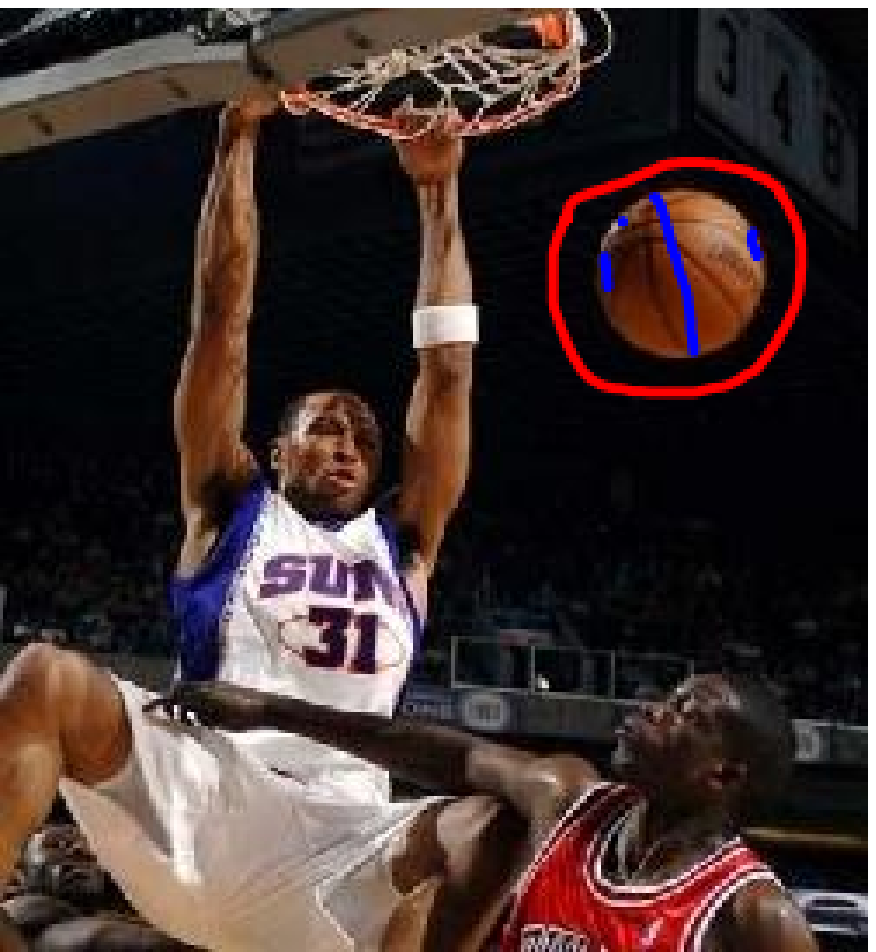}
	\includegraphics[height=1.4cm, width=1.4cm]{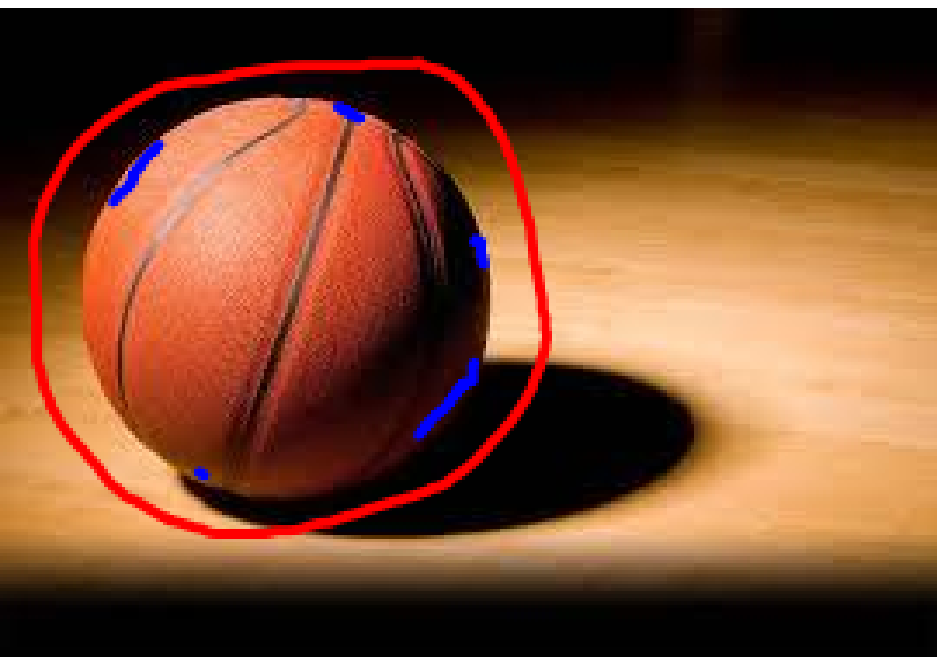}\\
	\includegraphics[height=1.4cm, width=1.4cm]{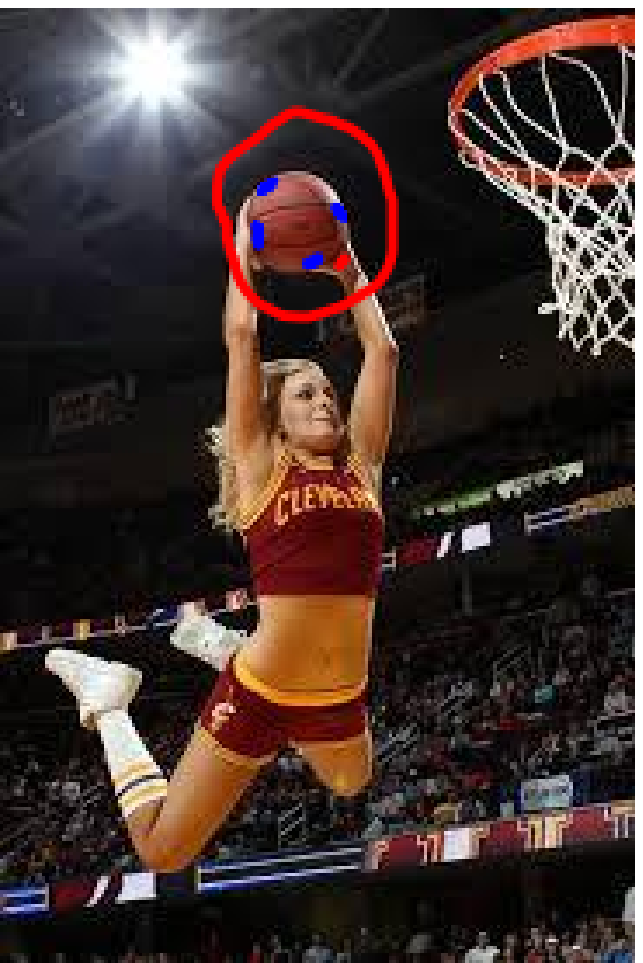}
	\includegraphics[height=1.4cm, width=1.4cm]{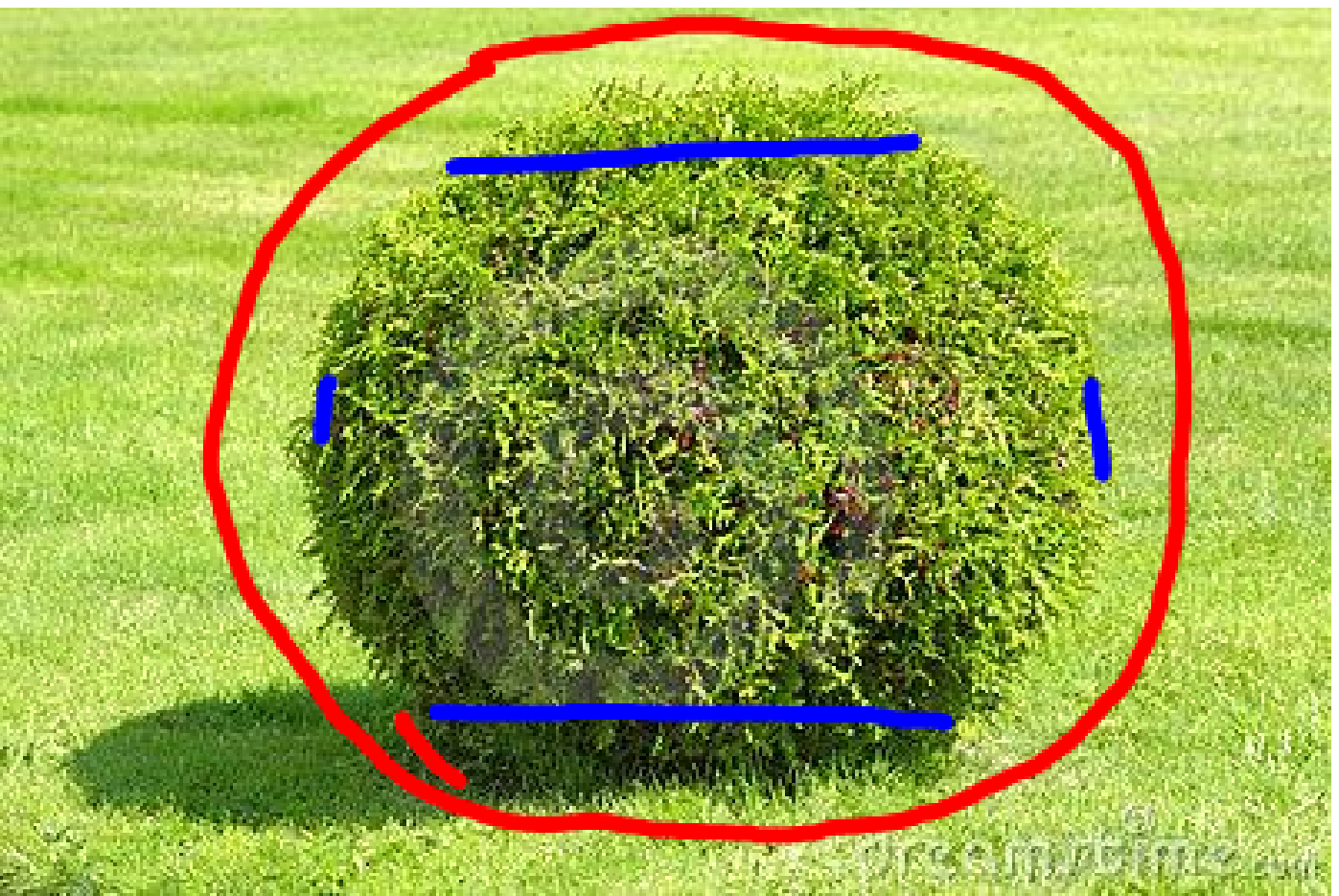}
	\includegraphics[height=1.4cm, width=1.4cm]{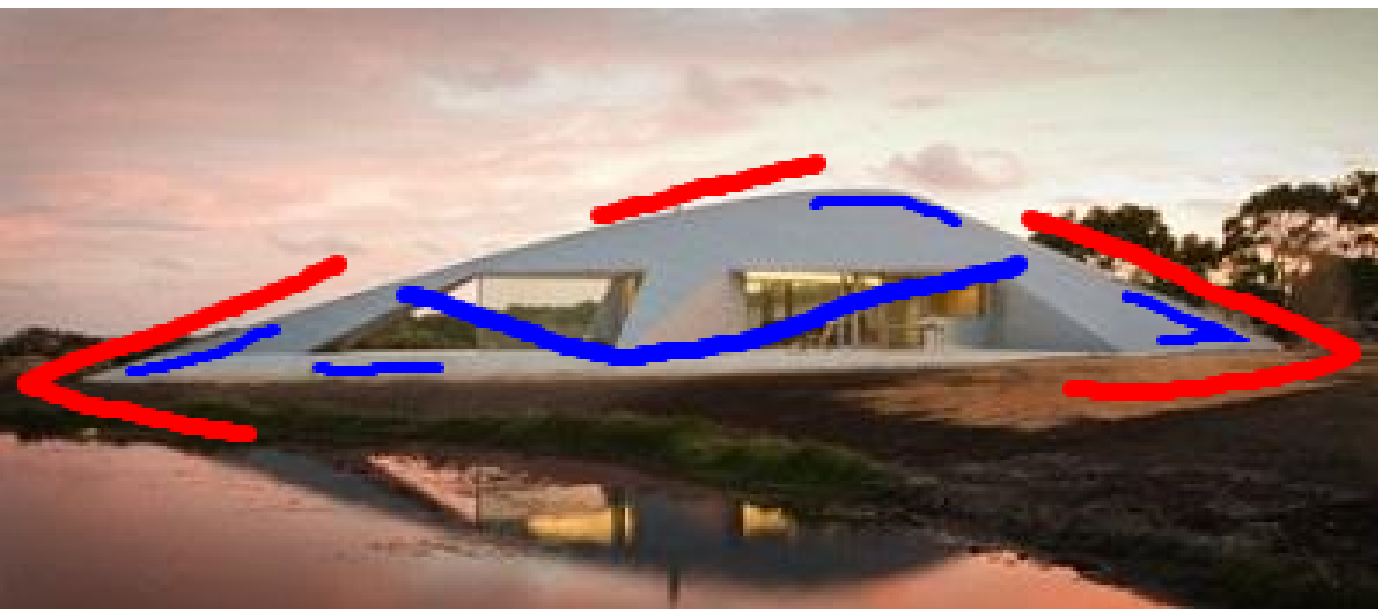}
	\includegraphics[height=1.4cm, width=1.4cm]{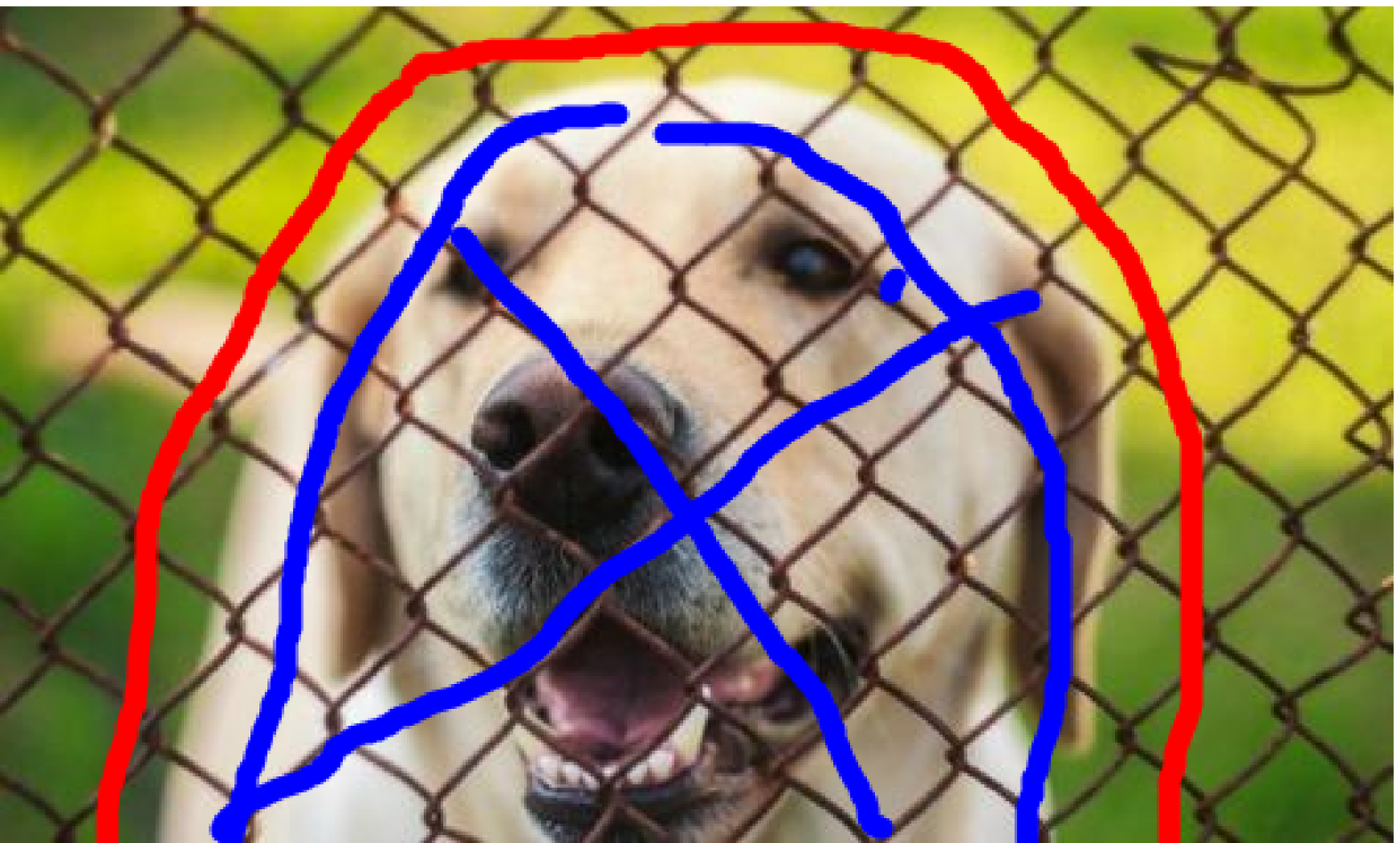}
	\includegraphics[height=1.4cm, width=1.4cm]{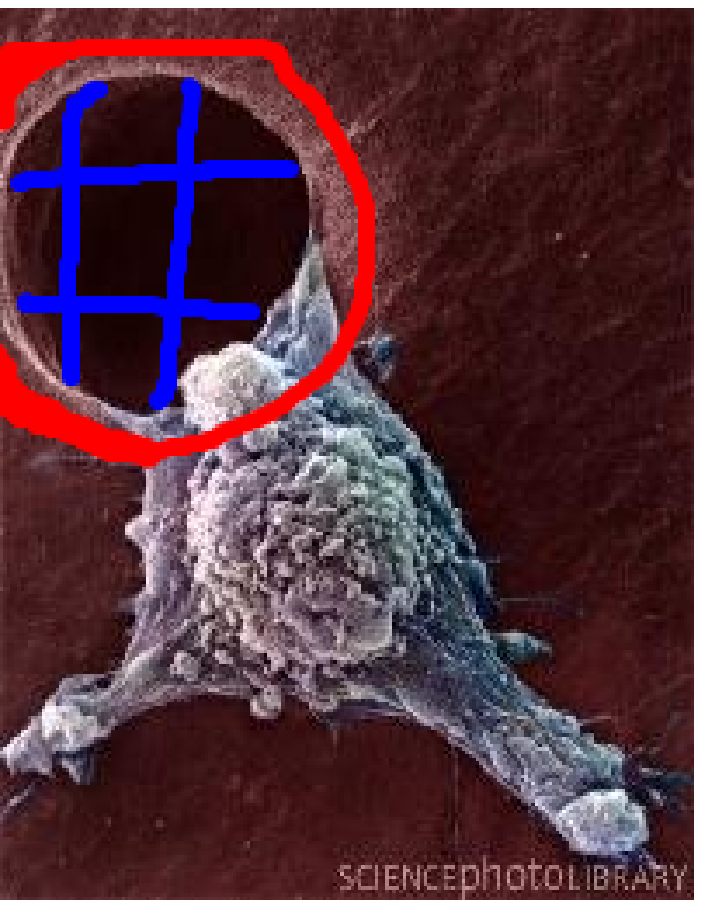}
	\includegraphics[height=1.4cm, width=1.4cm]{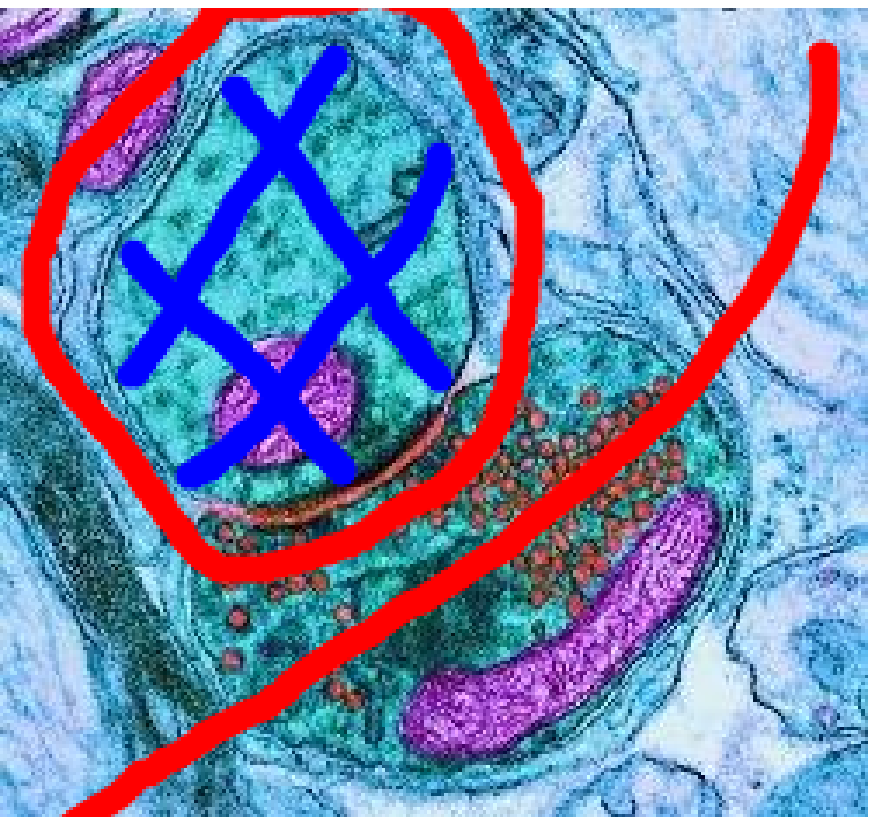}
	\includegraphics[height=1.4cm, width=1.4cm]{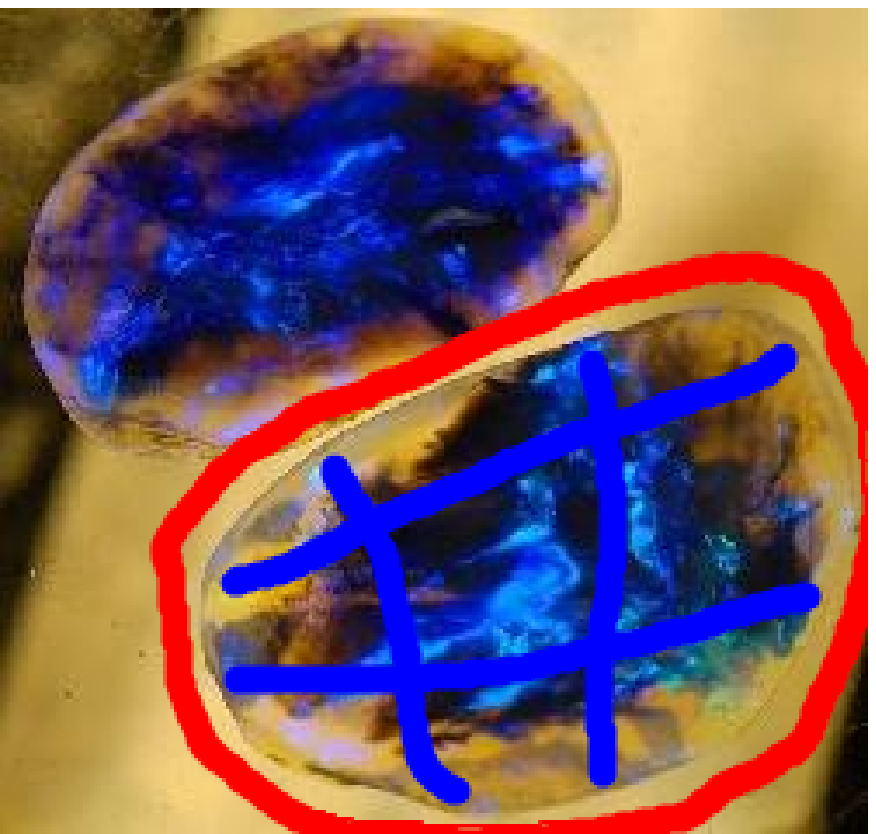}
	\includegraphics[height=1.4cm, width=1.4cm]{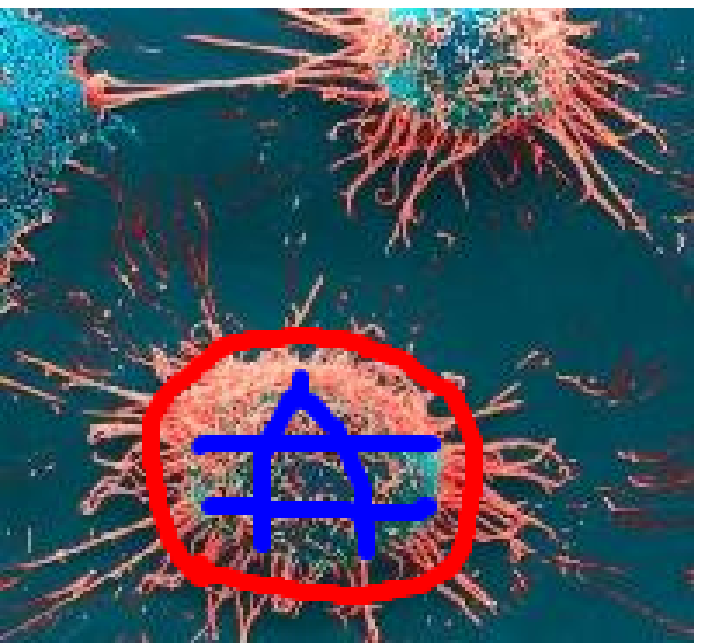}
	\includegraphics[height=1.4cm, width=1.4cm]{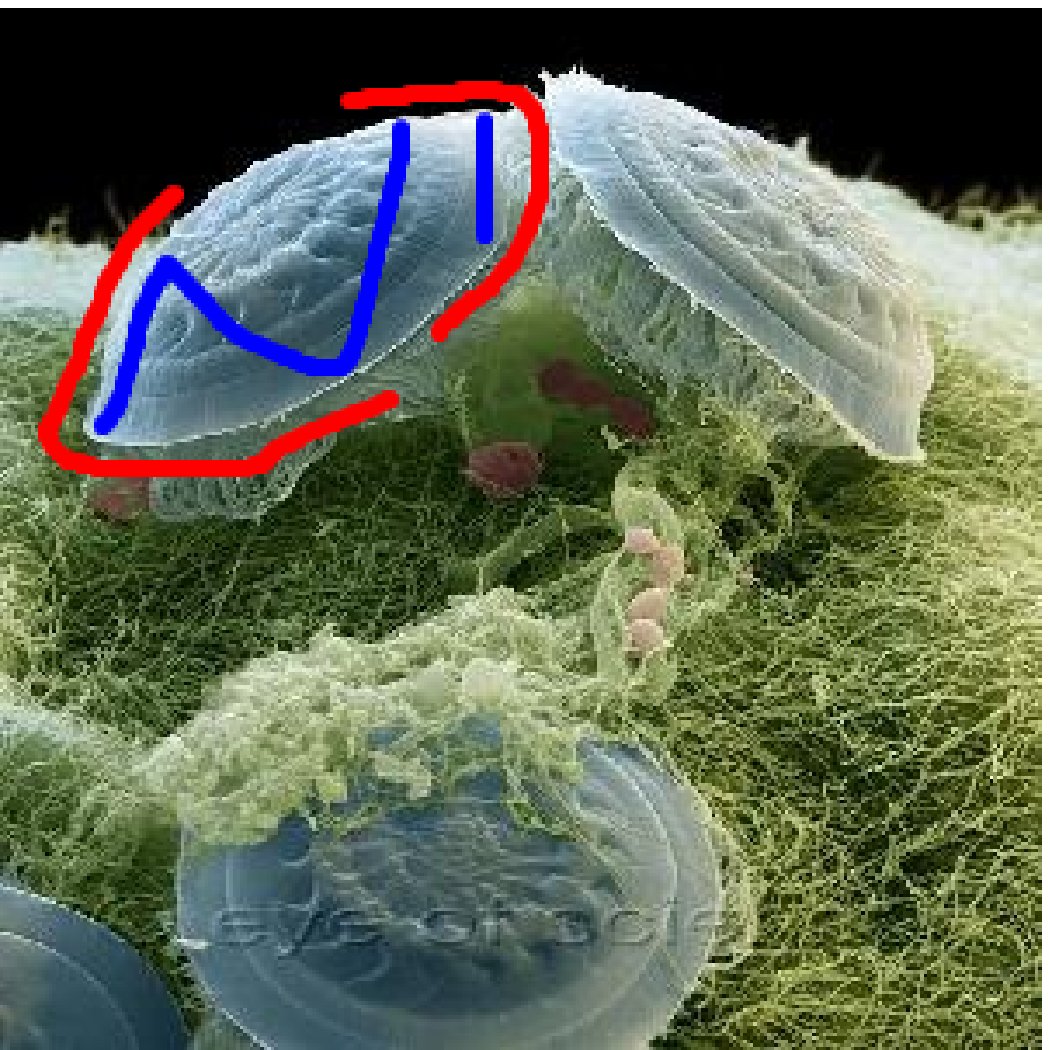}
	\includegraphics[height=1.4cm, width=1.4cm]{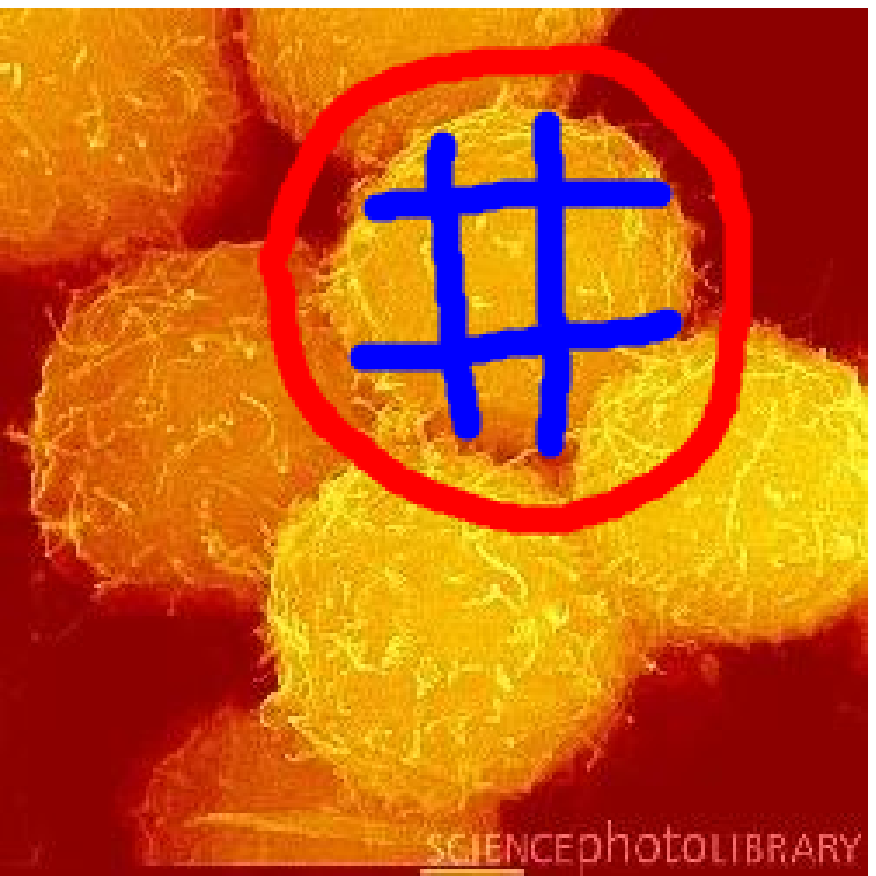}
	\includegraphics[height=1.4cm, width=1.4cm]{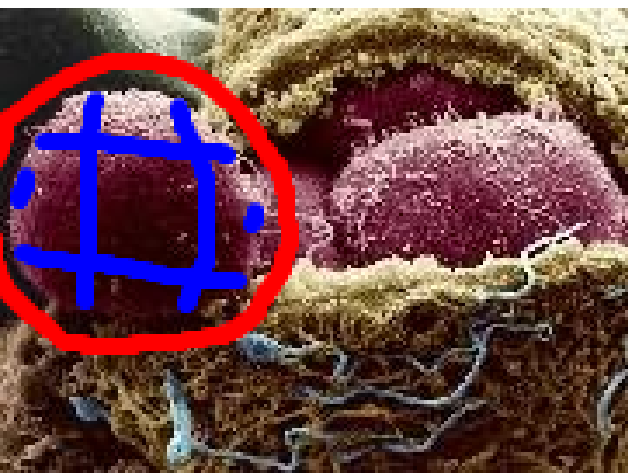}\\
	\includegraphics[height=1.4cm, width=1.4cm]{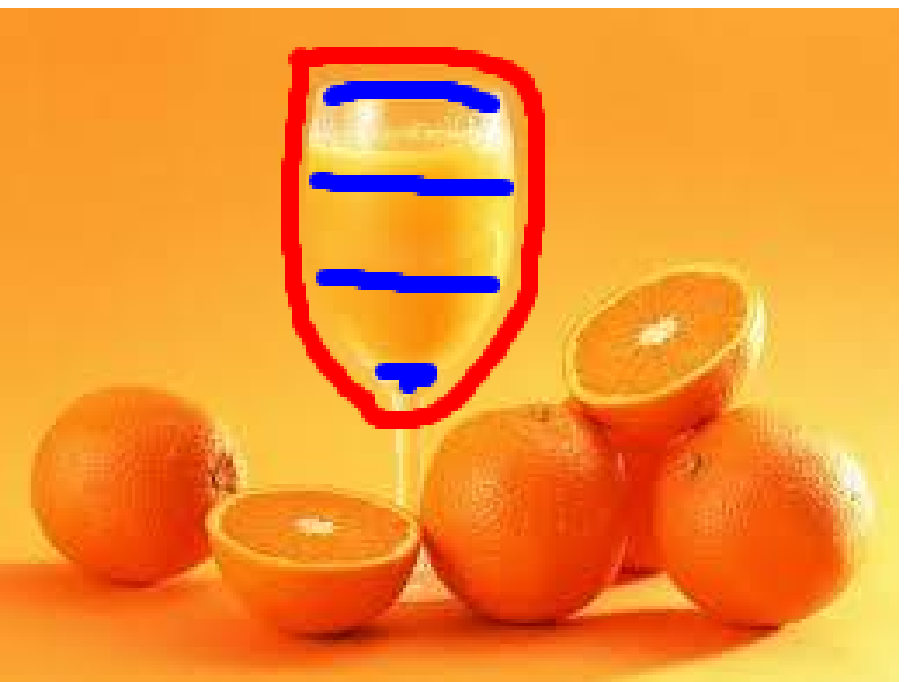}
	\includegraphics[height=1.4cm, width=1.4cm]{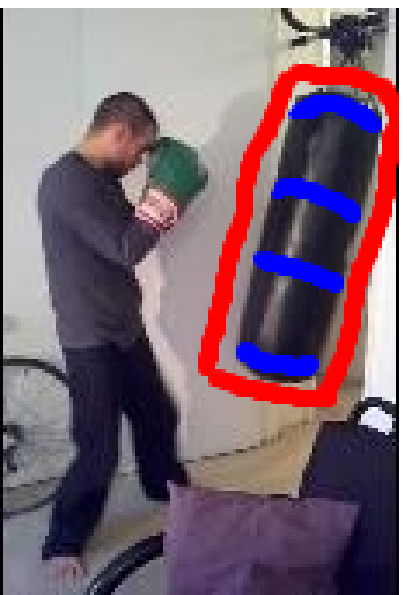}
	\includegraphics[height=1.4cm, width=1.4cm]{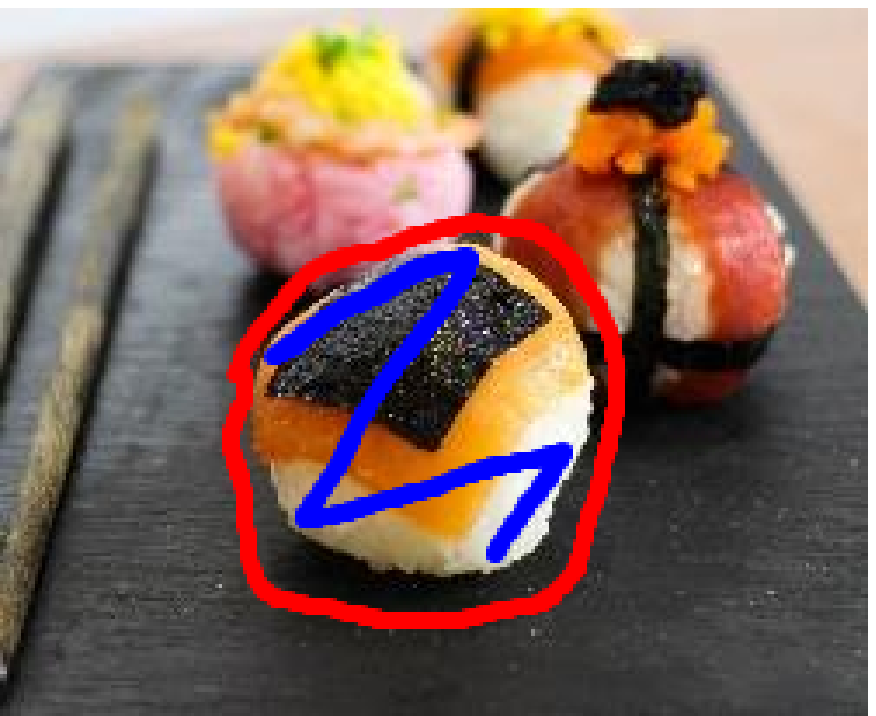}
	\includegraphics[height=1.4cm, width=1.4cm]{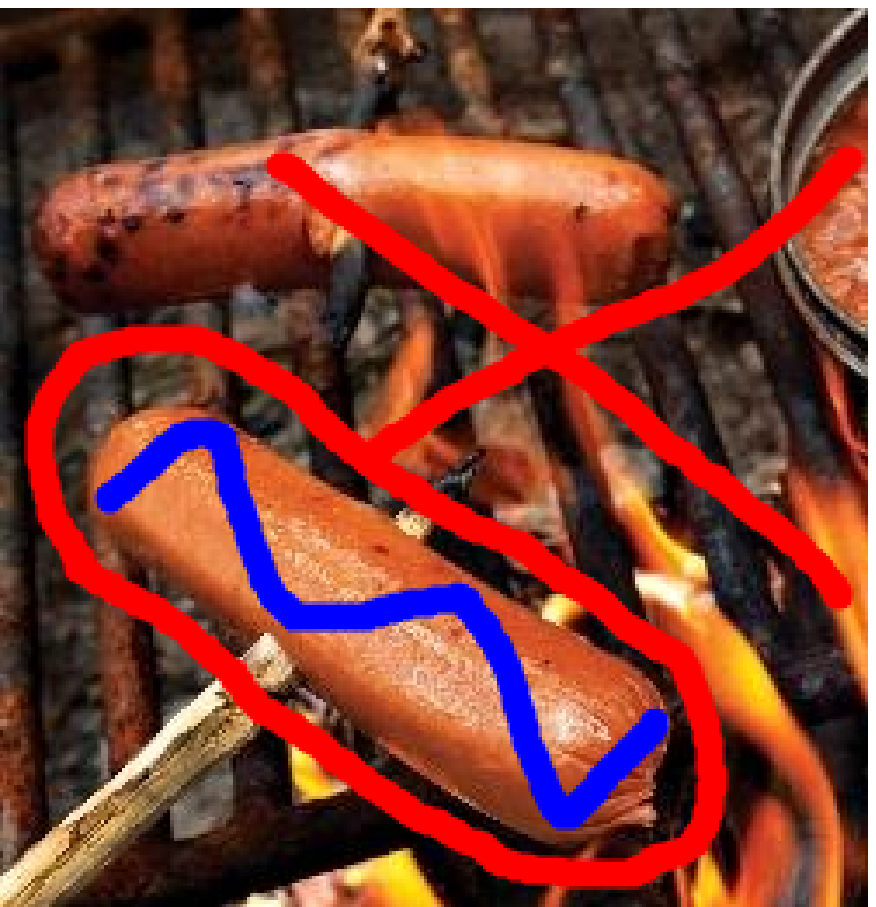}
	\includegraphics[height=1.4cm, width=1.4cm]{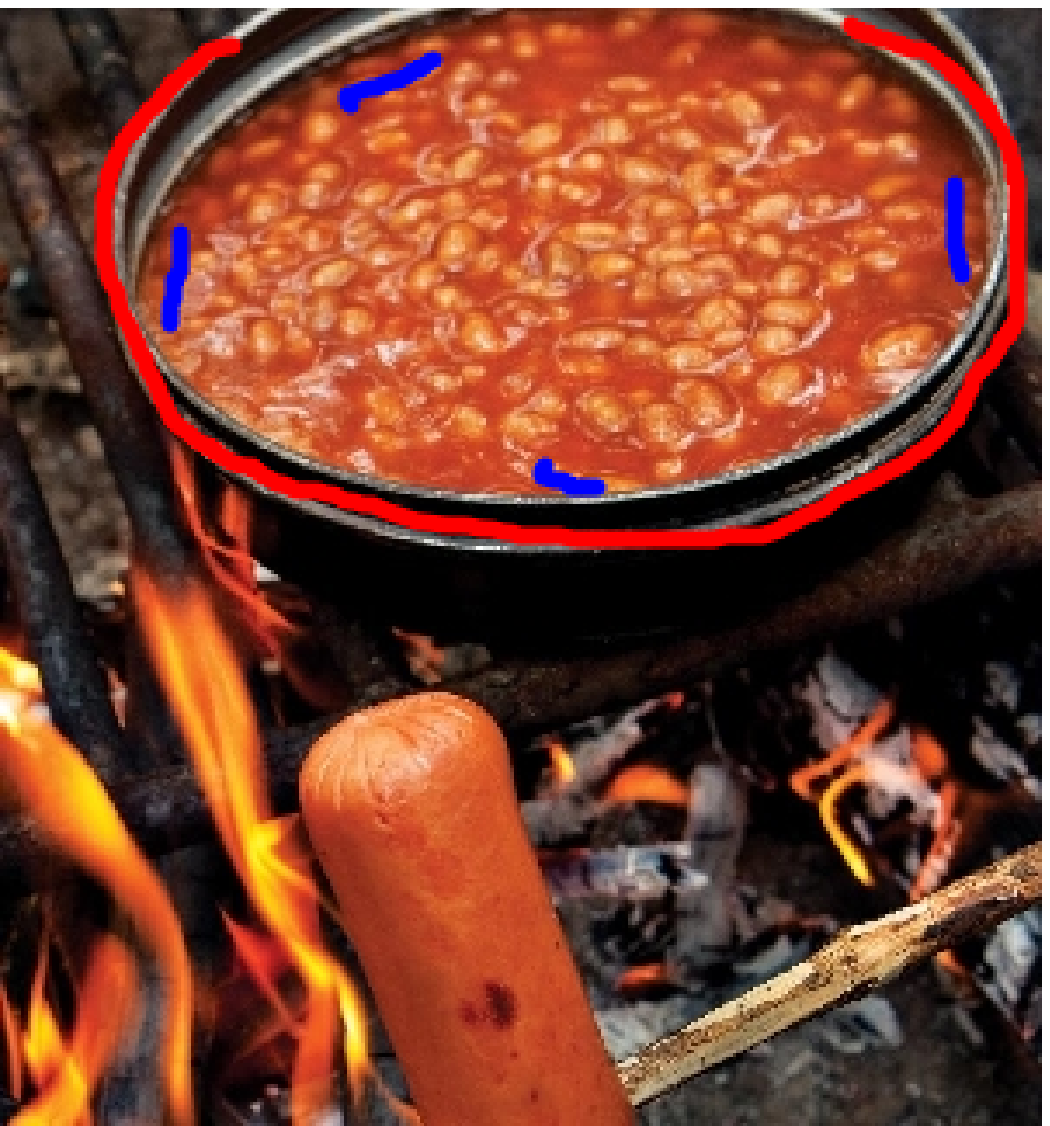}
	\includegraphics[height=1.4cm, width=1.4cm]{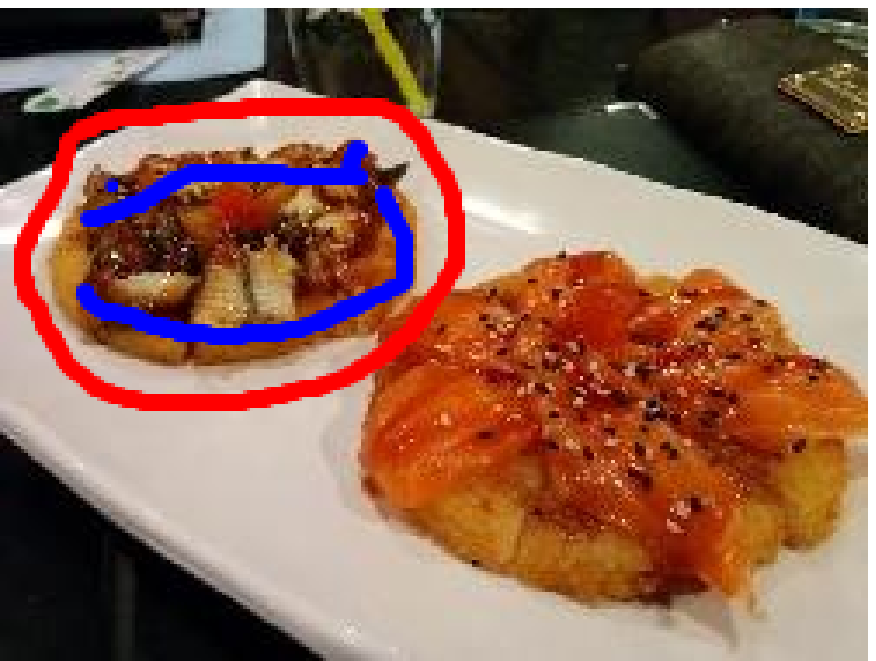}
	\includegraphics[height=1.4cm, width=1.4cm]{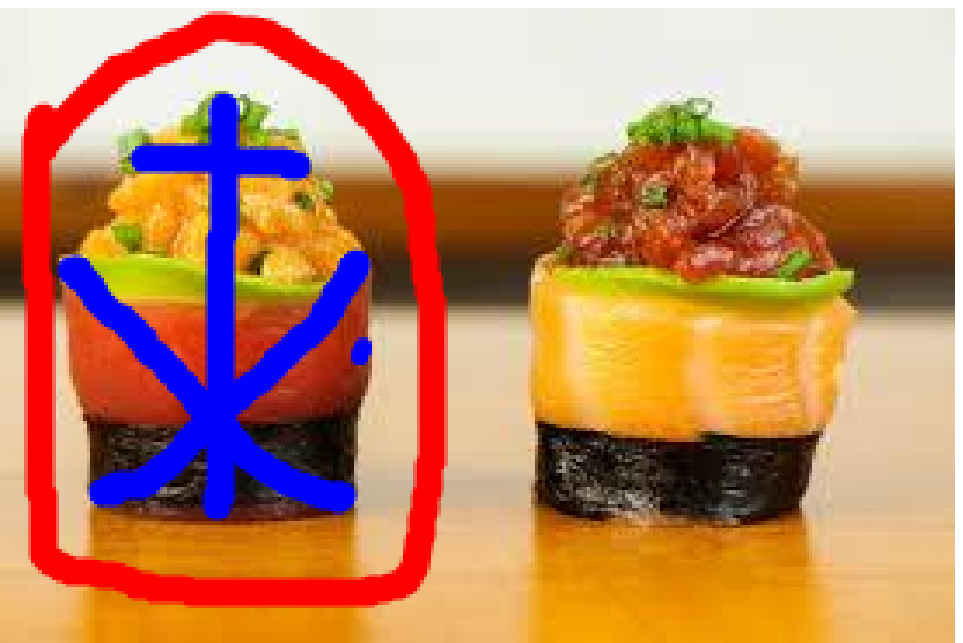}
	\includegraphics[height=1.4cm, width=1.4cm]{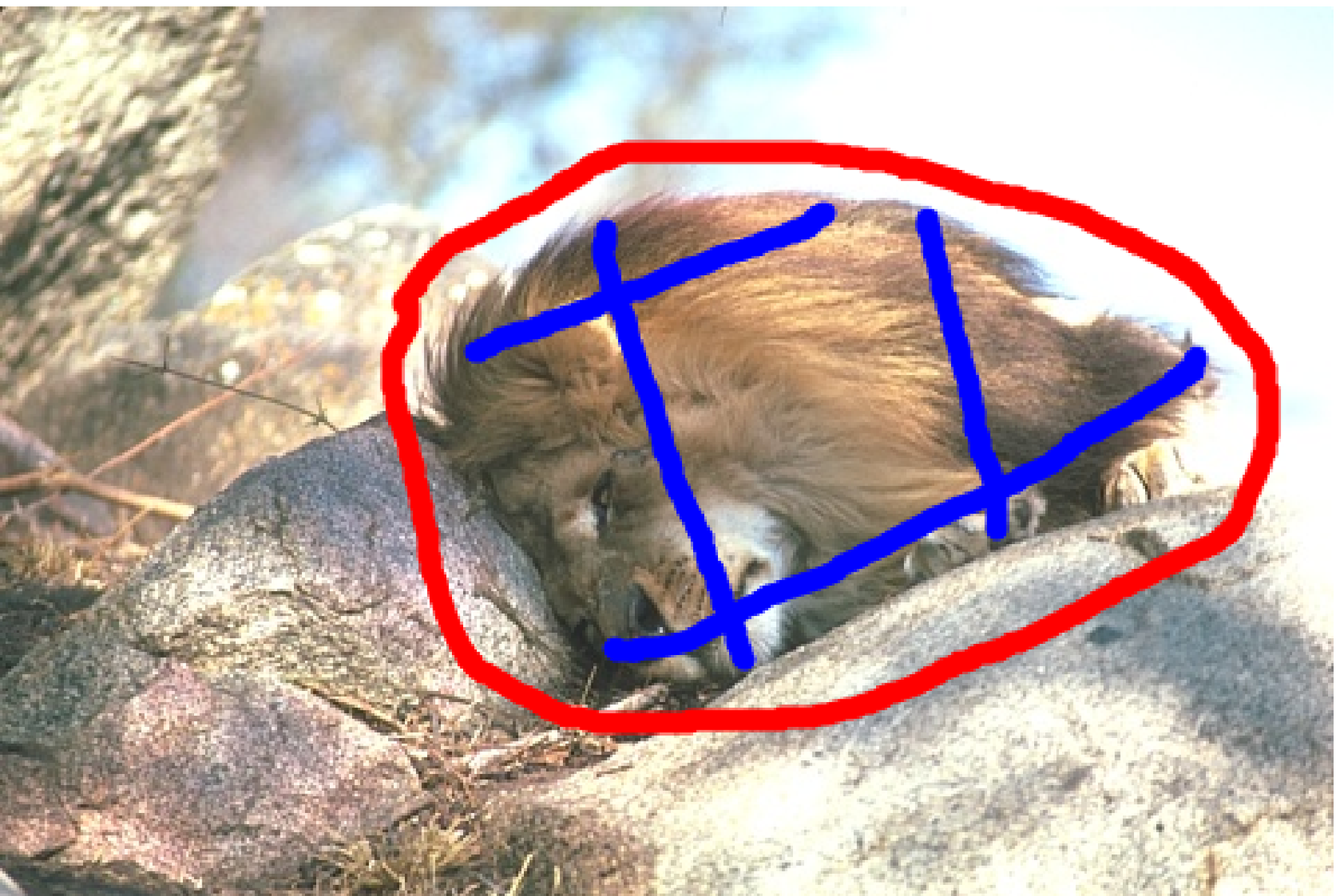}
	\includegraphics[height=1.4cm, width=1.4cm]{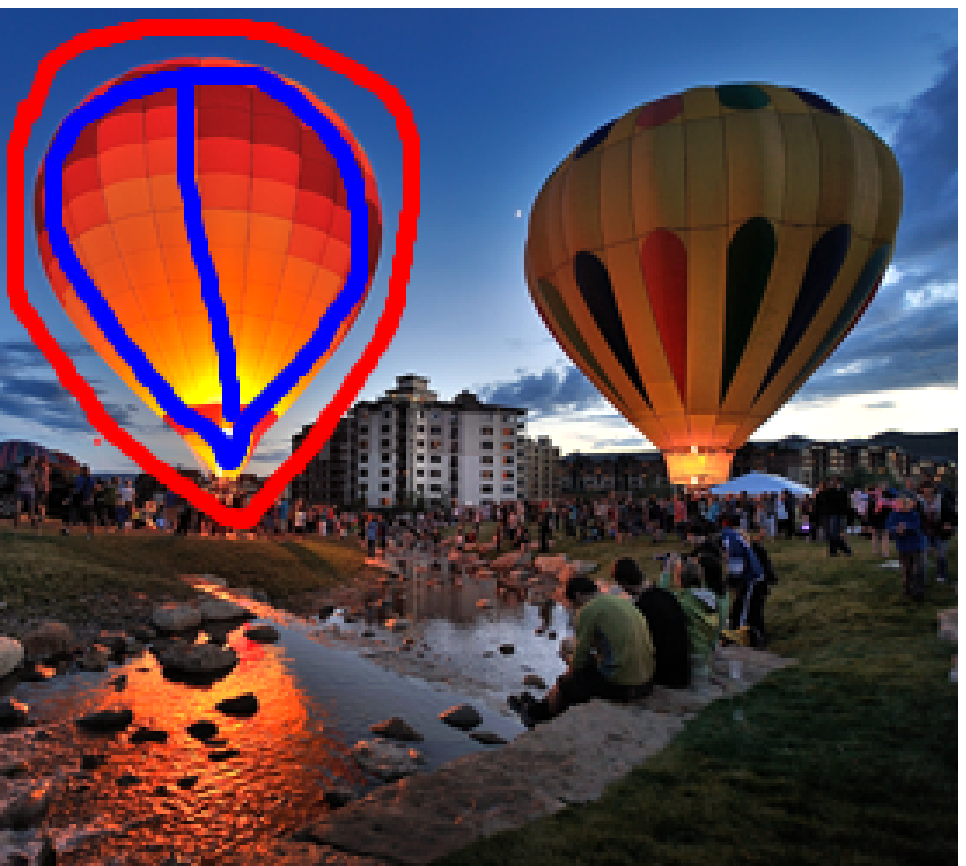}
	\includegraphics[height=1.4cm, width=1.4cm]{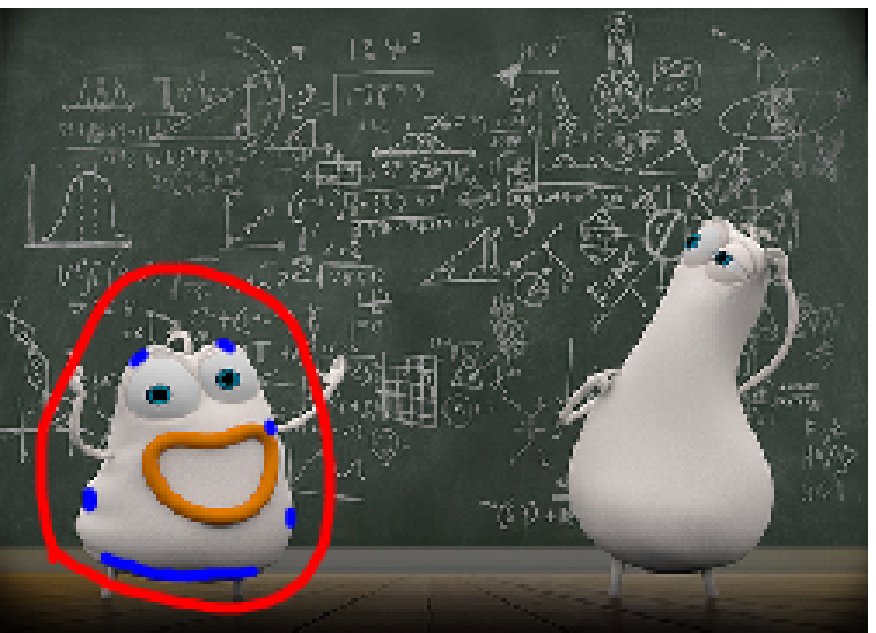}
	\caption{Test images with new labels:
		The labels for background and foreground are marked in red and blue, respectively.}\label{fig:Label}
\end{figure*}

\begin{figure}[!t]
	\centering
	\includegraphics[height=1.6cm, width=1.6cm]{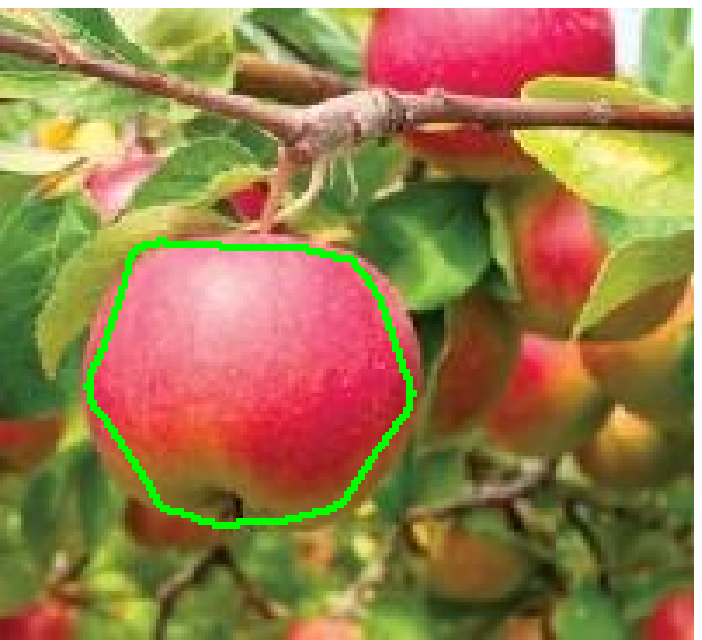}
	\includegraphics[height=1.6cm, width=1.6cm]{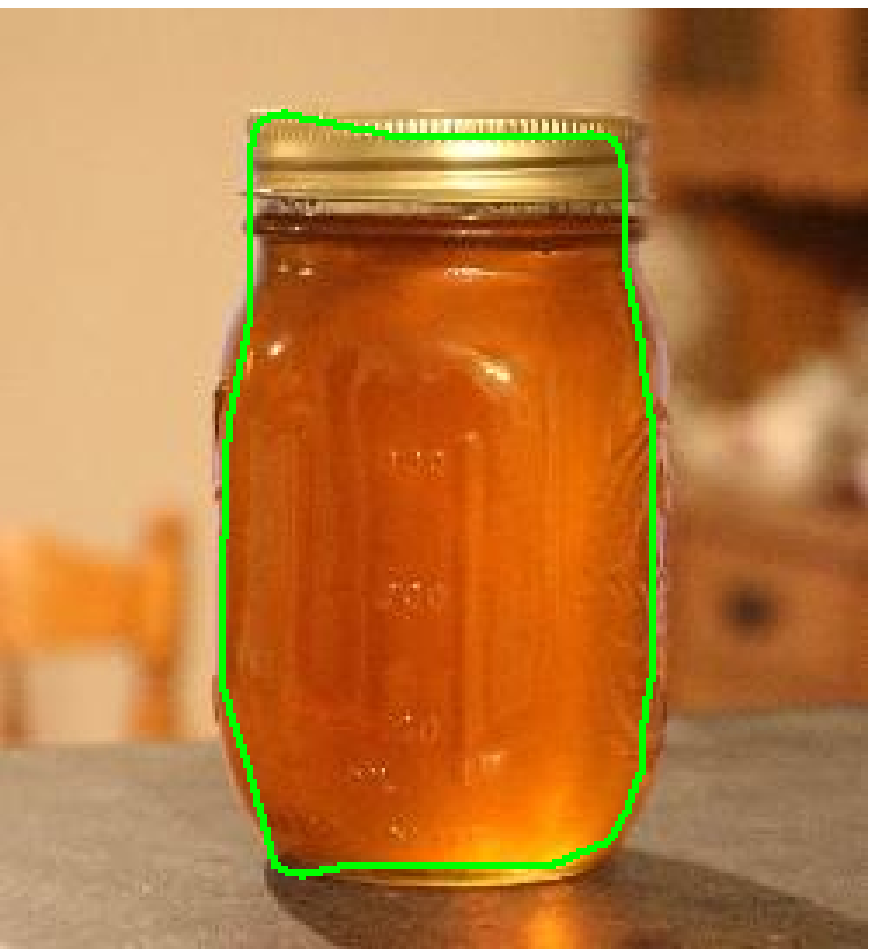}
	\includegraphics[height=1.6cm, width=1.6cm]{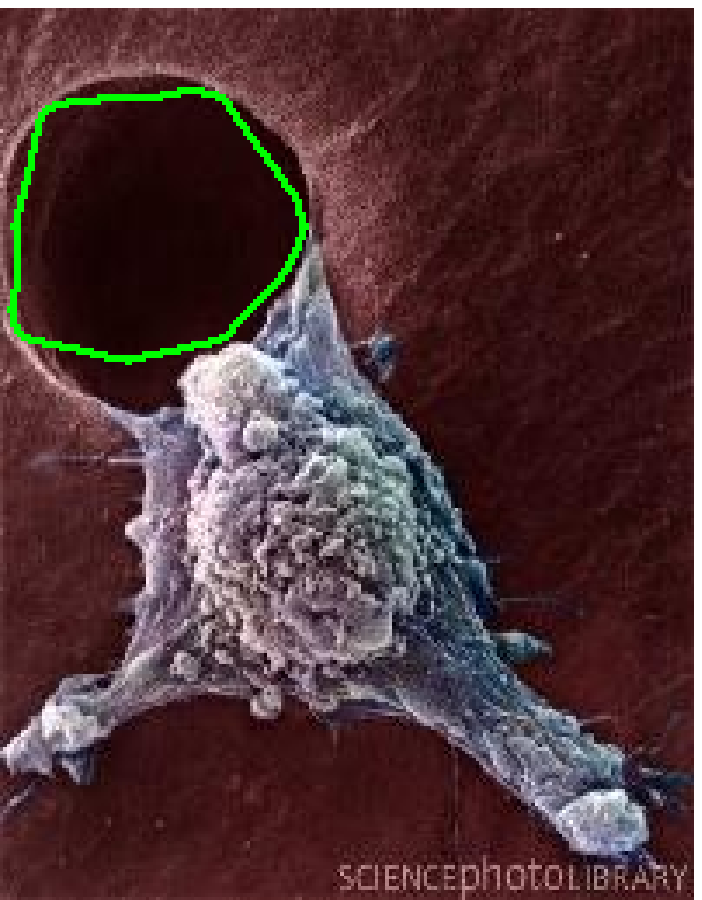}
	\includegraphics[height=1.6cm, width=1.6cm]{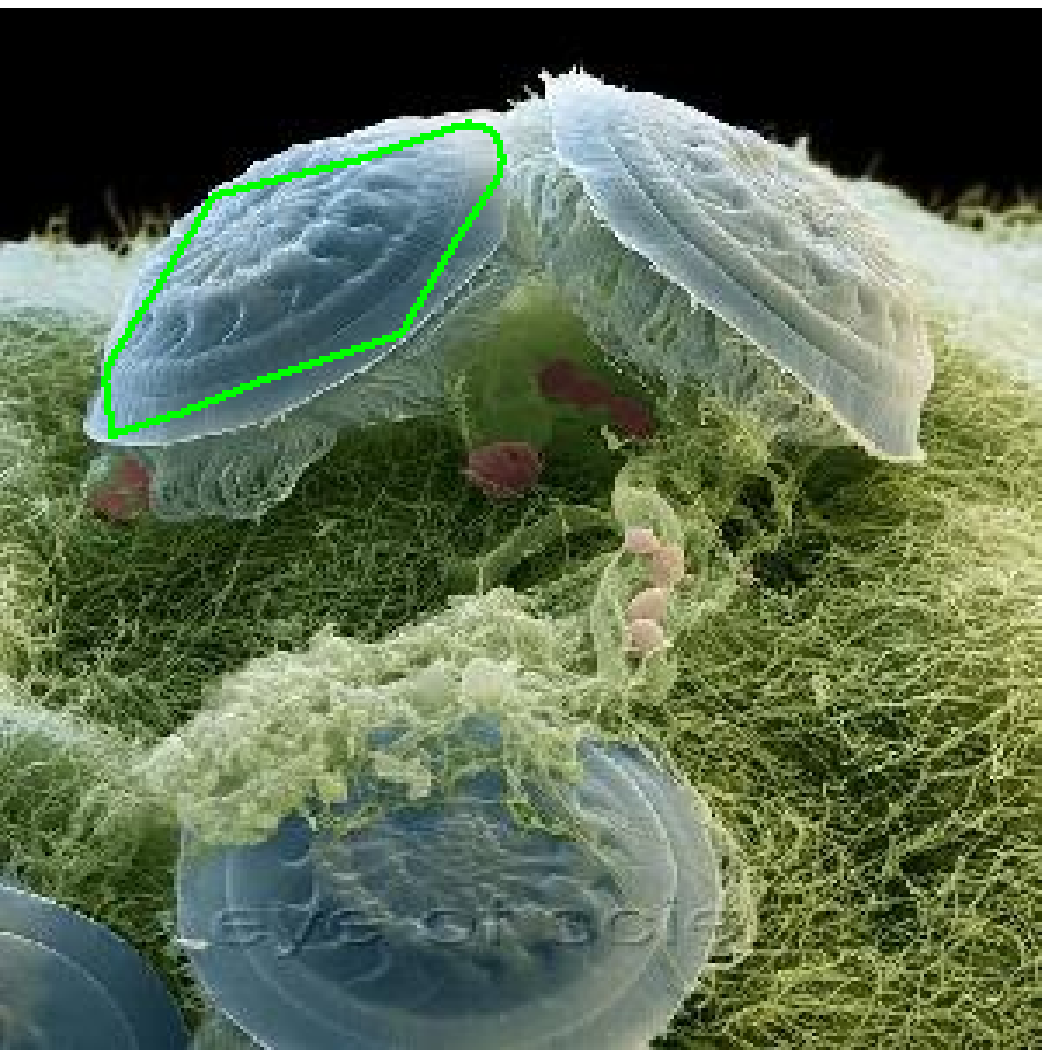}
	\includegraphics[height=1.6cm, width=1.6cm]{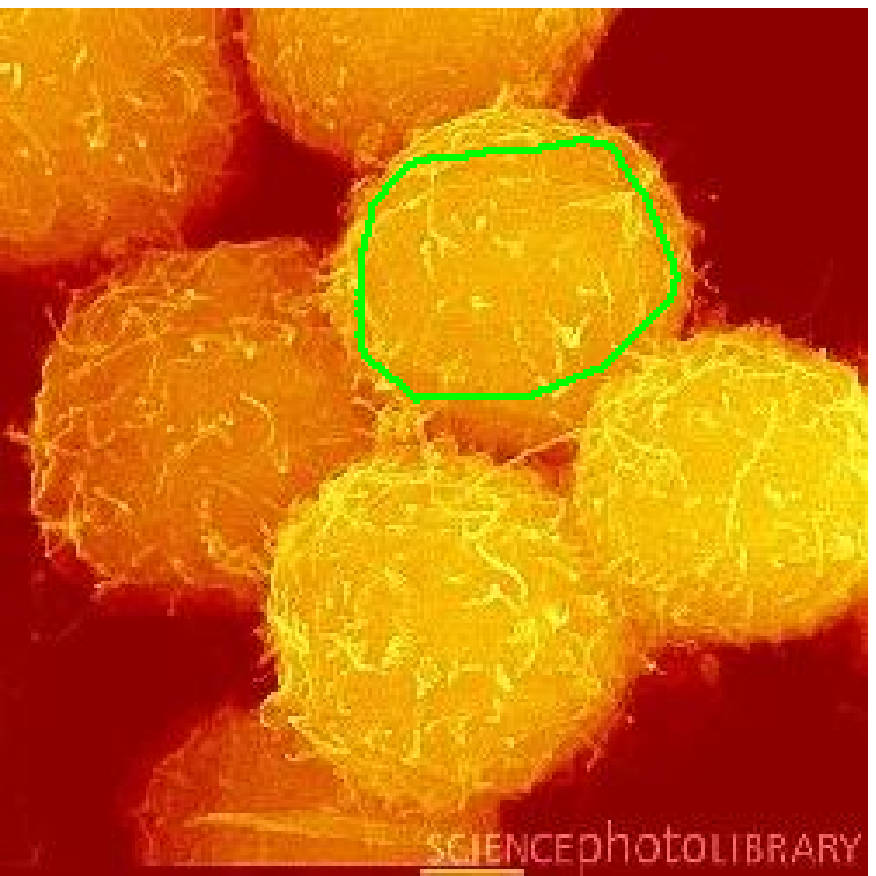}\\
	
	\includegraphics[height=1.6cm, width=1.6cm]{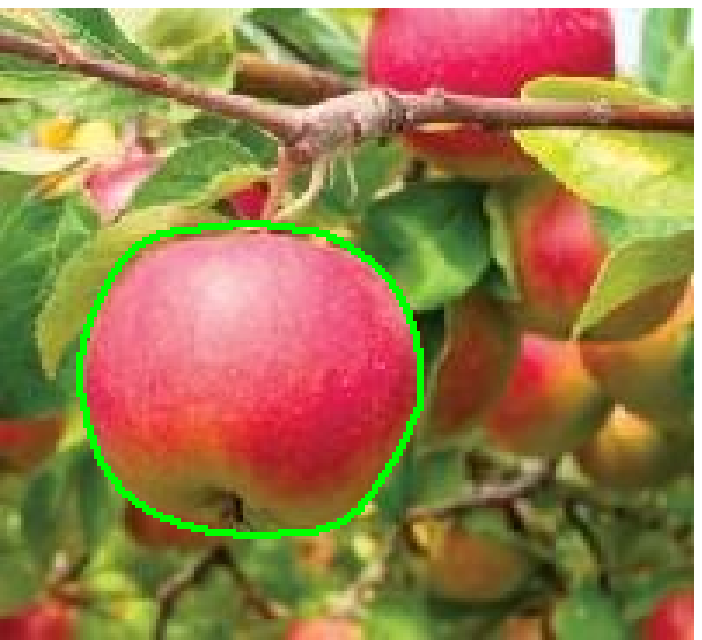}
	\includegraphics[height=1.6cm, width=1.6cm]{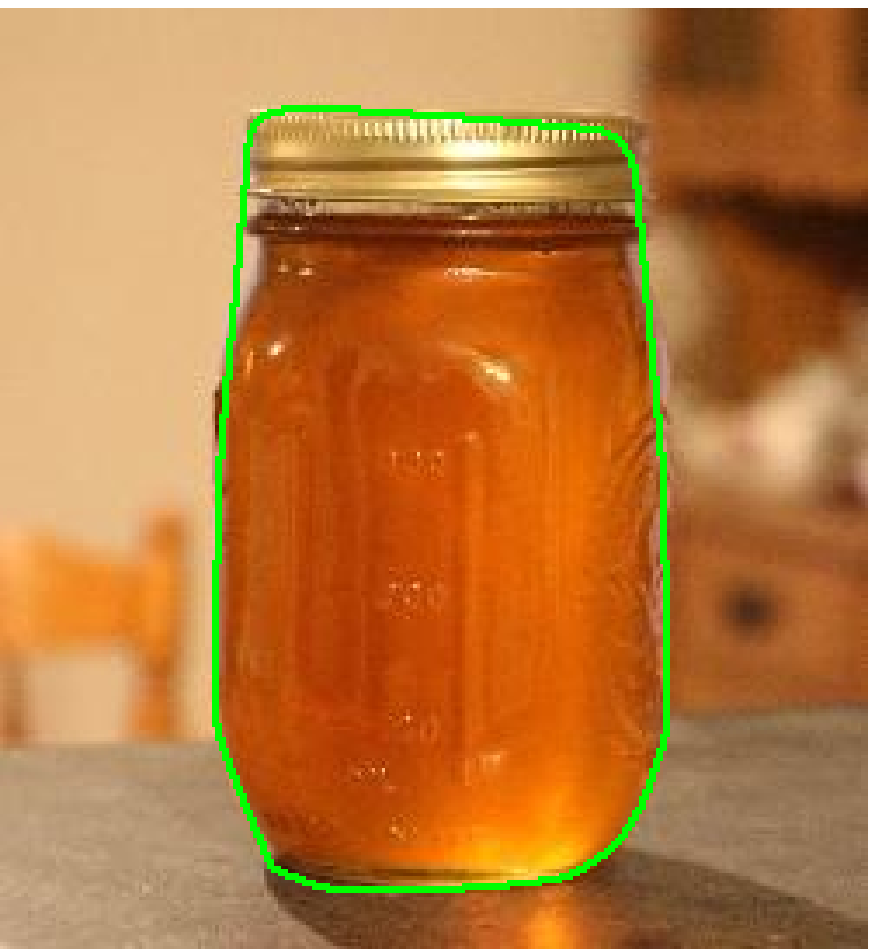}
	\includegraphics[height=1.6cm, width=1.6cm]{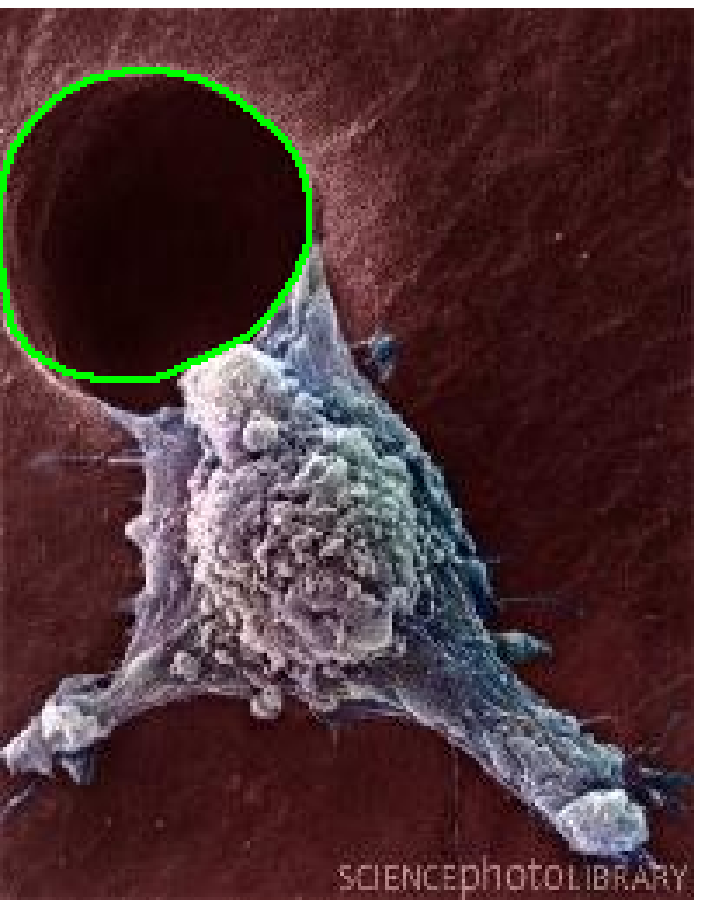}
	\includegraphics[height=1.6cm, width=1.6cm]{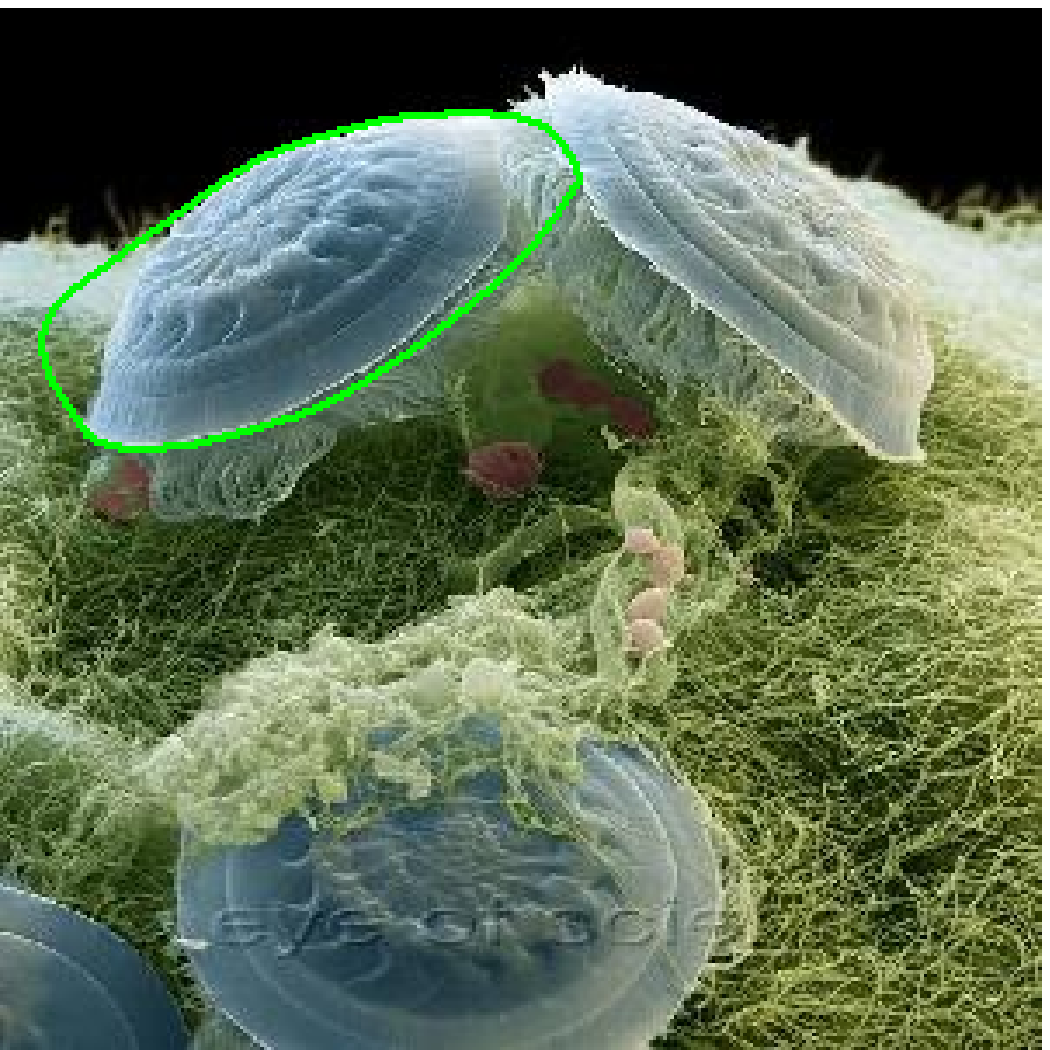}
	\includegraphics[height=1.6cm, width=1.6cm]{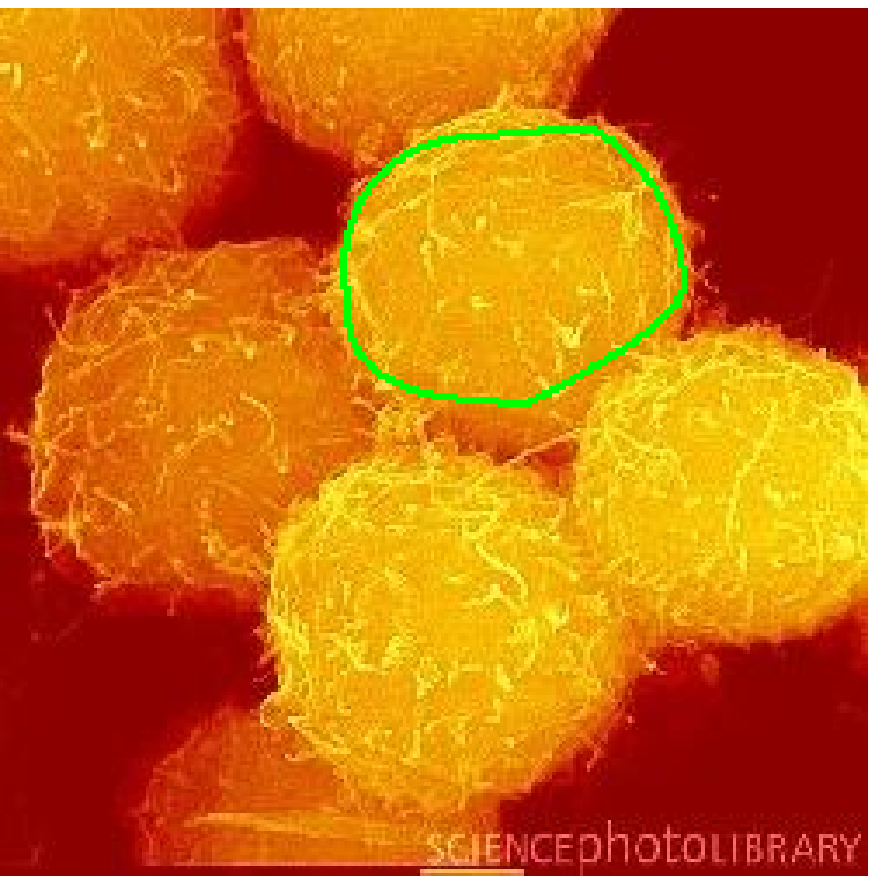}\\
	
	\includegraphics[height=1.6cm, width=1.6cm]{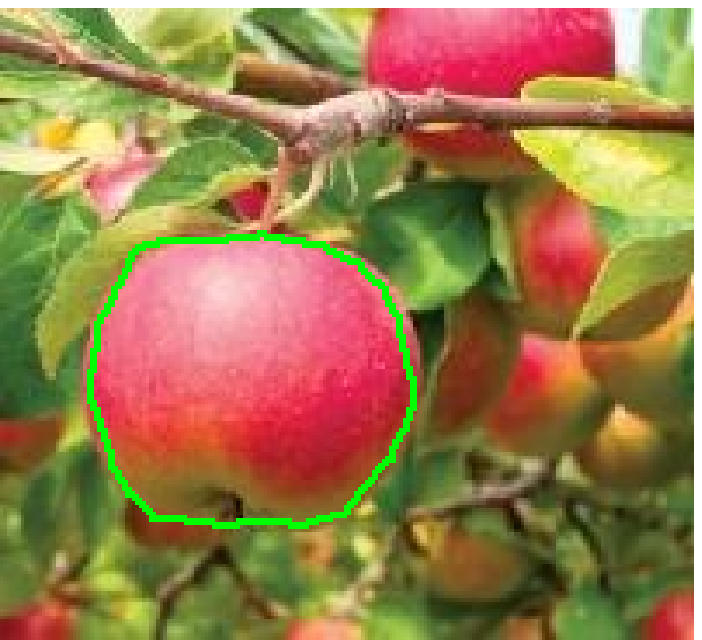}
	\includegraphics[height=1.6cm, width=1.6cm]{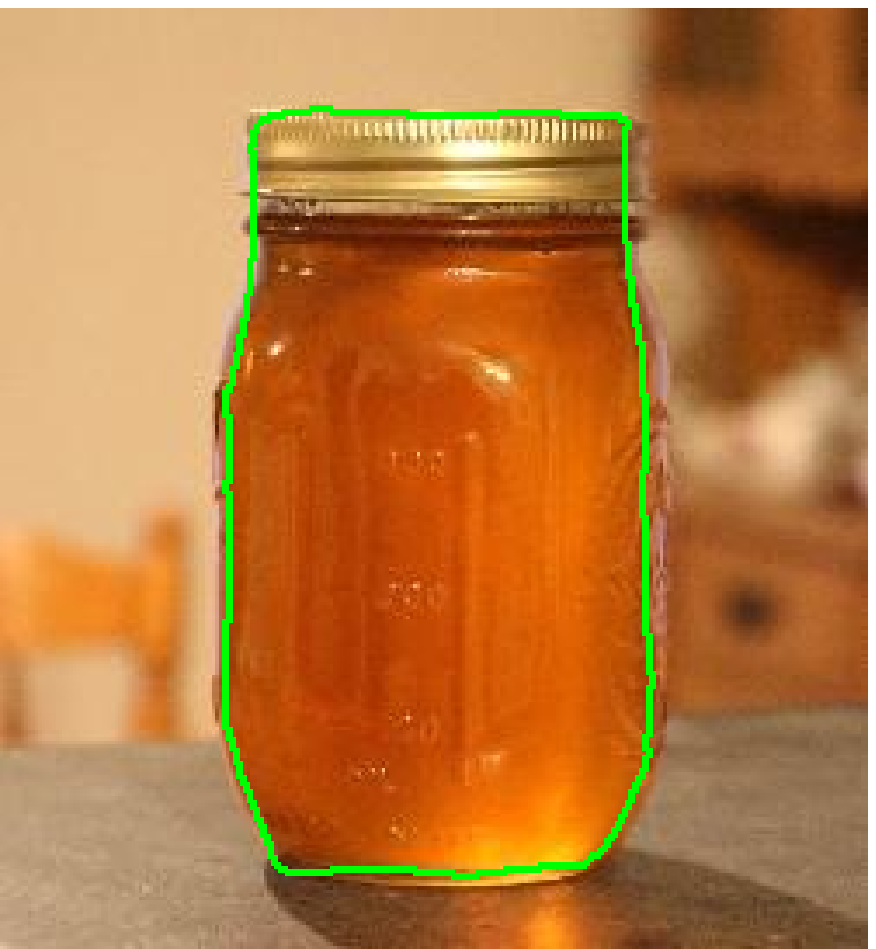}
	\includegraphics[height=1.6cm, width=1.6cm]{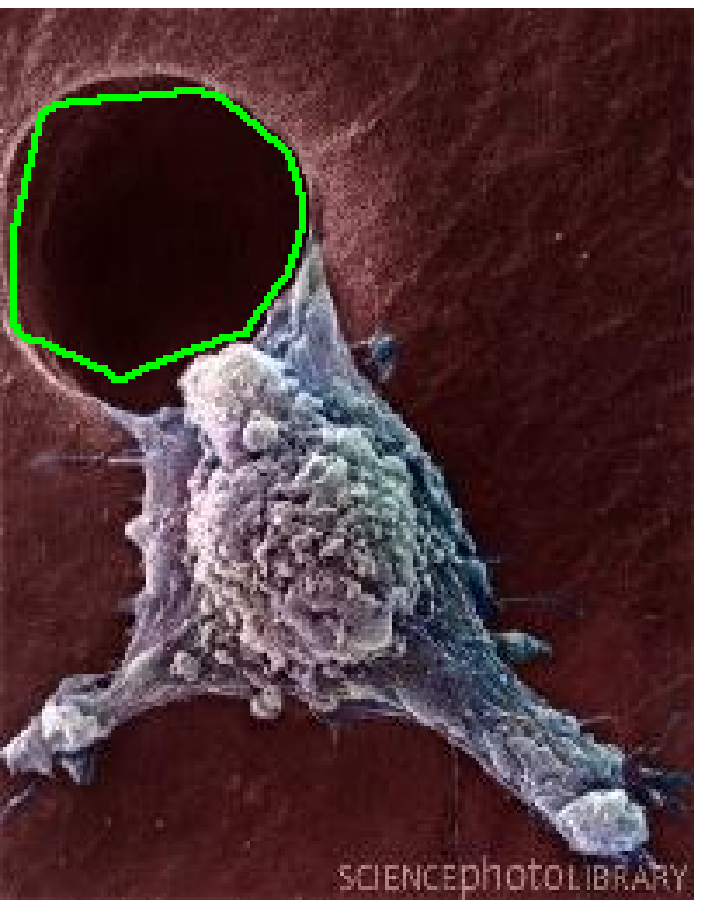}
	\includegraphics[height=1.6cm, width=1.6cm]{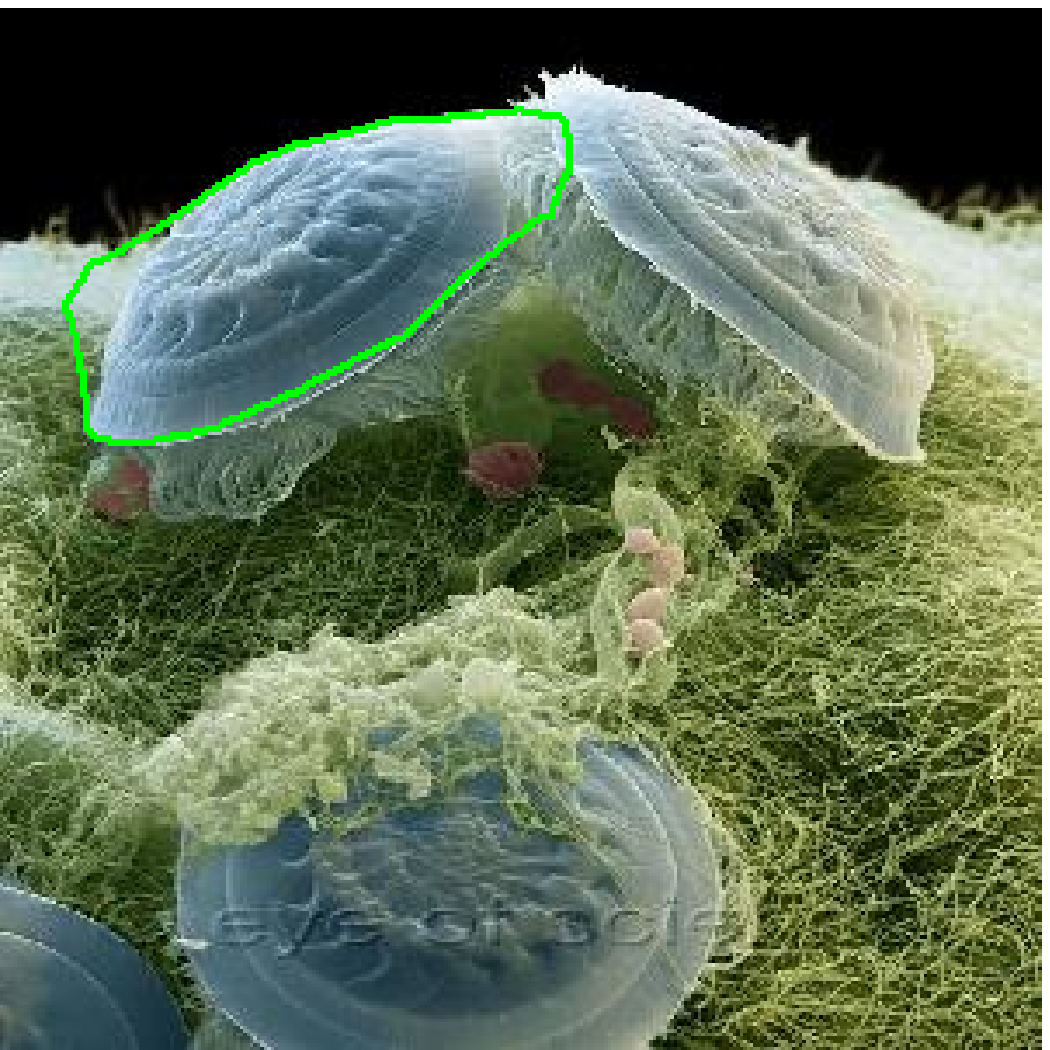}
	\includegraphics[height=1.6cm, width=1.6cm]{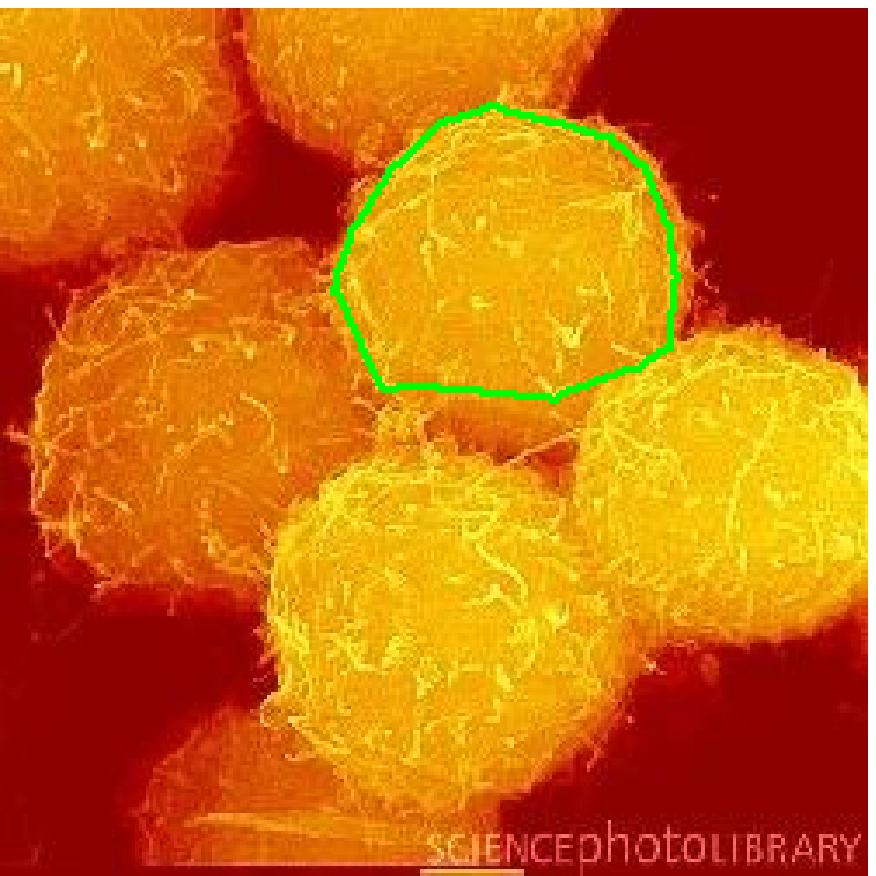}\\
	
	\includegraphics[height=1.6cm, width=1.6cm]{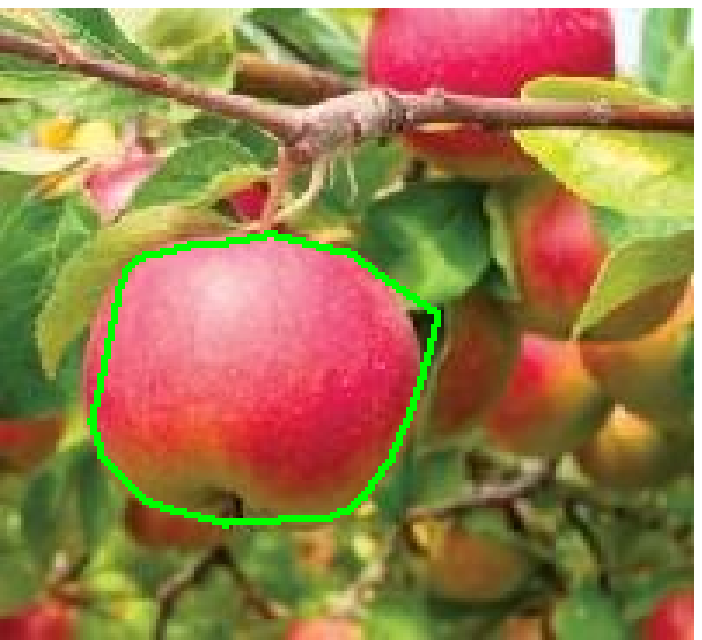}
	\includegraphics[height=1.6cm, width=1.6cm]{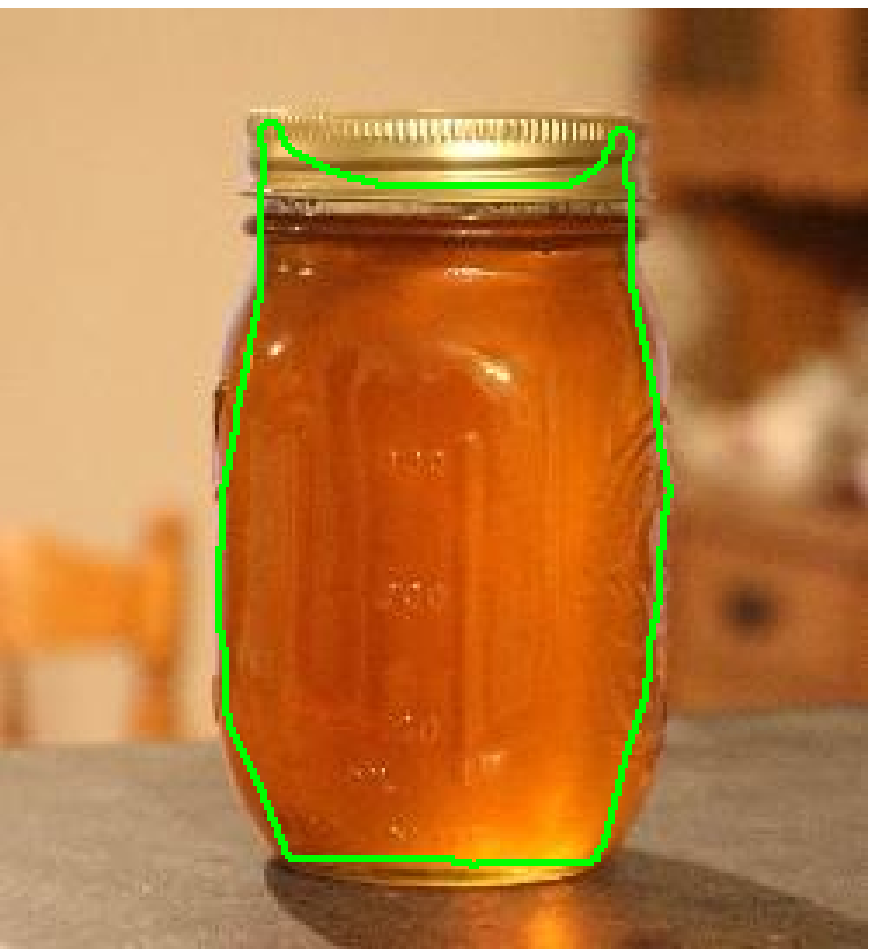}
	\includegraphics[height=1.6cm, width=1.6cm]{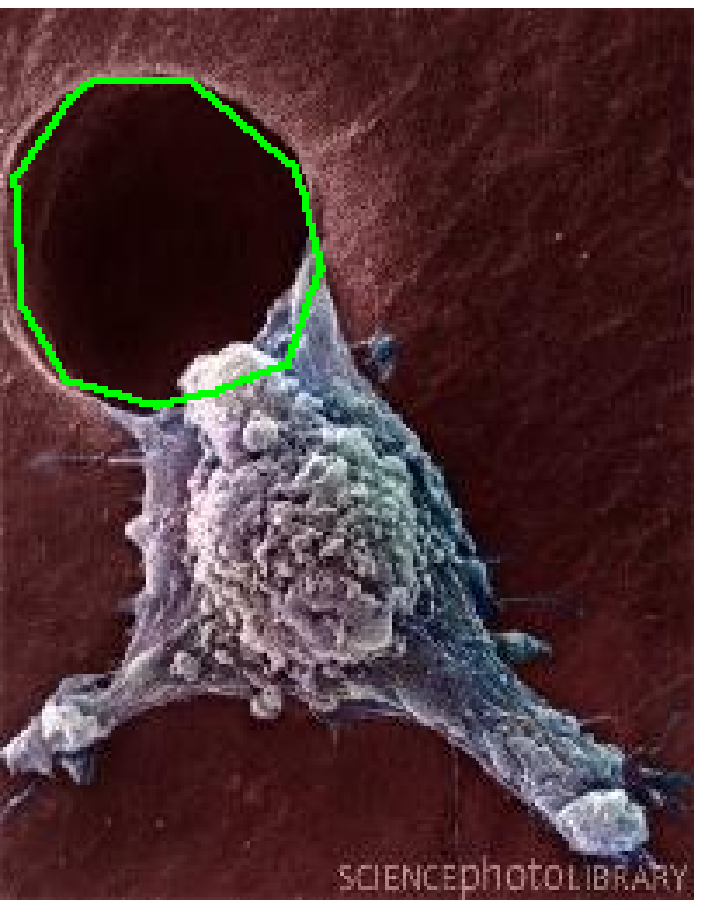}
	\includegraphics[height=1.6cm, width=1.6cm]{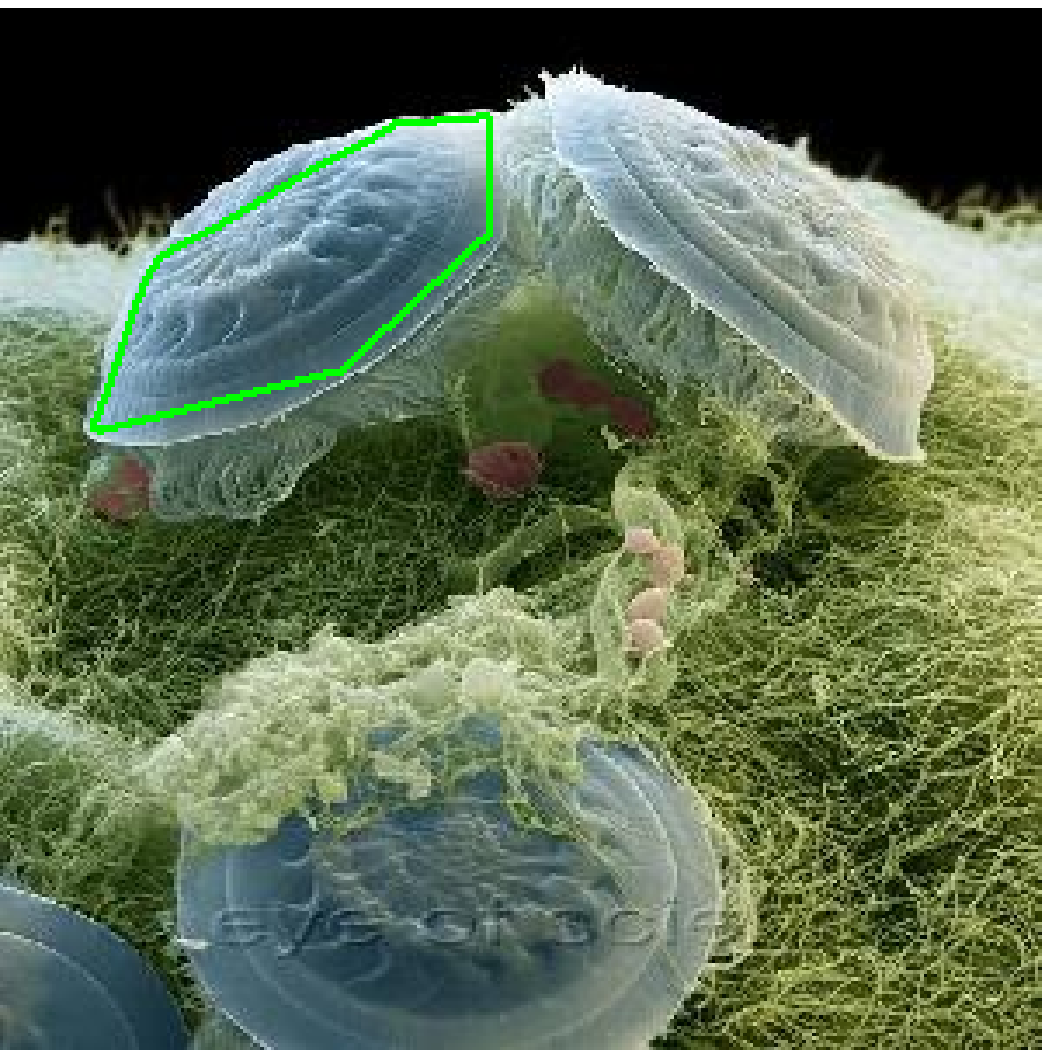}
	\includegraphics[height=1.6cm, width=1.6cm]{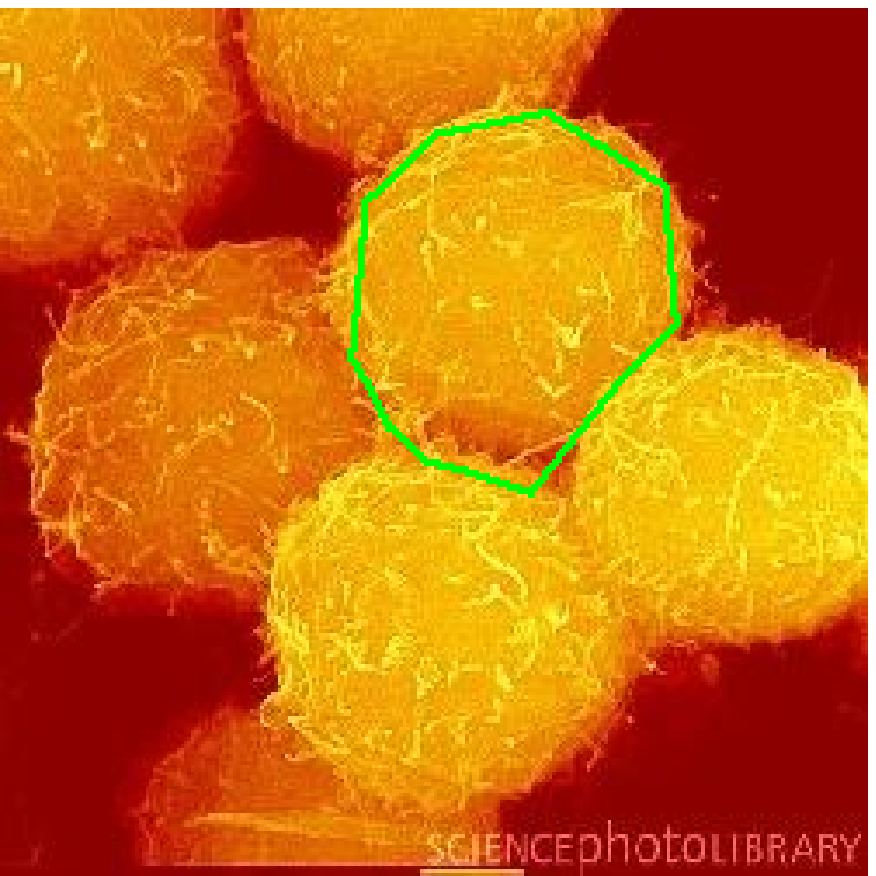}\\
	
	\includegraphics[height=1.6cm, width=1.6cm]{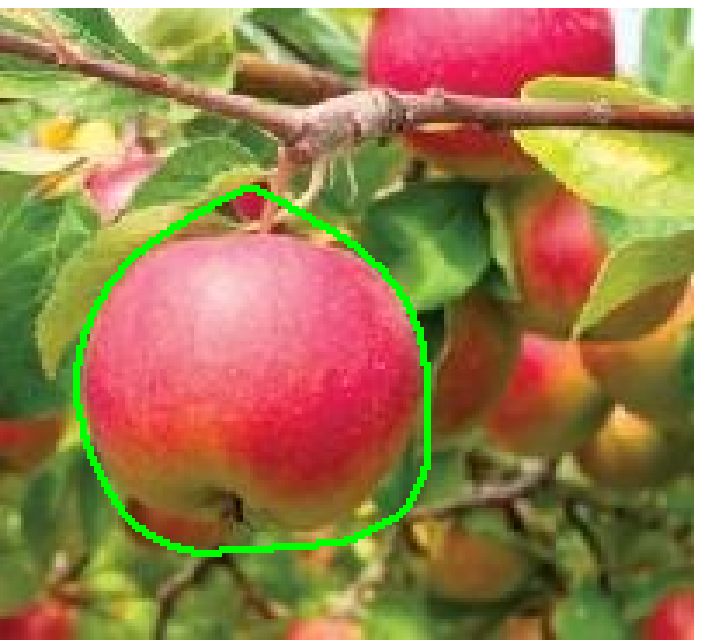}
	\includegraphics[height=1.6cm, width=1.6cm]{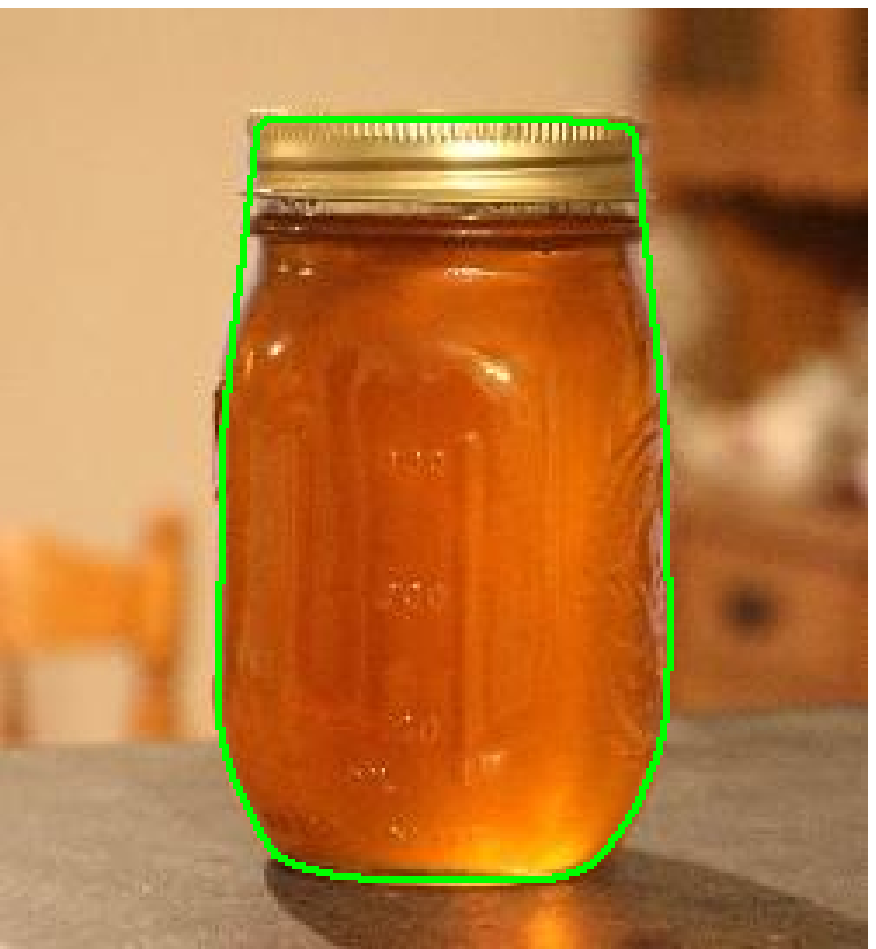}
	\includegraphics[height=1.6cm, width=1.6cm]{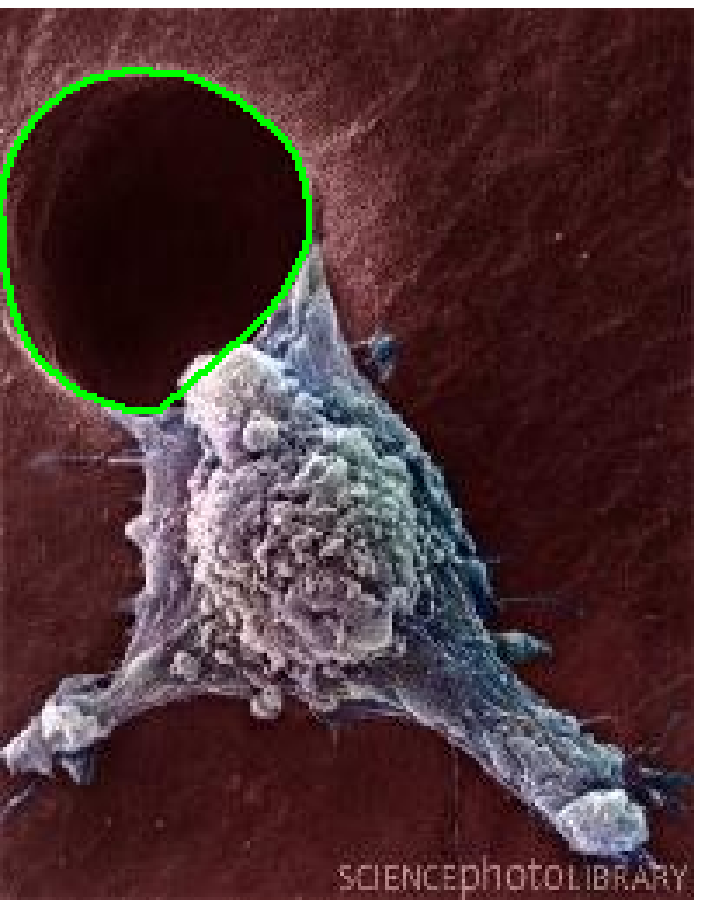}
	\includegraphics[height=1.6cm, width=1.6cm]{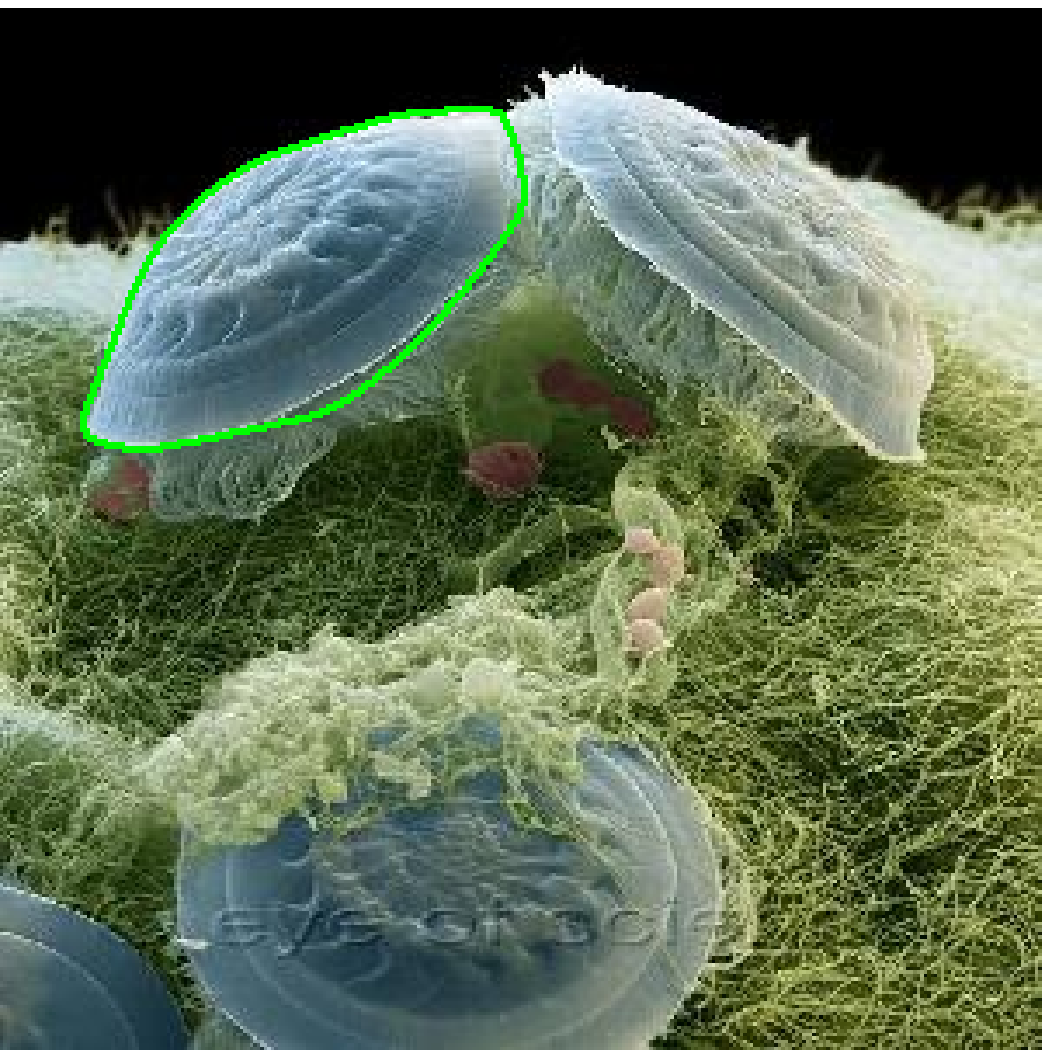}
	\includegraphics[height=1.6cm, width=1.6cm]{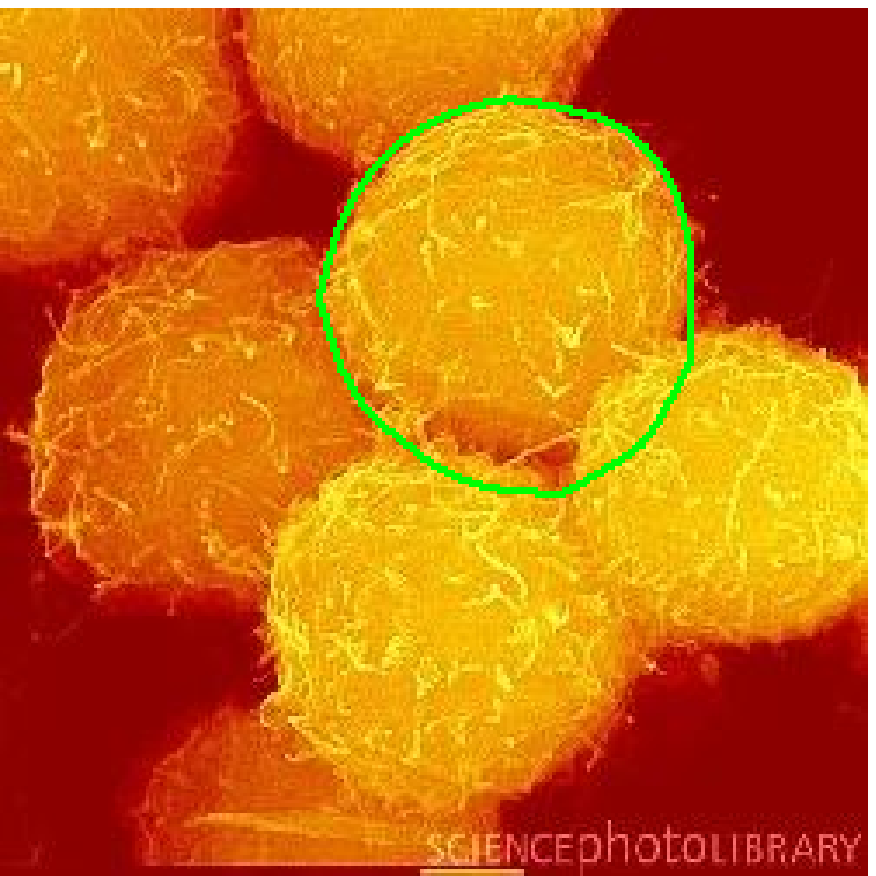}\\
	
	\includegraphics[height=1.6cm, width=1.6cm]{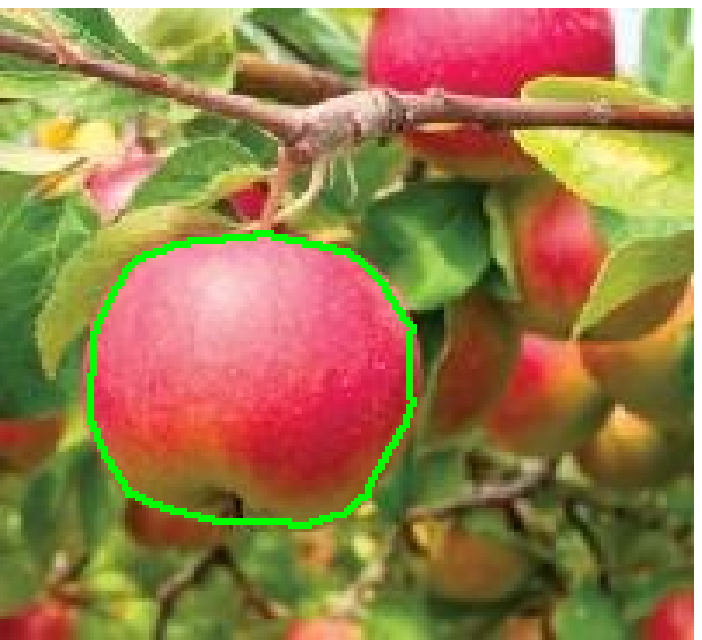}
	\includegraphics[height=1.6cm, width=1.6cm]{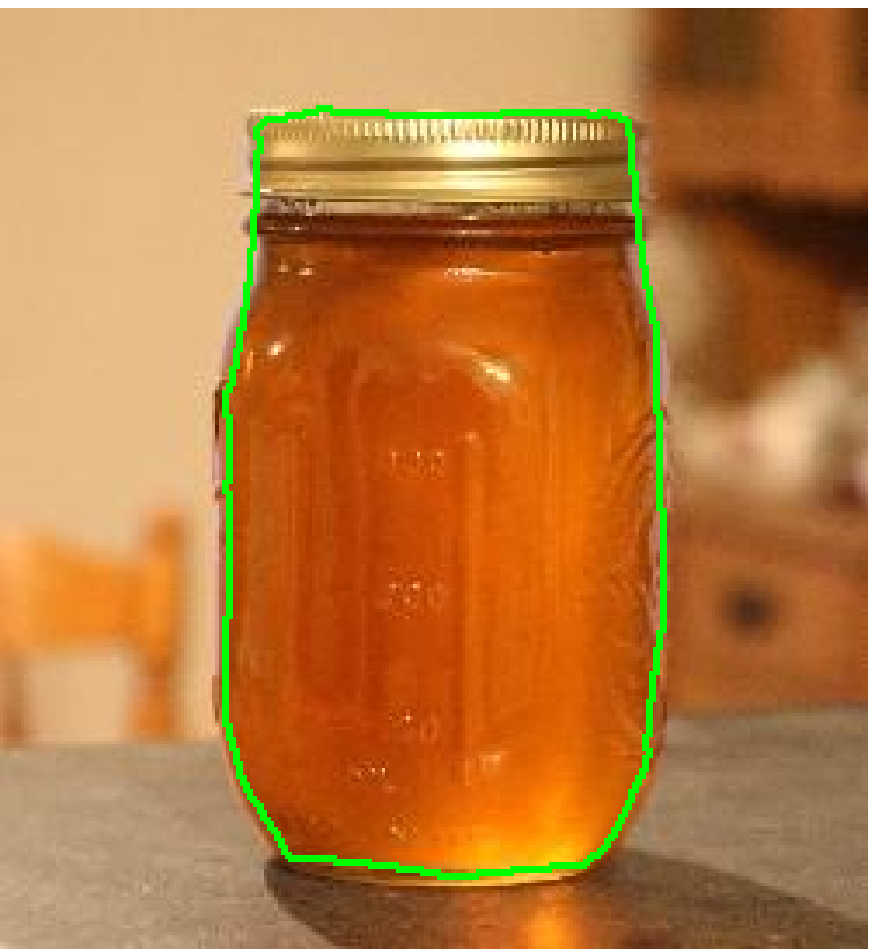}
	\includegraphics[height=1.6cm, width=1.6cm]{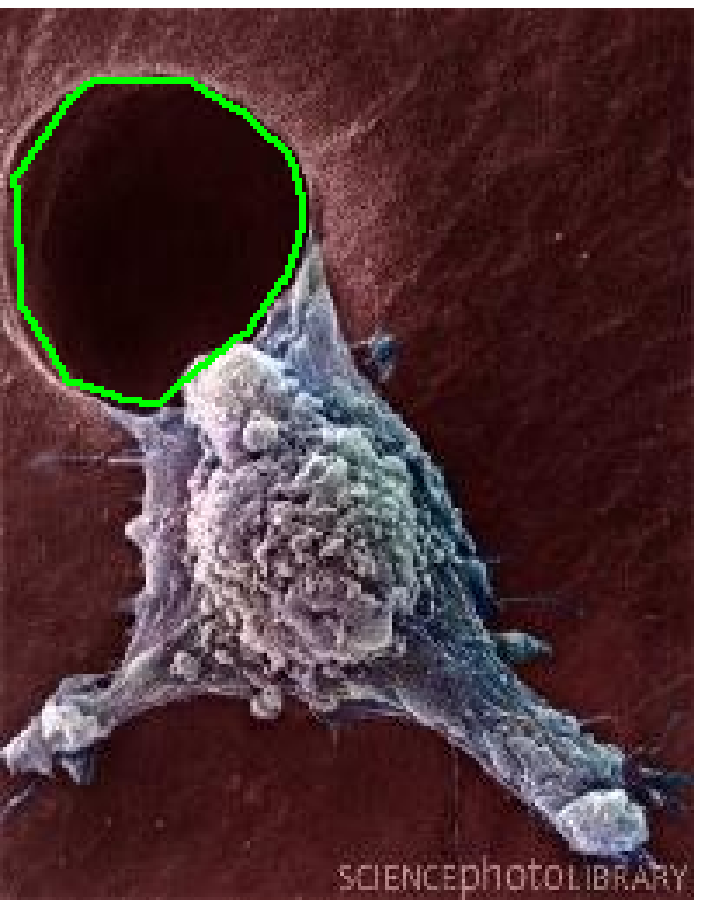}
	\includegraphics[height=1.6cm, width=1.6cm]{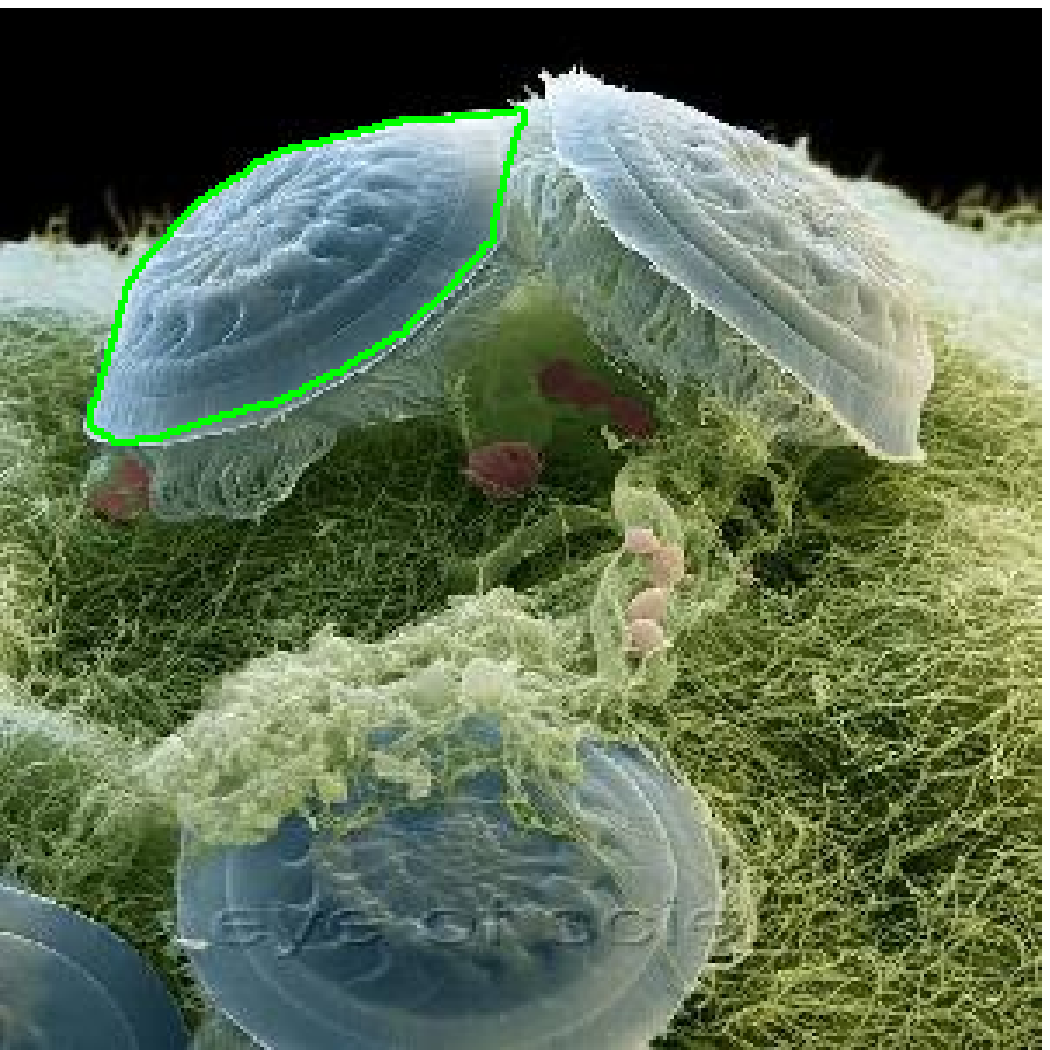}
	\includegraphics[height=1.6cm, width=1.6cm]{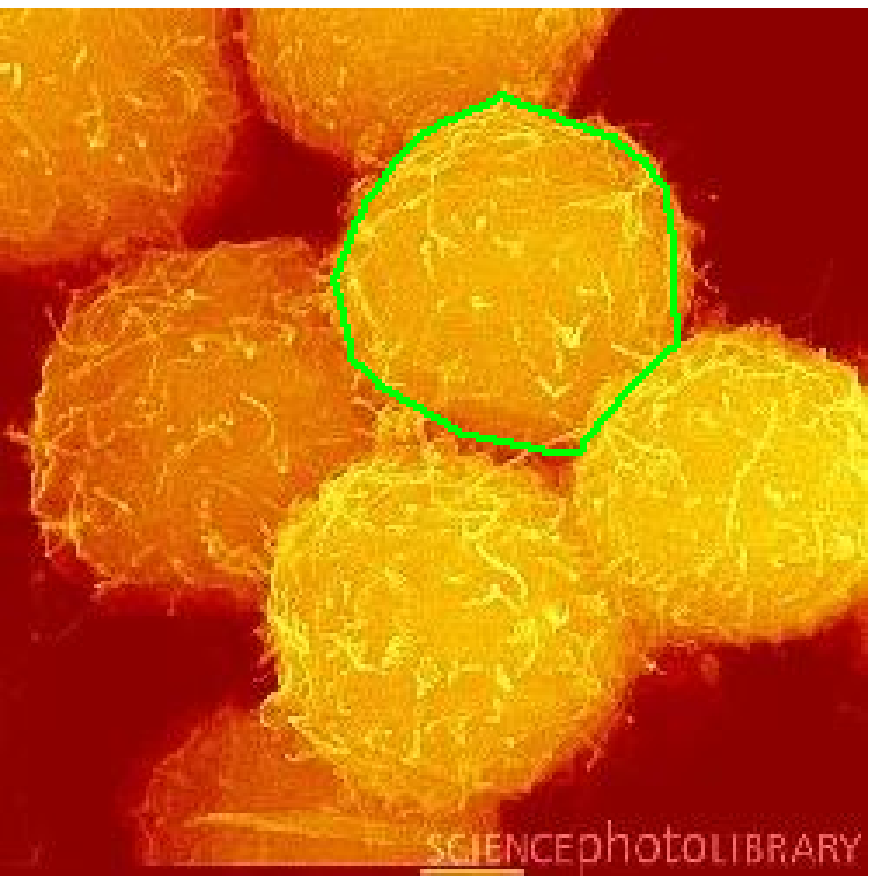}\\
	\caption{Results comparison:
		Results of 101(11), LS and the proposed methods using the downloaded (new subscribed) labels are
		shown in 1st, 2nd and 3rd rows (4th, 5th and 6th rows).}
	\label{fig:SegImg}
\end{figure}

\begin{table*}[!t]
	\caption{Average shape-distances of the results by the 1-0-1 method, LS method and our proposed method, where the best (minimal) values are bold.}
	\label{tab:shdist_single}
	\centering
	\begin{tabular}{c|cccc|ccc}%
		\hline
		&\multicolumn{4}{c}{Downloaded labels (\%)} &\multicolumn{3}{|c}{New labels (\%)}\\
		\hline
		Methods& 101(5) & 101(11) & LS & Ours &101(11)& LS & Ours\\
		avg&46.20 &12.76 &10.88&\pmb{10.26} &10.45&7.57 &\pmb{5.67} \\
		\hline
	\end{tabular}
\end{table*}
Compared the average shape-distances by different methods in Tab. \ref{tab:shdist_single} using the downloaded labels, the proposed method performs better than the 1-0-1 and LS methods in the sense of shape-distance.
In order to show the advantages of the proposed method, we subscribe new labels for the tested images (see Fig. \ref{fig:Label}) because the downloaded labels are not suitable for the proposed method.
Since the 101(11) is better than 101(5) (see the results using downloaded labels) for most of images, the average shape-distance of the results by 101(11) is presented for new labels in Tab. \ref{tab:shdist_single} only.
The performance of the proposed method is outstanding by comparing the shape-distances. In addition,
comparing the results in Fig. \ref{fig:SegImg} visually, we see that the proposed method can preserve the convexity of all the objects, and extract the object boundaries
exactly, but the 101(11) and LS methods fails for some objects (e.g. the last two images). 

As for the computational efficiency, the average computing time of the 101(11), LS and the proposed methods are
$52.8s$, $83.8s$ and $59.5s$, respectively.
In summary, these experiments illustrate that the proposed method outperforms the 1-0-1 \cite{Gorelick2016convexity} and LS \cite{2022Luo} methods in the sense of higher accurate results within almost the same or less time.

Besides the theoretical convergence of Algorithm  \ref{alg:PADMM} for the linearization problem (\ref{eq:modLL}), the convergence of Algorithm \ref{alg:proposed} for the proposed model (\ref{eq:mod_all}) is investigated numerically.
Taking the first three images as examples, the evolution the  of objective function values is plotted in Fig. \ref{fig:evL} (left), the total number of points that violate the convex constraint v.s. iterations is plotted in Fig. \ref{fig:evL} (right).
For all three images, the objective function values decrease monotonically until stability, and the number of violated points reduces quickly and then remains a relatively small values.

\begin{figure}[!t]
	\centering
	\includegraphics[height=3cm, width=4cm]{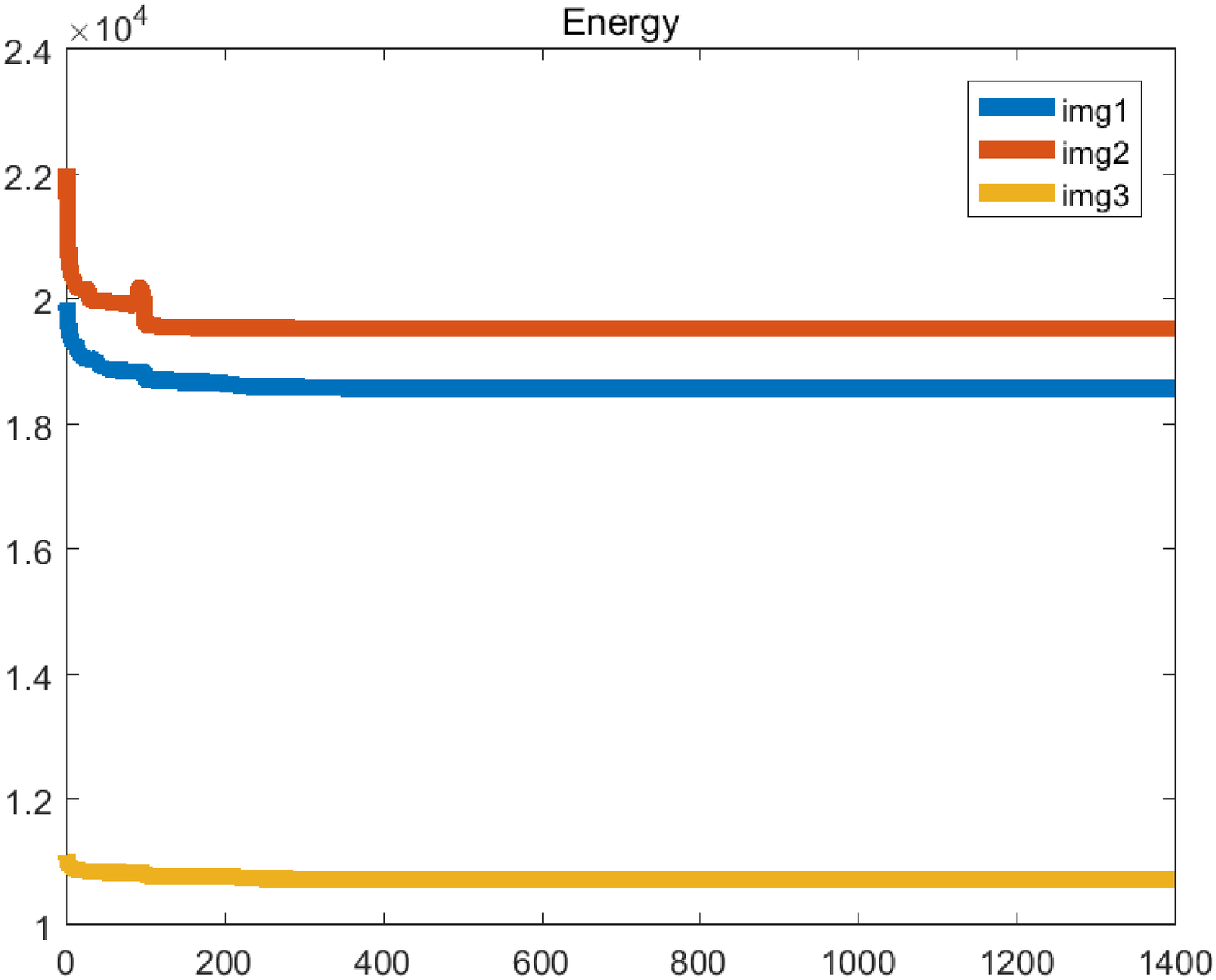}
	\includegraphics[height=3cm, width=4cm]{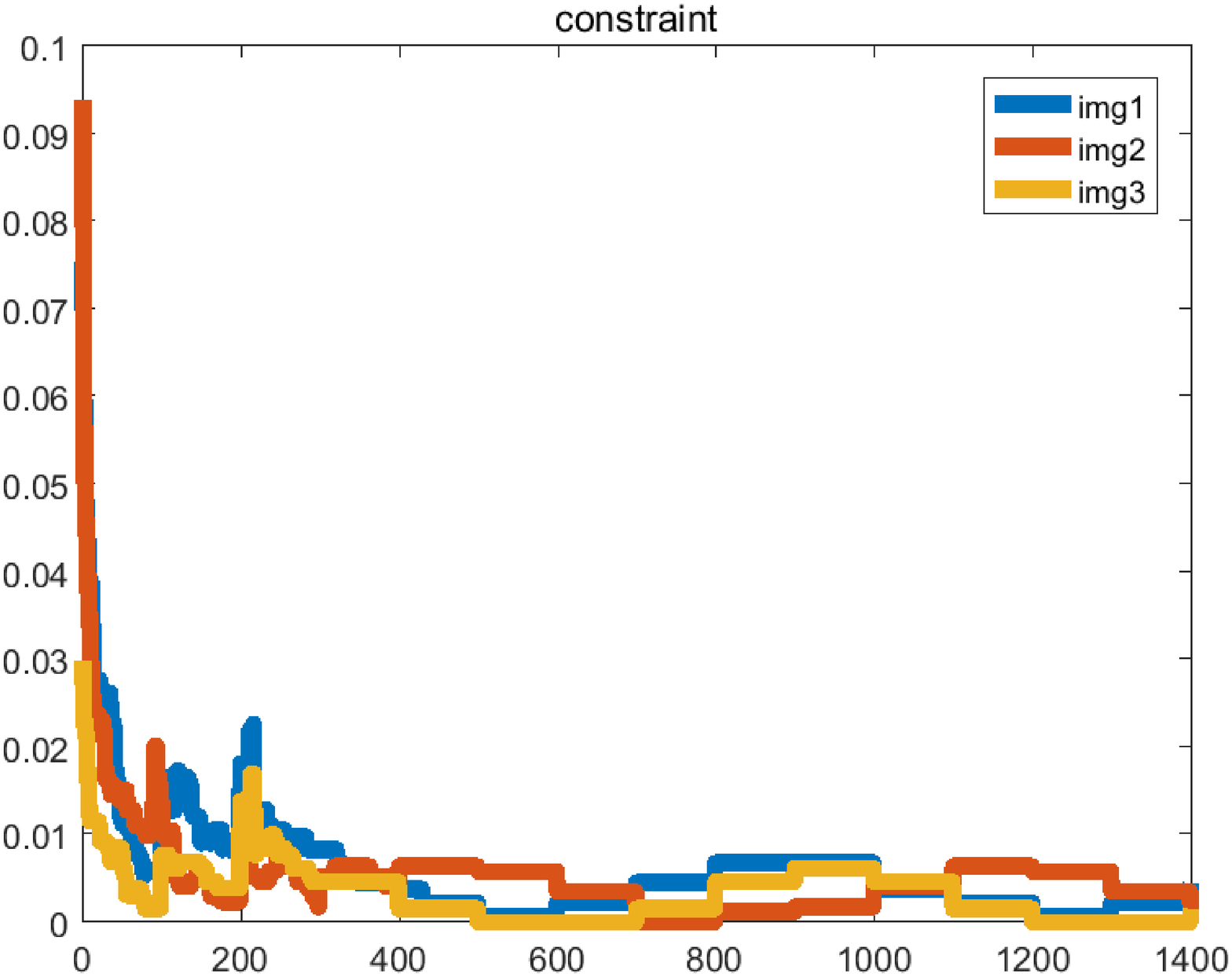}
	\caption{Numerical convergence of Algorithm \ref{alg:proposed} (left) and total numbers of points that violate the inequality constraint for convexity shape prior (right) of  the first three images in Fig. \ref{fig:Label}. }
	\label{fig:evL}
\end{figure}

\subsubsection{Multiple convex objects segmentation}
Experiments on 10 images (all the images can be seen in \cite{2022Luo})
containing multiple convex objects of interest are conducted, and the results are compared with 101(11) \cite{Gorelick2017multi_convexity} and LS methods \cite{2022Luo}.
Some segmentation results by the 101(11), LS and the proposed methods are demonstrated in Fig. \ref{fig:muti}. Besides the visual comparison, the average shape-distances of the results by different methods are tabulated in Tab. \ref{tab:mult_shdist_time}.

\begin{figure}[!t]
	\centering
	\includegraphics[height=1.6cm, width=1.6cm]{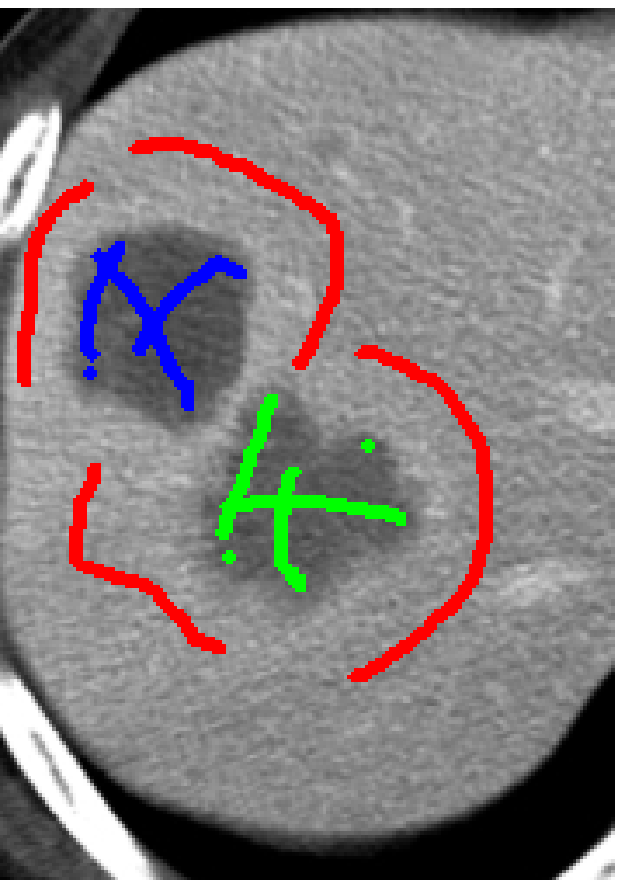}
	\includegraphics[height=1.6cm, width=1.6cm]{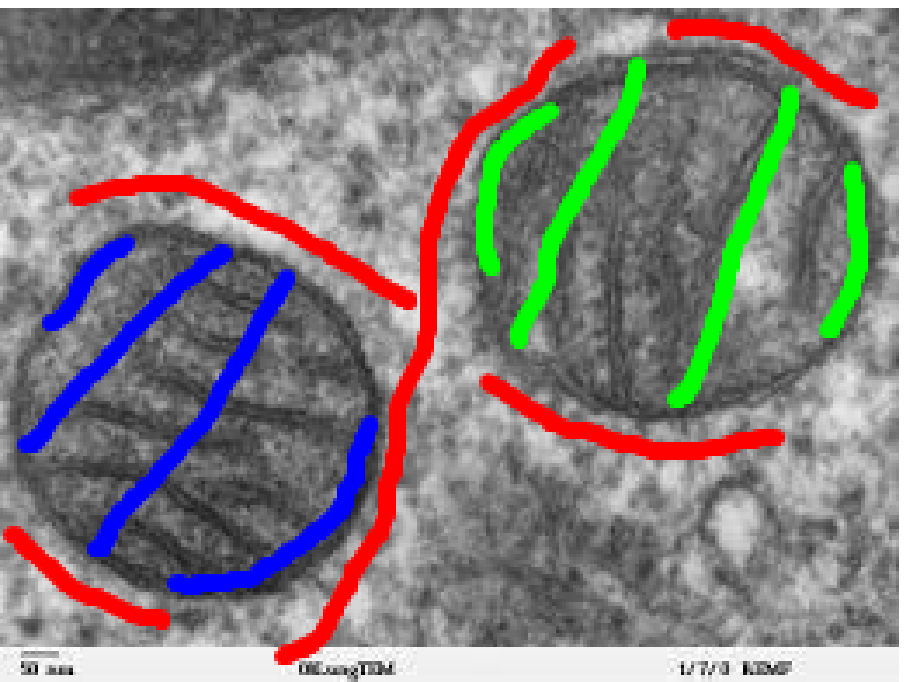}
	\includegraphics[height=1.6cm, width=1.6cm]{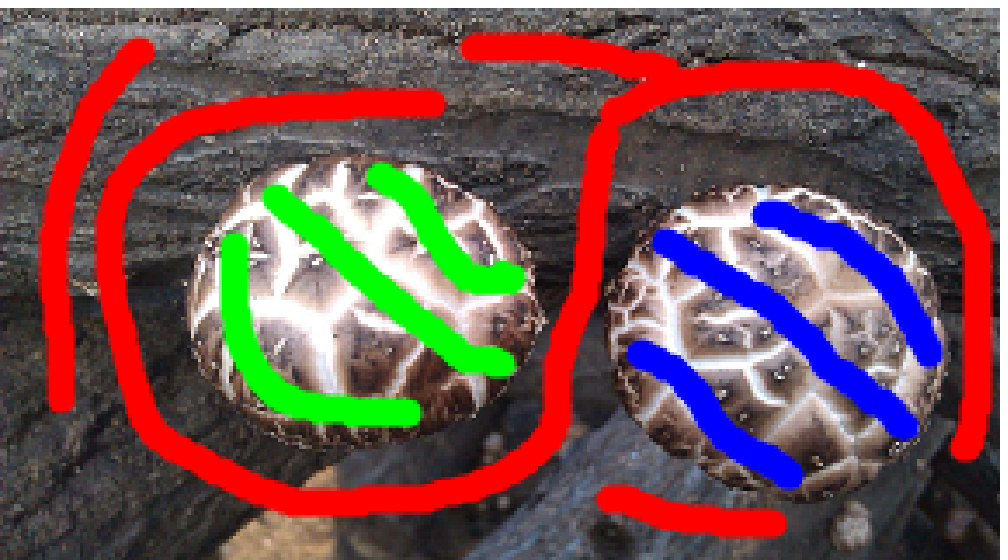}
	\includegraphics[height=1.6cm, width=1.6cm]{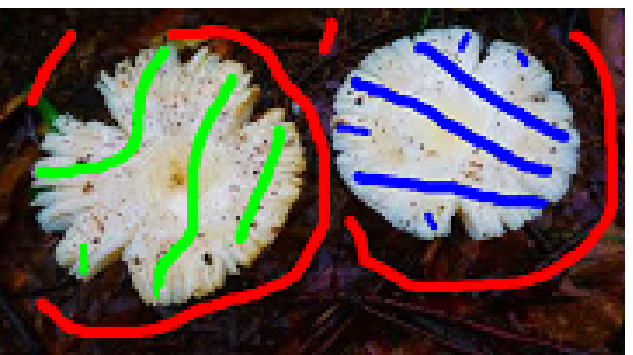}
	\includegraphics[height=1.6cm, width=1.6cm]{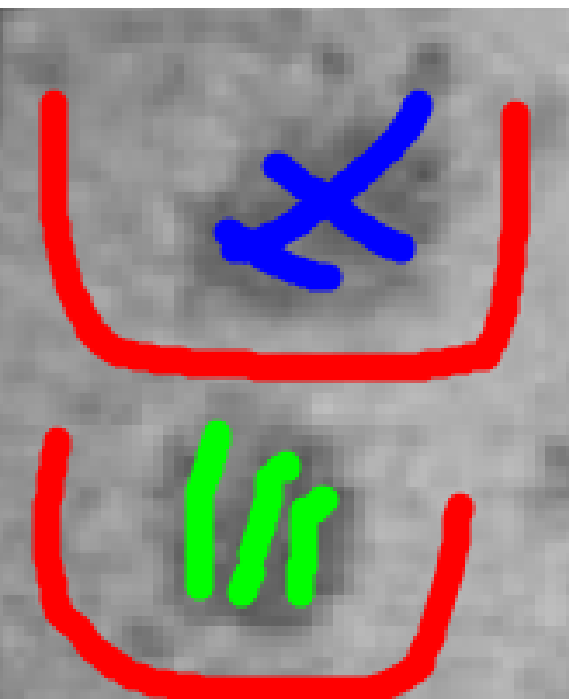}\\
	
	\includegraphics[height=1.6cm, width=1.6cm]{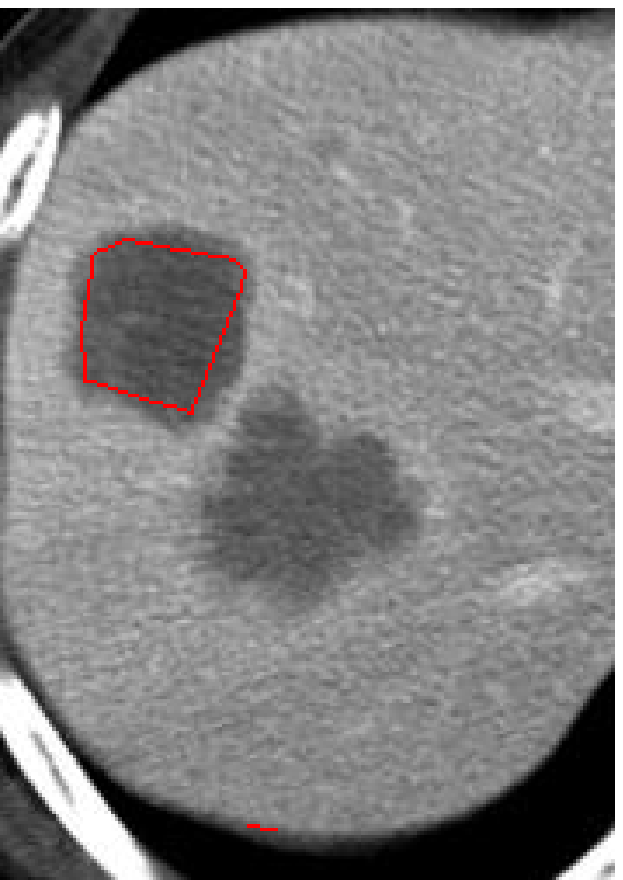}
	\includegraphics[height=1.6cm, width=1.6cm]{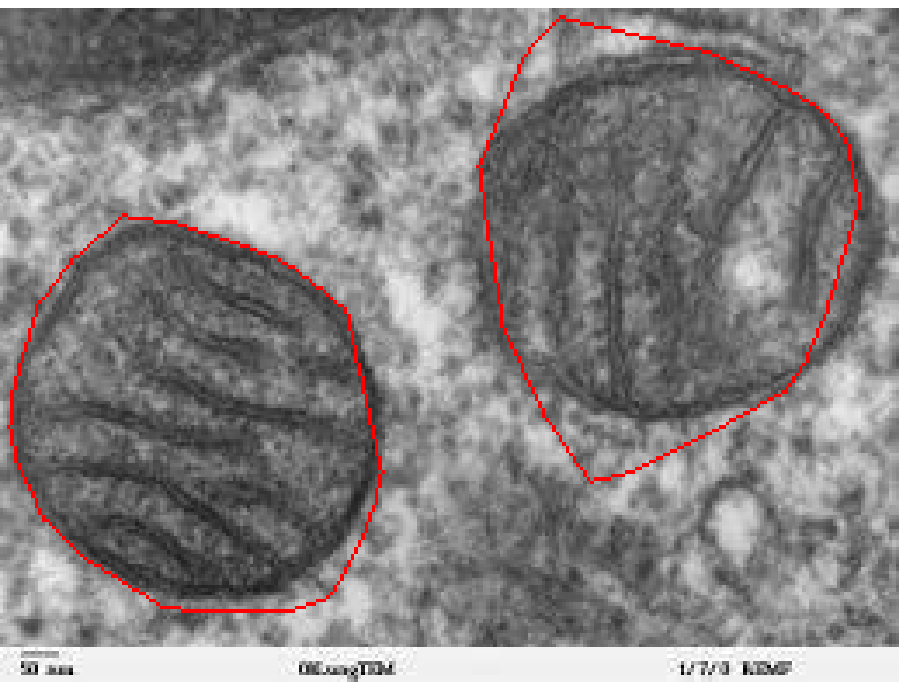}
	\includegraphics[height=1.6cm, width=1.6cm]{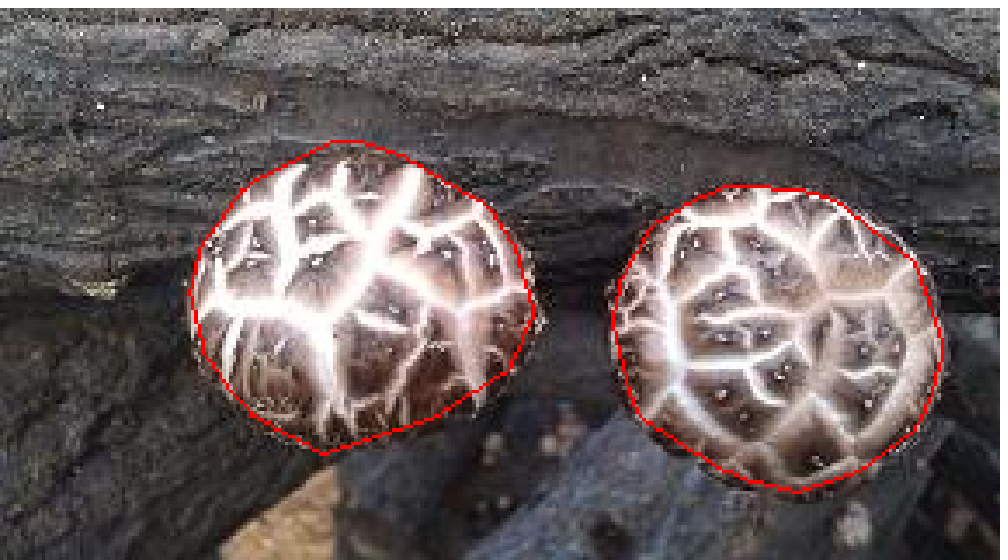}
	\includegraphics[height=1.6cm, width=1.6cm]{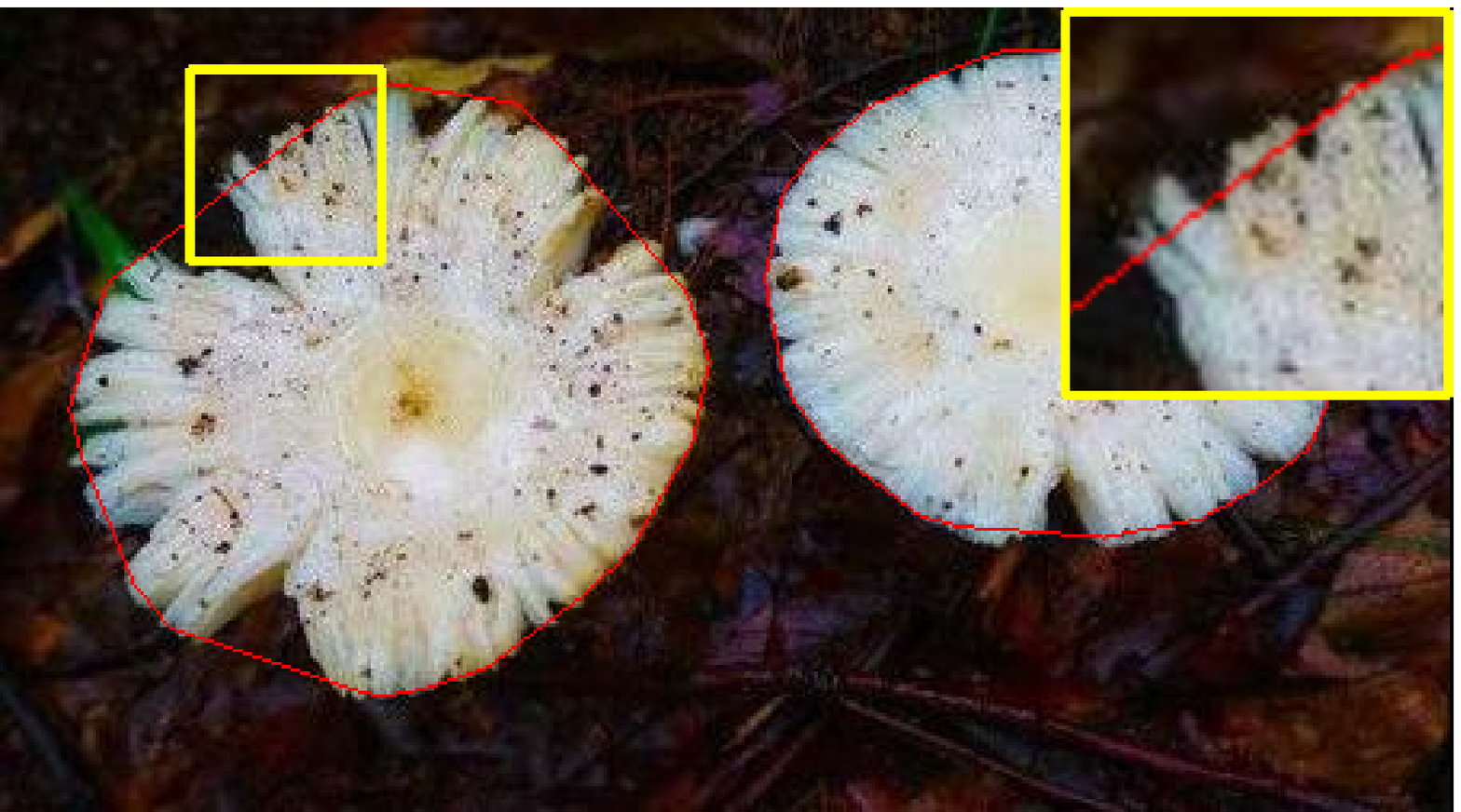}
	\includegraphics[height=1.6cm, width=1.6cm]{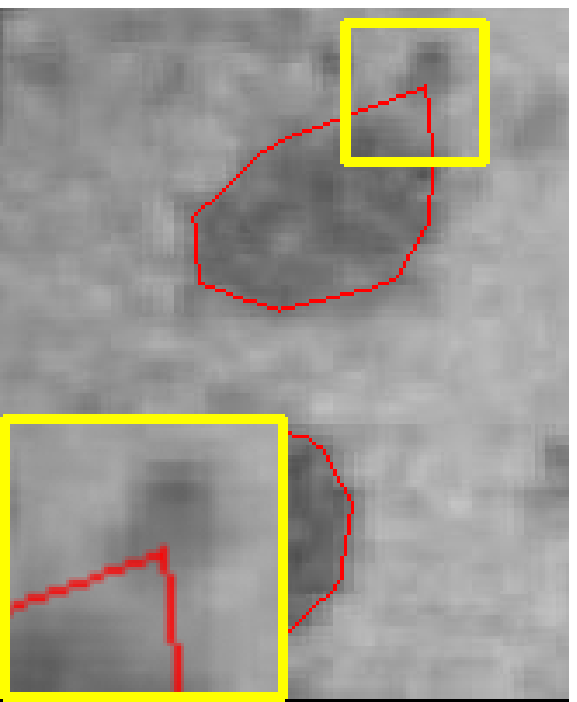}\\
	
	\includegraphics[height=1.6cm, width=1.6cm]{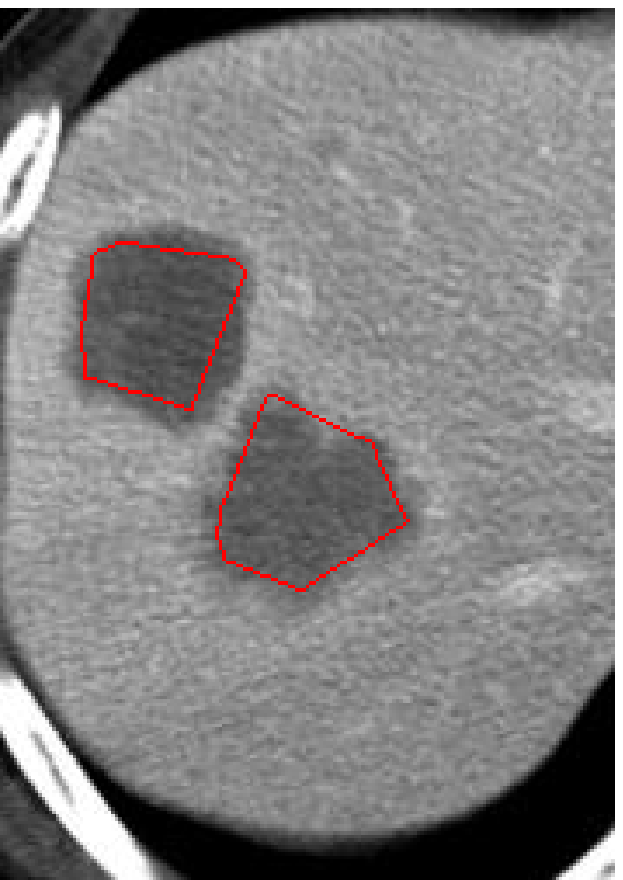}
	\includegraphics[height=1.6cm, width=1.6cm]{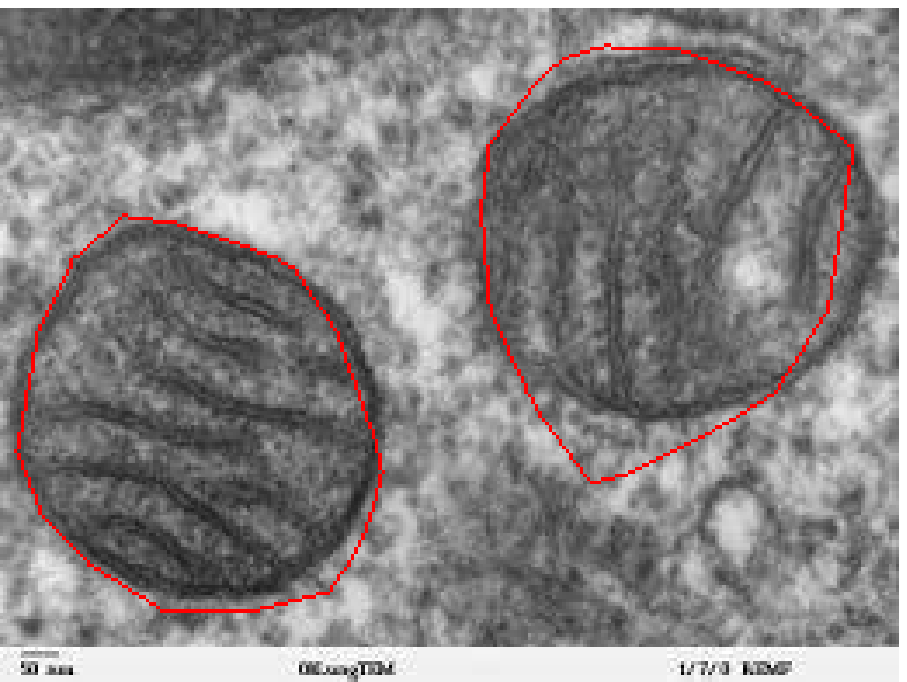}
	\includegraphics[height=1.6cm, width=1.6cm]{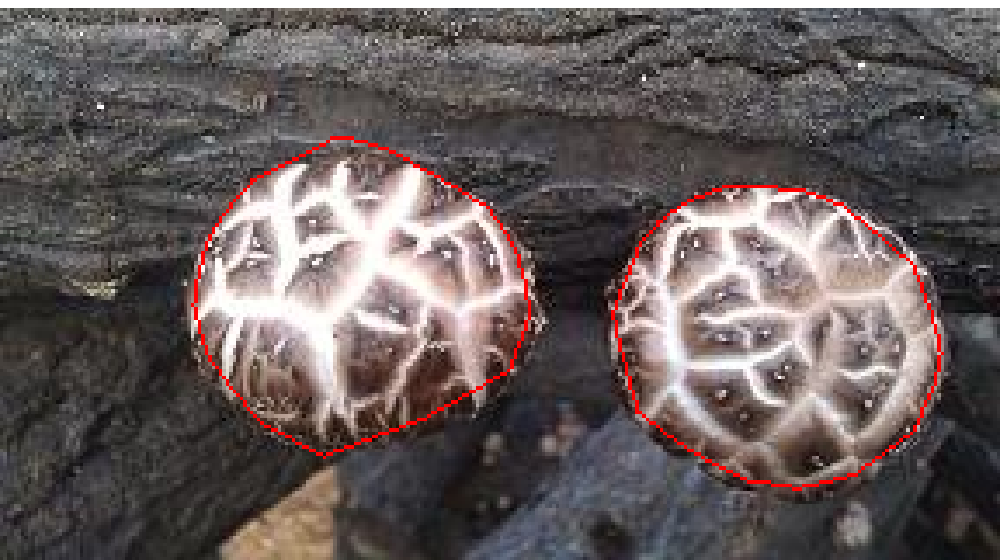}
	\includegraphics[height=1.6cm, width=1.6cm]{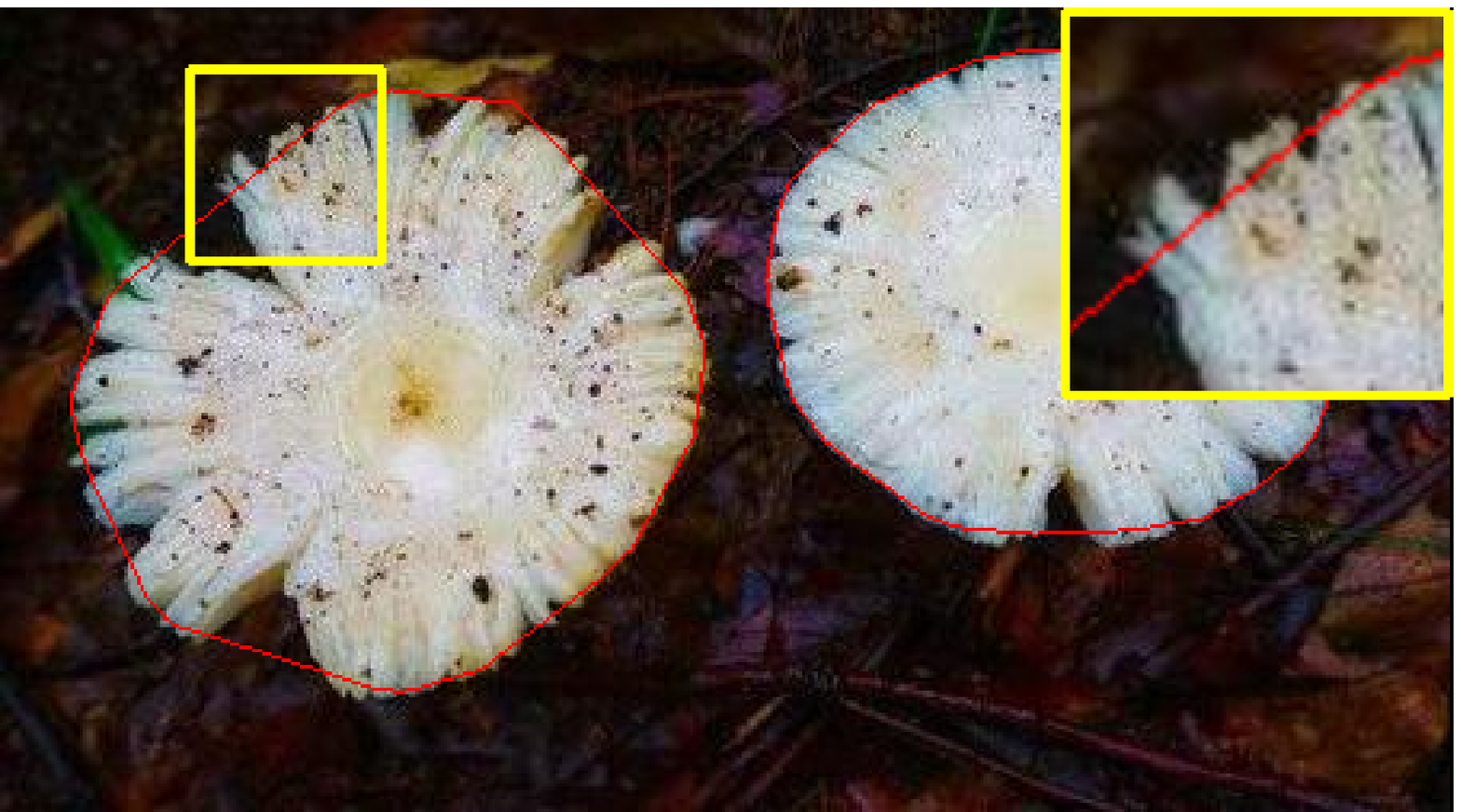}
	\includegraphics[height=1.6cm, width=1.6cm]{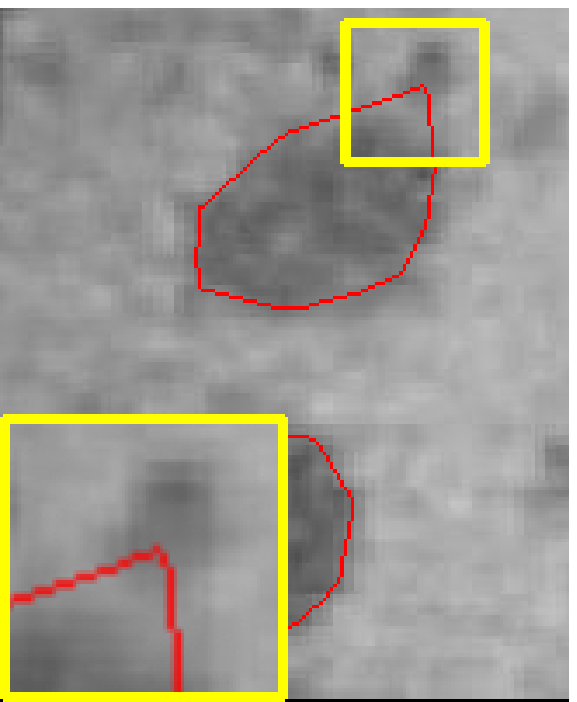}\\
	
	\includegraphics[height=1.6cm, width=1.6cm]{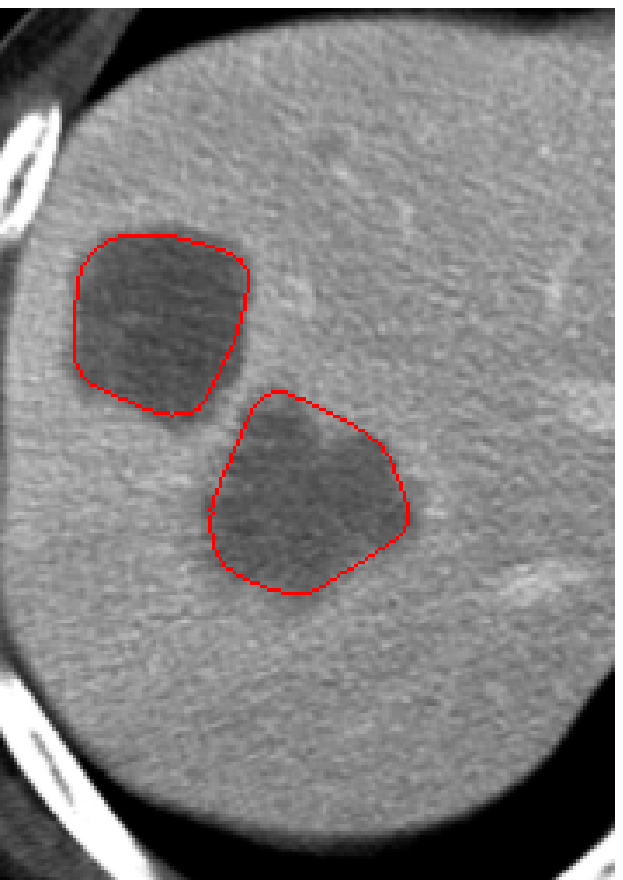}
	\includegraphics[height=1.6cm, width=1.6cm]{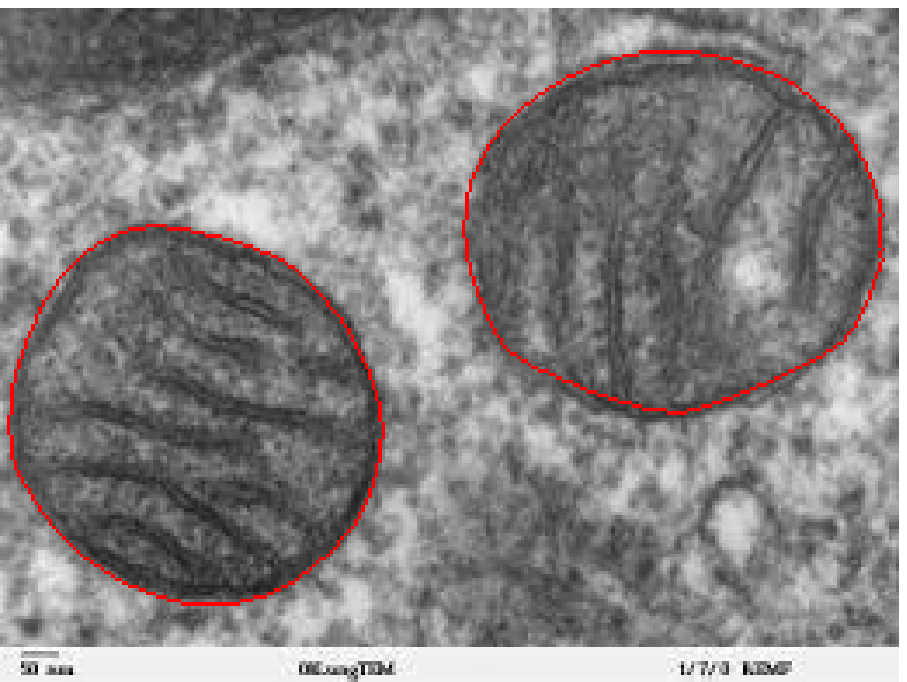}
	\includegraphics[height=1.6cm, width=1.6cm]{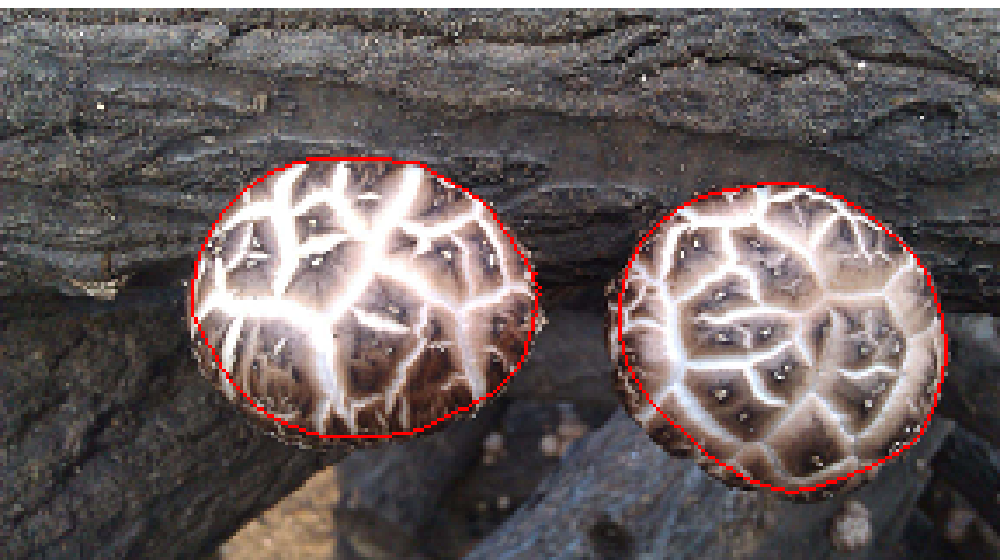}
	\includegraphics[height=1.6cm, width=1.6cm]{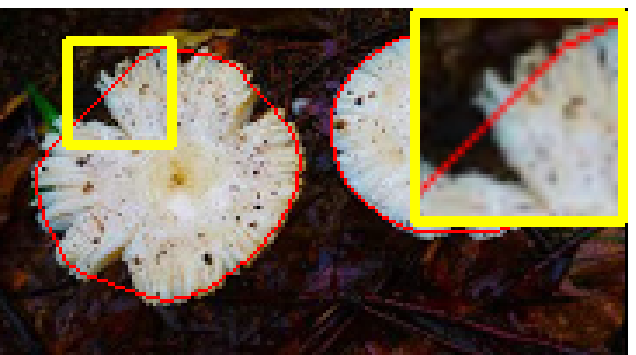}
	\includegraphics[height=1.6cm, width=1.6cm]{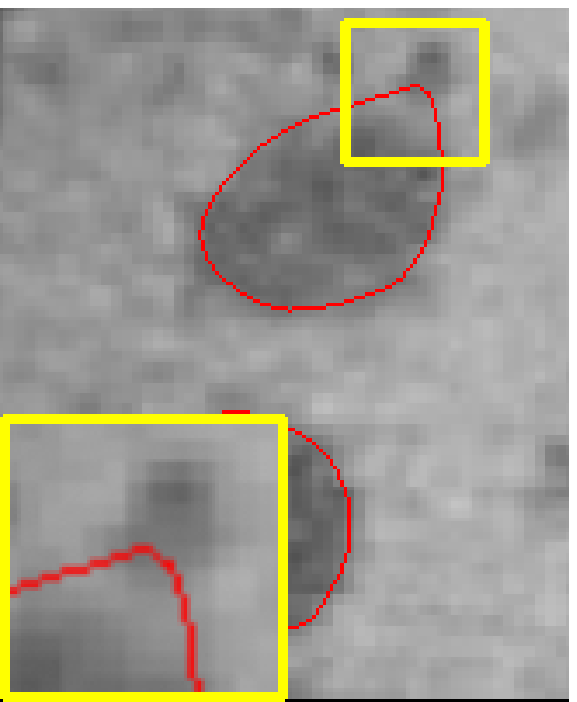}
	
	\includegraphics[height=1.6cm, width=1.6cm]{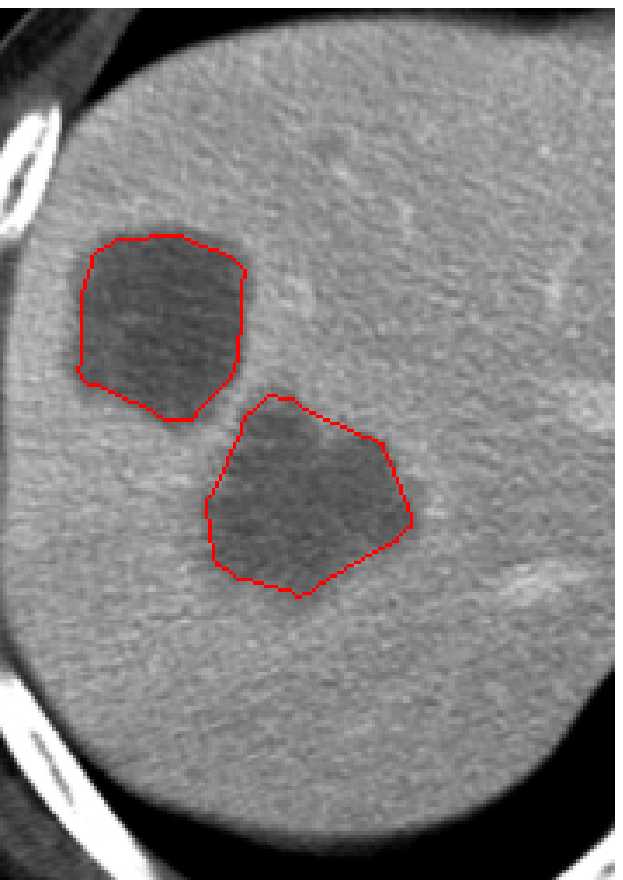}
	\includegraphics[height=1.6cm, width=1.6cm]{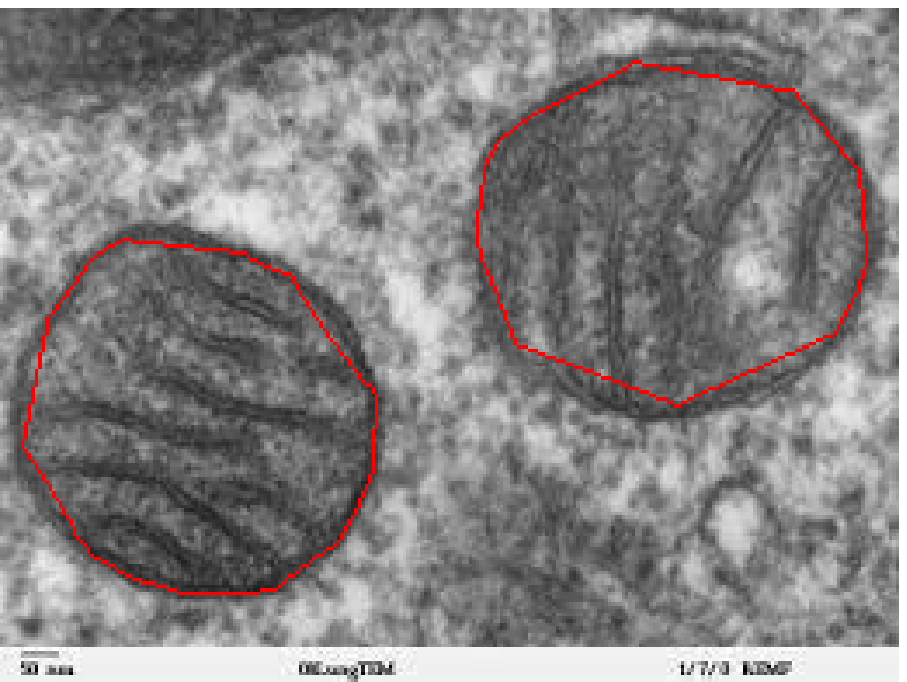}
	\includegraphics[height=1.6cm, width=1.6cm]{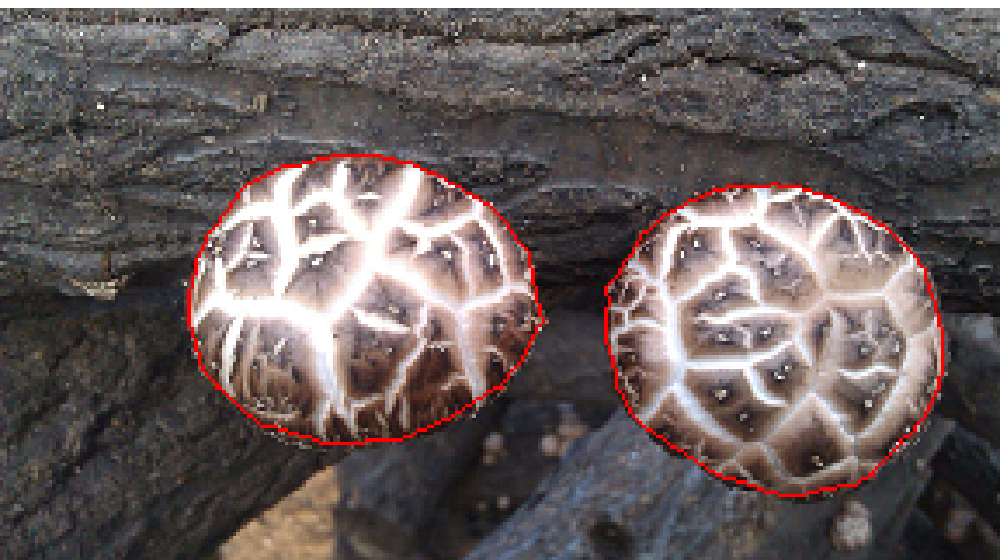}
	\includegraphics[height=1.6cm, width=1.6cm]{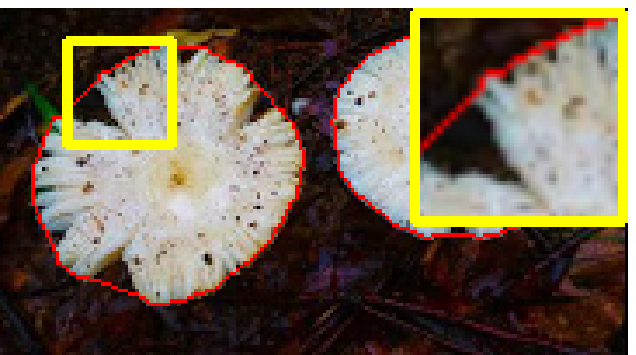}
	\includegraphics[height=1.6cm, width=1.6cm]{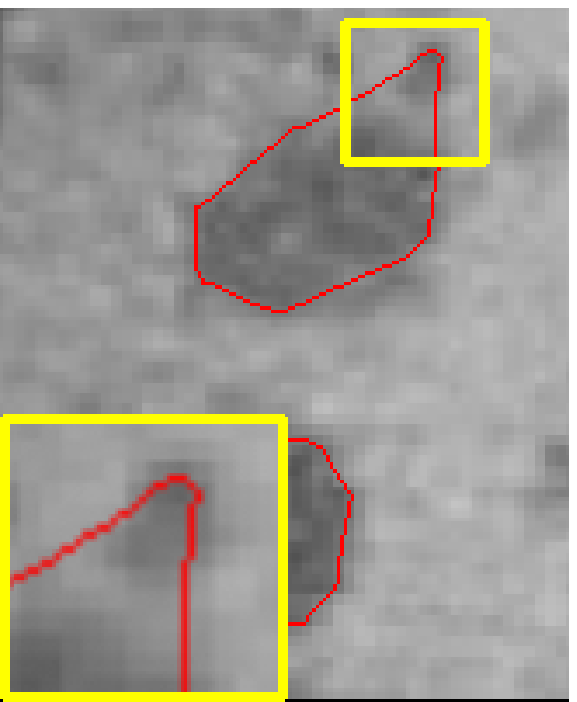}\\
	\caption{The images with labels and the results by 101(5), 101(11), LS and the proposed methods are shown from top to bottom. Parts of interest are zoomed. }
	\label{fig:muti}
\end{figure}

\begin{table*}[!t]
	\caption{Average shape-distances of the results and computing time for different methods.}
	\label{tab:mult_shdist_time}
	\centering
	\begin{tabular}{c|cccc|cccc}
		\hline
		&\multicolumn{4}{c}{Shape-distances (\%)} &\multicolumn{4}{|c}{Time (seconds)}\\
		\hline
		Methods&101(5) &101(11) &LS & Ours &101(5) &101(11) & LS & Ours\\
		
		avg& 19.961 &11.627 &5.939 &\pmb{5.809} &{183.3} &666.5 & 47.8&{154.6} \\
		\hline
	\end{tabular}
\end{table*}
Comparing the visual and quantitative accuracy in Fig. \ref{fig:muti} and Tab. \ref{tab:mult_shdist_time},
we can clearly observe that the proposed method can extract the objects and simultaneously preserve their convexity,
but the 1-0-1 and LS methods fail for some of them.
Taking the last two images in Fig. \ref{fig:muti} as examples,
the 1-0-1 method can't
catch the objects completely (e.g. the zooming of the left mushroom)
the both convex objects of interest.
The faults of the 1-0-1 method can be observed for other images as well.
For the LS method, the superiority of the proposed method can be observed by comparing the results of the last two images in Fig. \ref{fig:muti}, where
the LS method can not catch the object completely (e.g. the zooming of the left mushroom).
In addition, the average shape-distance for the proposed method is $5.809\%$, which is one half of the one obtained by the 101(5) and 101(11) methods and less than the one by LS method. 

As for the computational time,
our proposed method is more efficient than the 1-0-1 method, and it is at least $4$ times faster than 101(11) from Tab. \ref{tab:mult_shdist_time}.
This experiment illustrates that our proposed method is superior to the 1-0-1 method in the sense of segmentation accuracy and computation efficiency further. It is observed that the
computational time for the LS method is much less than the proposed method. The reason for this is that the LS method use only one level set function for multiple objects segmentation, i.e. the computation time does not increase with the number of objects, while the computational time of the proposed method will increase dramatically with  the increasing of the number of objects because the
multiple binary functions are need. Factually, the proposed method can use only one binary function to represent multiple objects (see the experiments for convex hull computation), which will not increase the computational time. For this aspect, we will discuss in the future.

\subsubsection{Segmentation of convex ring object}
Convex ring, i.e. the curvature of the  outer  (inner) boundary
is positive (negative), is common in real life (see Fig. \ref{fig:ring_label}).
Although one can derive a more effective representation method for the convex ring shape using the result in Theorem \ref{th:single},
we apply the proposed method to convex ring extraction directly: segment the
inner background (convex) firstly to catch the inner boundary, and
then segment the combination of ring shape and the inner background to catch the outer boundary.

The test images are shown in the first row of Fig. \ref{fig:ring_label}, where the labels on the
inner and outer background and the object are subscribed in different colors.
The segmentation results are shown in the second row of Fig. \ref{fig:ring_label}.
We can see that boundaries of the convex ring shape are extracted accurately, and the convexity of the boundaries is preserved.
\begin{figure}[!t]
	\centering
	\includegraphics[height=1.5cm, width=1.5cm]{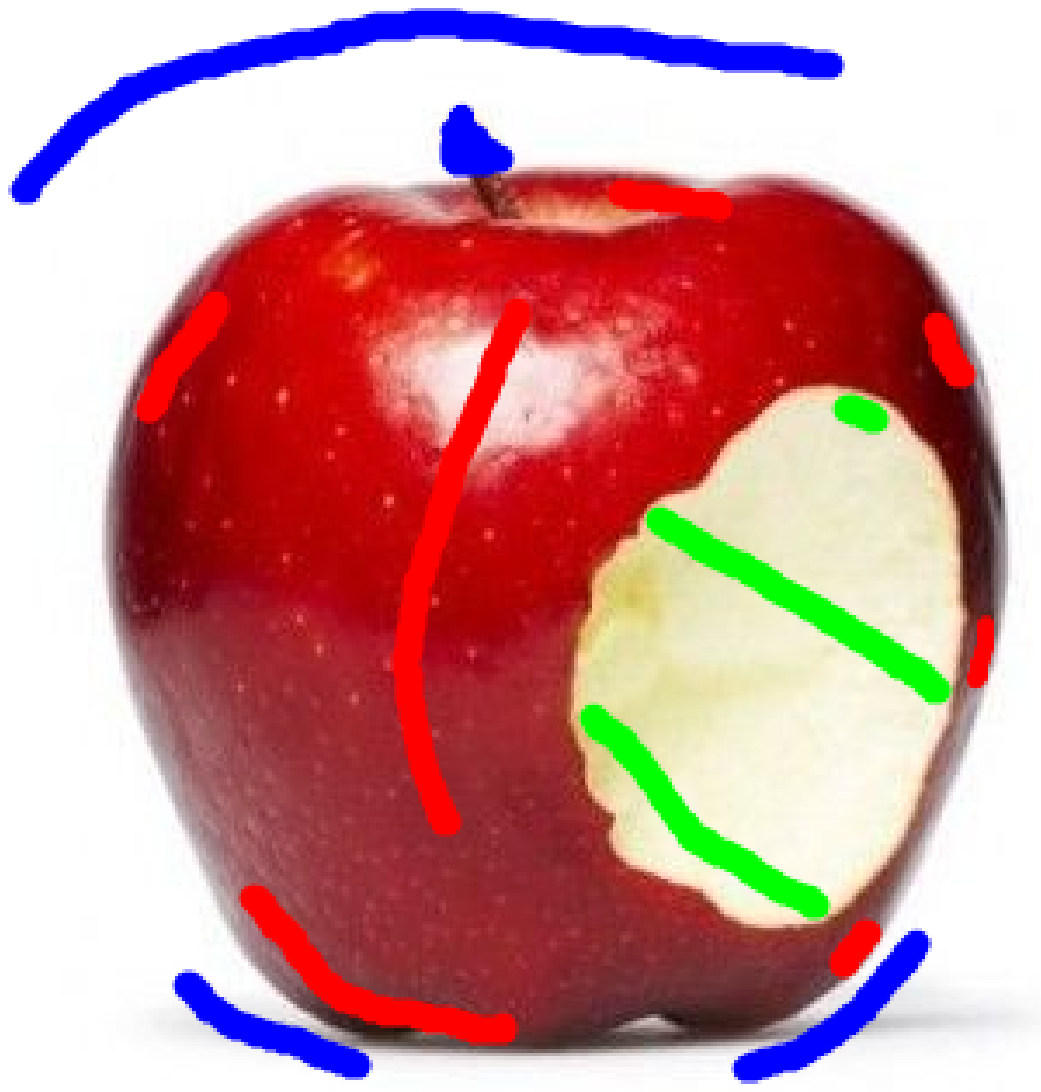}
	\includegraphics[height=1.5cm, width=1.5cm]{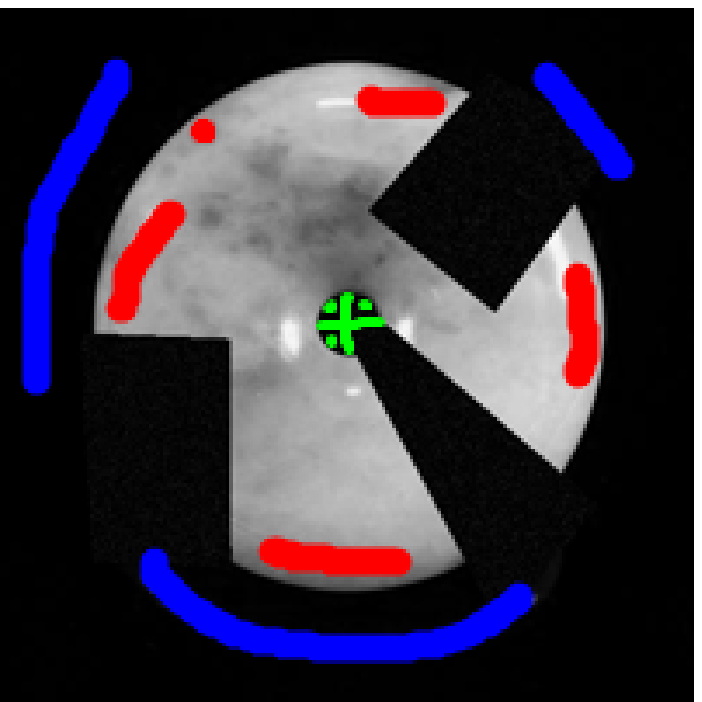}
	\includegraphics[height=1.5cm, width=1.5cm]{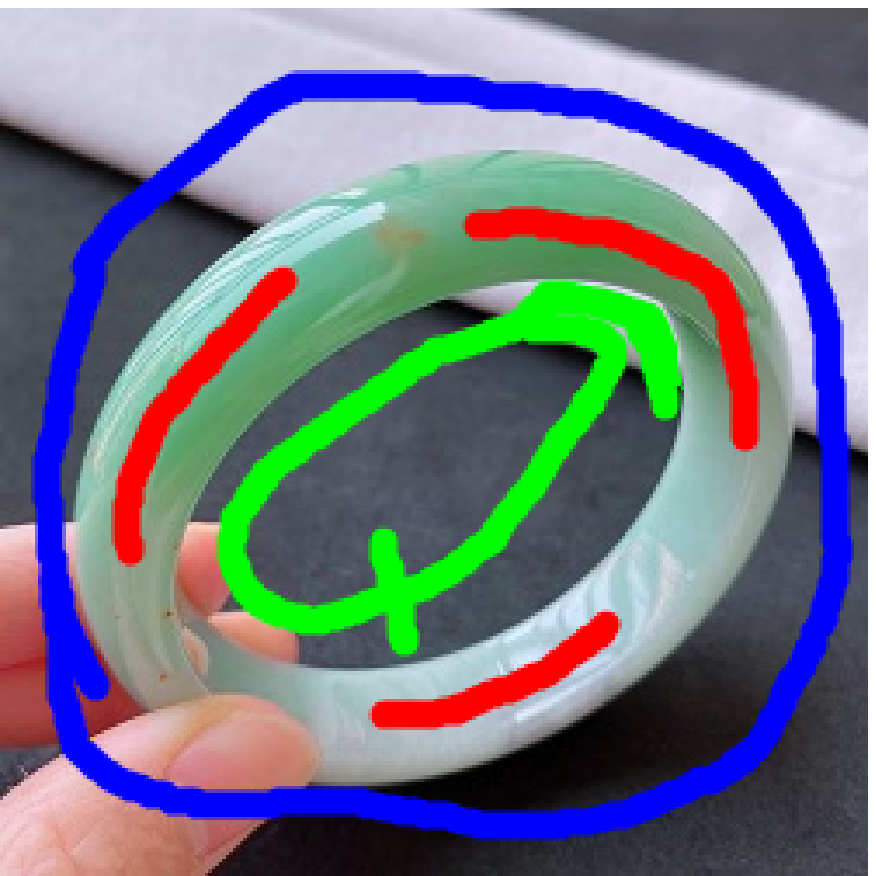}
	\includegraphics[height=1.5cm, width=1.5cm]{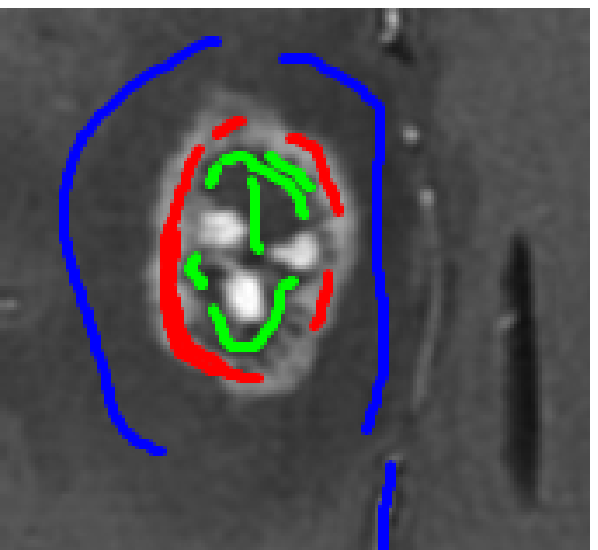}
	\includegraphics[height=1.5cm, width=1.5cm]{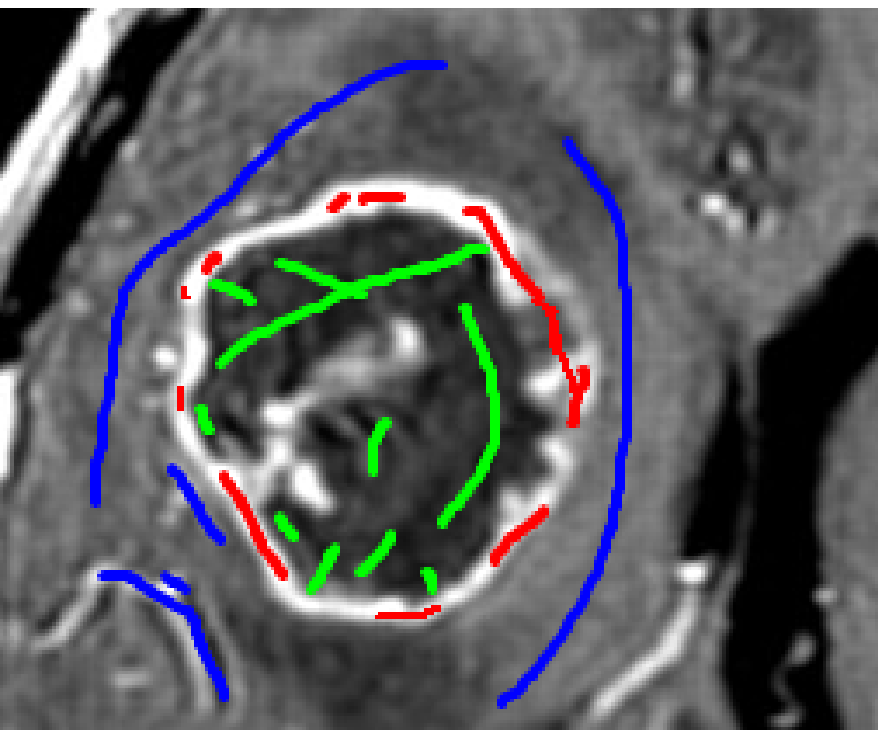}
	\includegraphics[height=1.5cm, width=1.5cm]{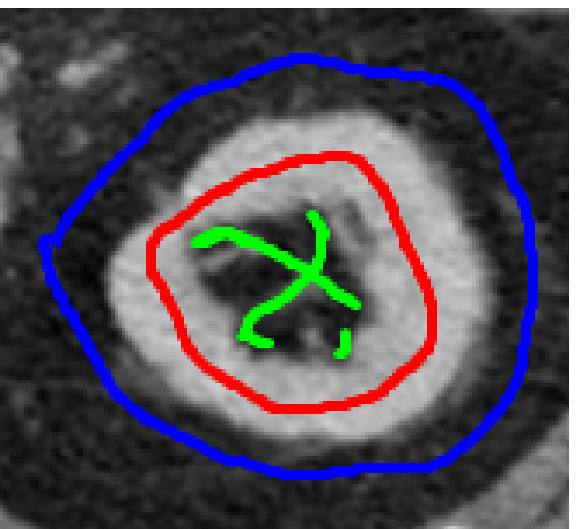}
	\includegraphics[height=1.5cm, width=1.5cm]{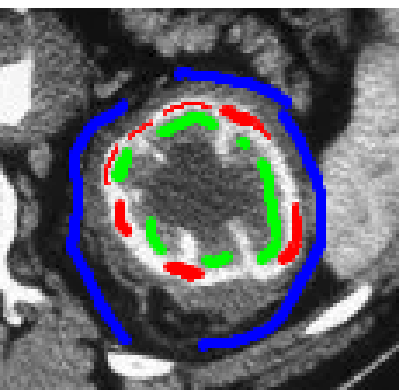}
	\includegraphics[height=1.5cm, width=1.5cm]{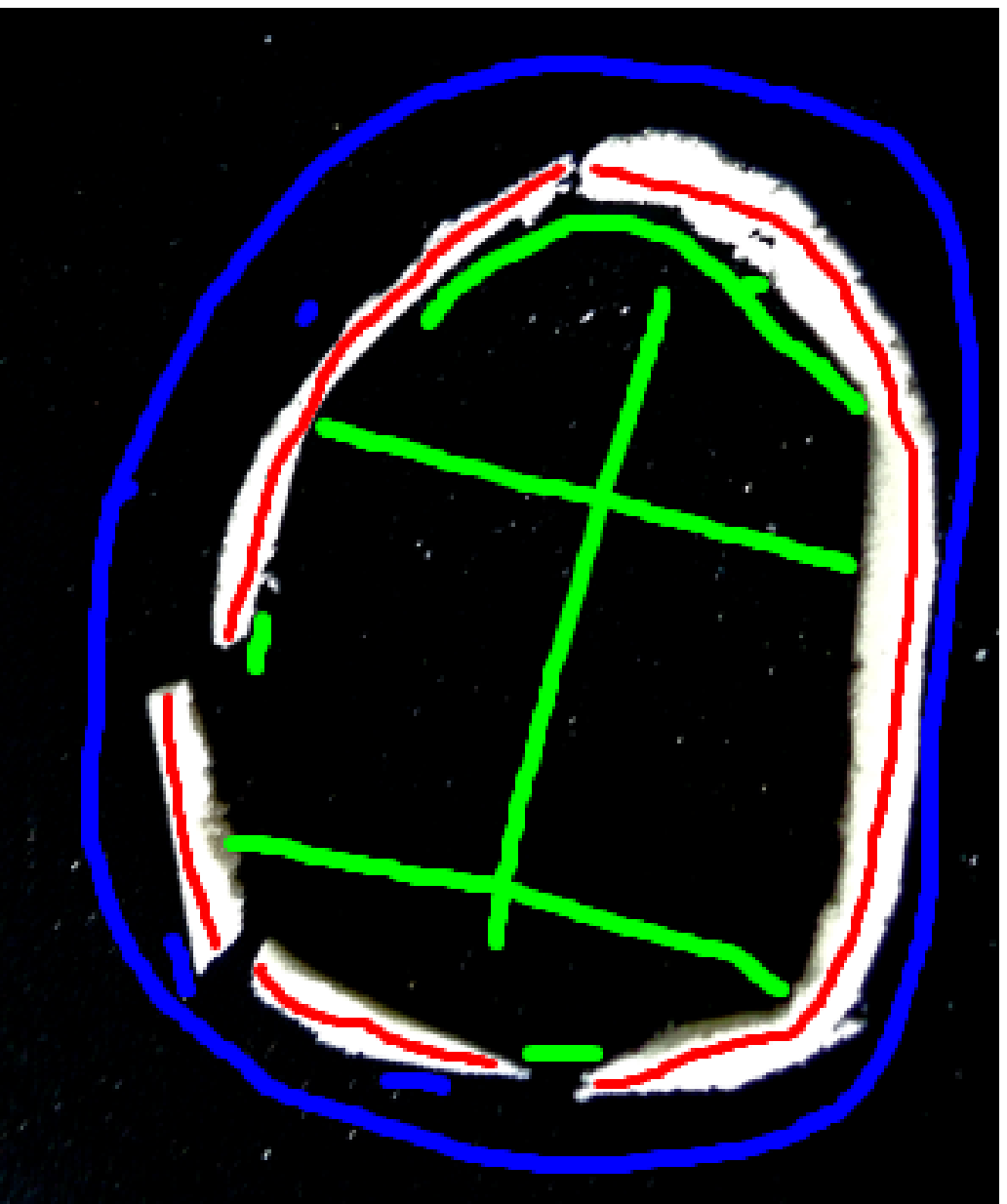}
	\includegraphics[height=1.5cm, width=1.5cm]{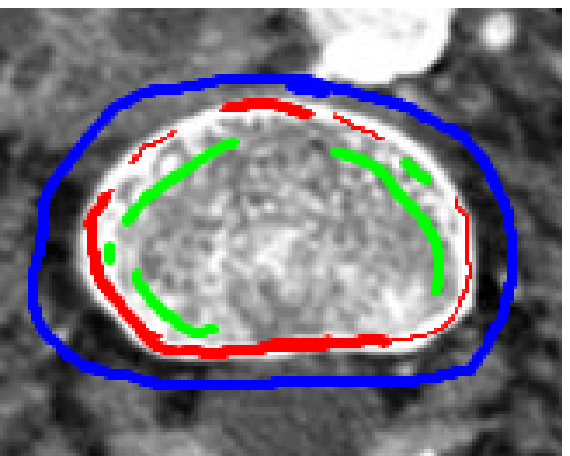}
	\includegraphics[height=1.5cm, width=1.5cm]{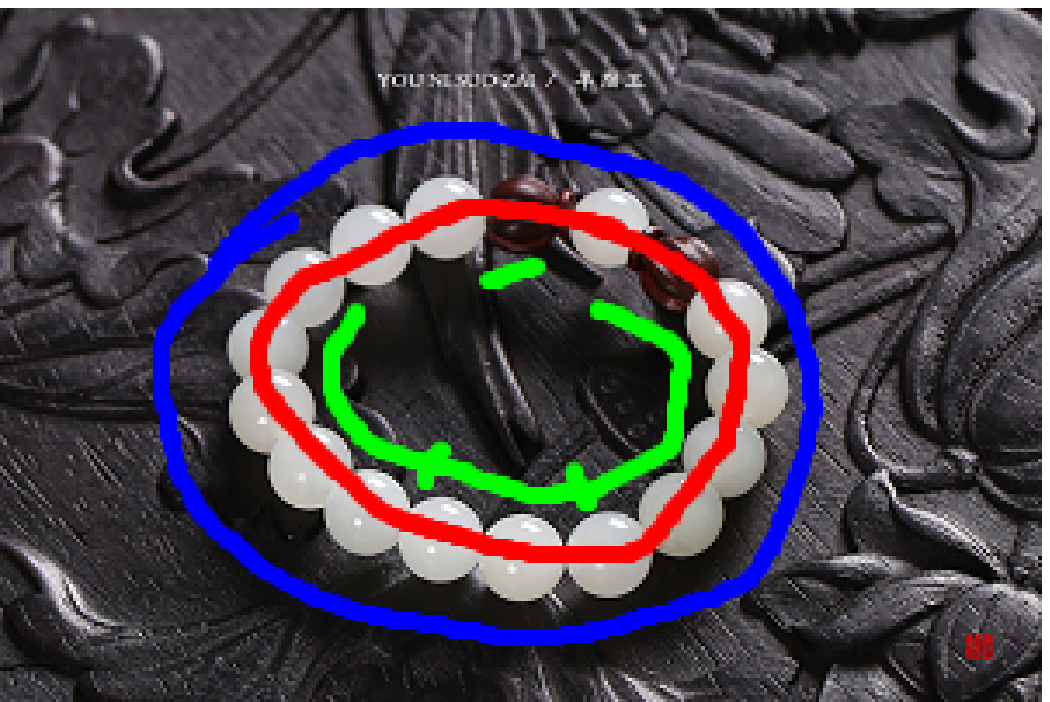}\\
	\includegraphics[height=1.5cm, width=1.5cm]{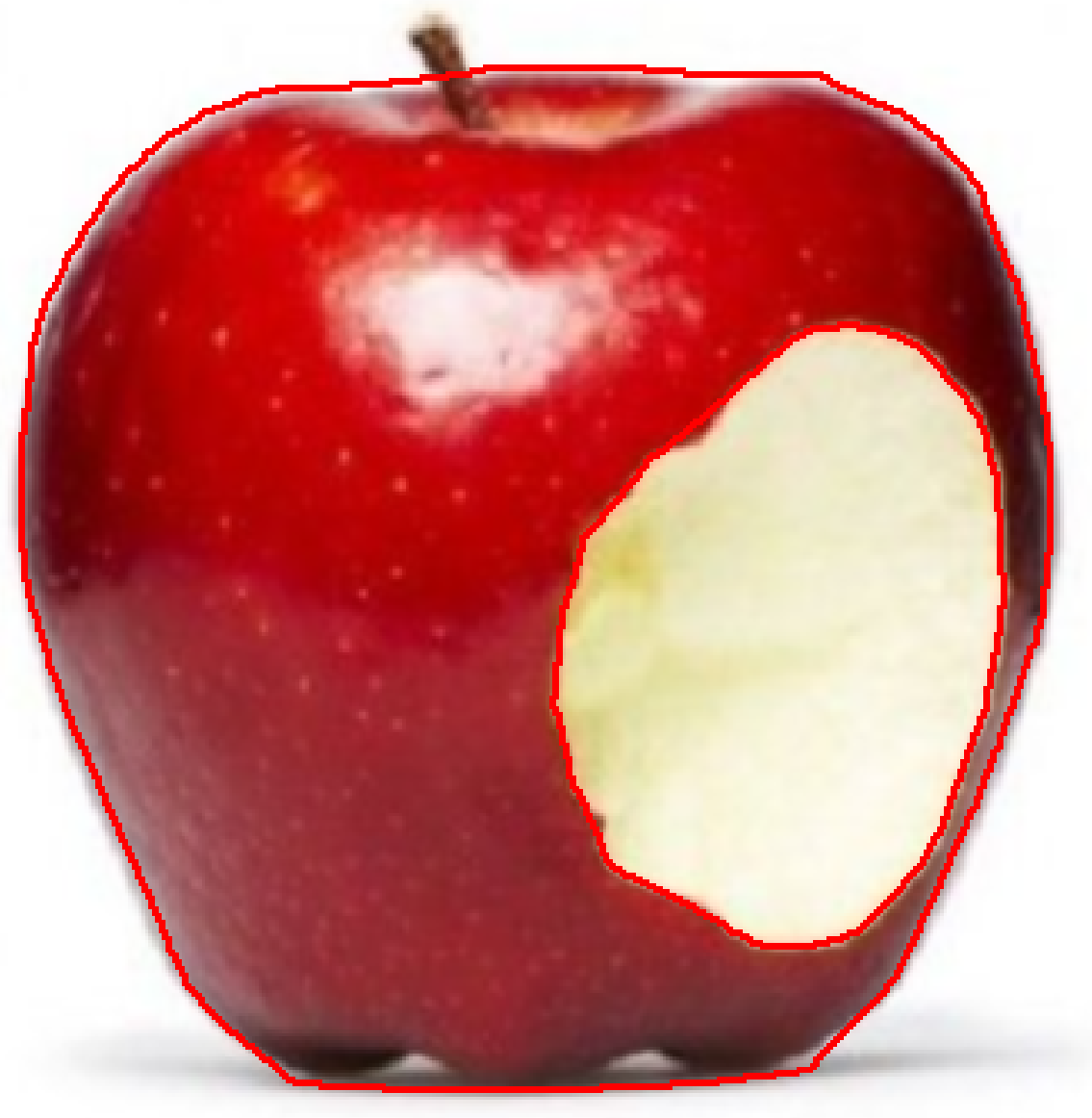}
	\includegraphics[height=1.5cm, width=1.5cm]{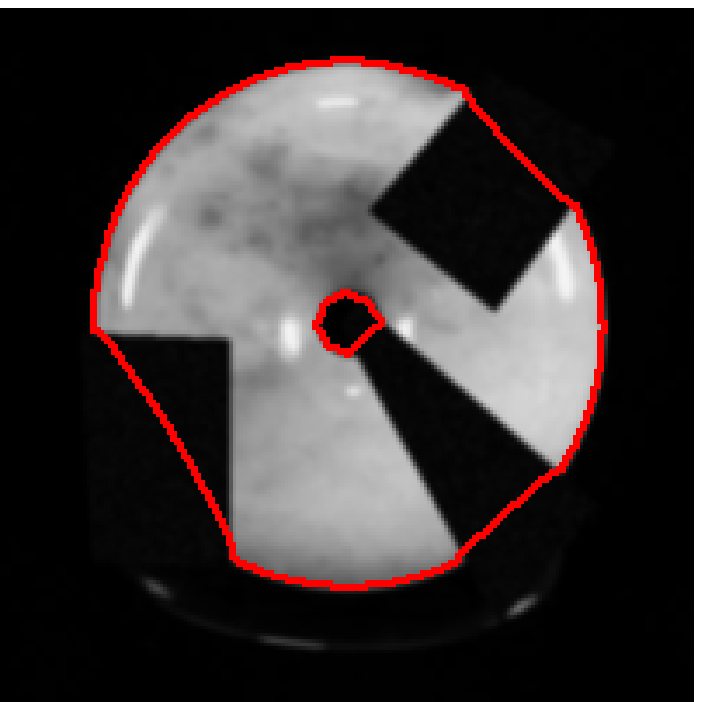}
	\includegraphics[height=1.5cm, width=1.5cm]{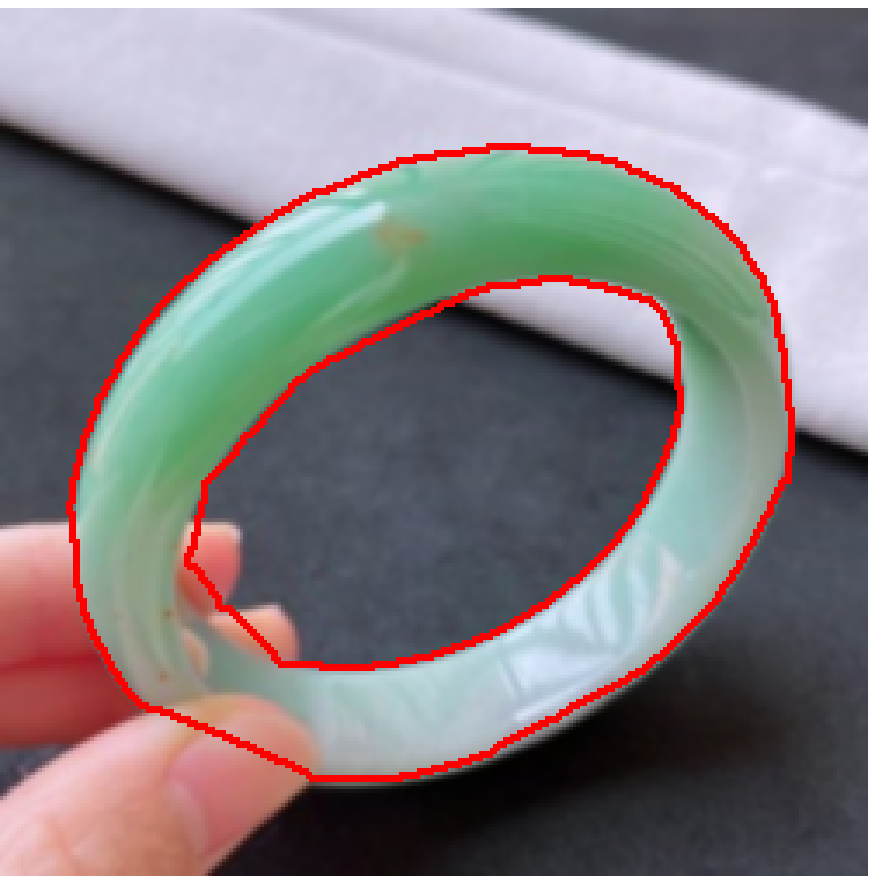}
	\includegraphics[height=1.5cm, width=1.5cm]{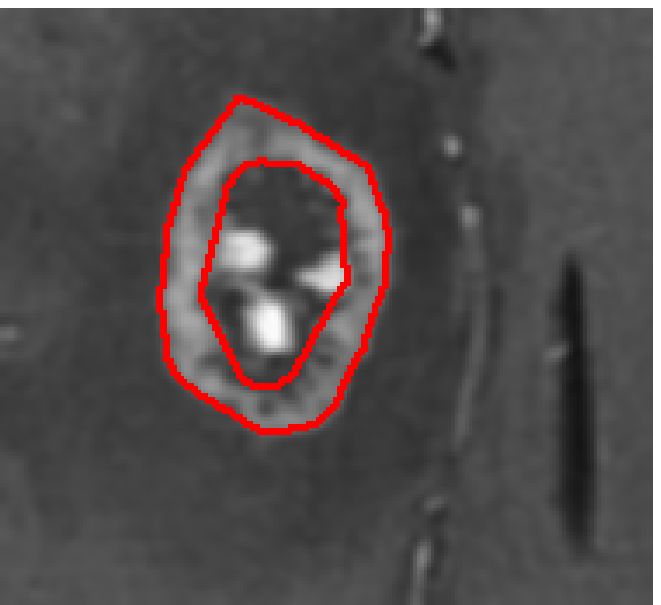}
	\includegraphics[height=1.5cm, width=1.5cm]{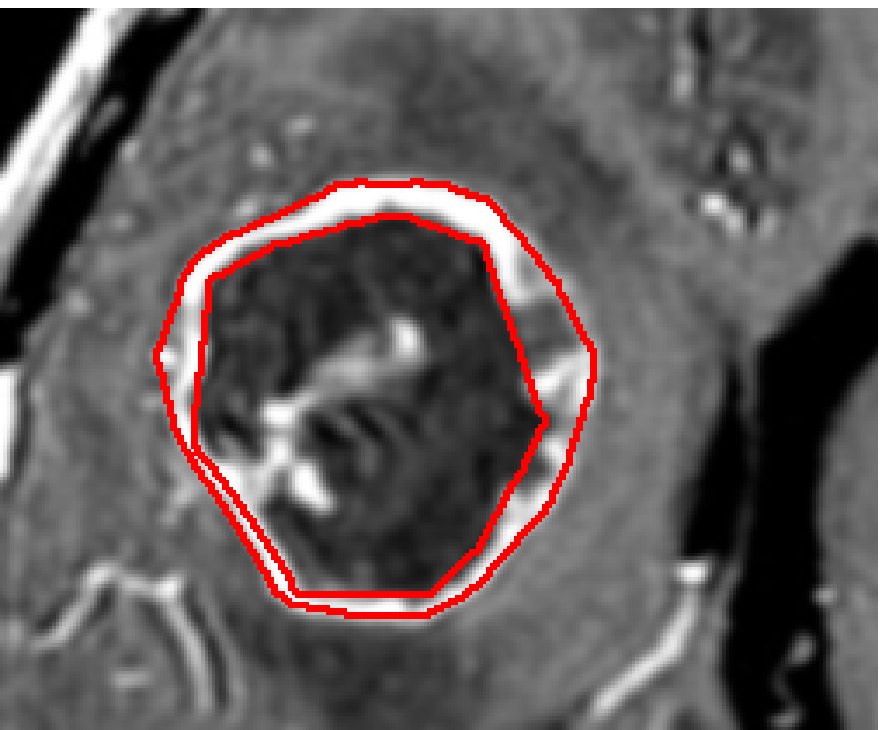}
	\includegraphics[height=1.5cm, width=1.5cm]{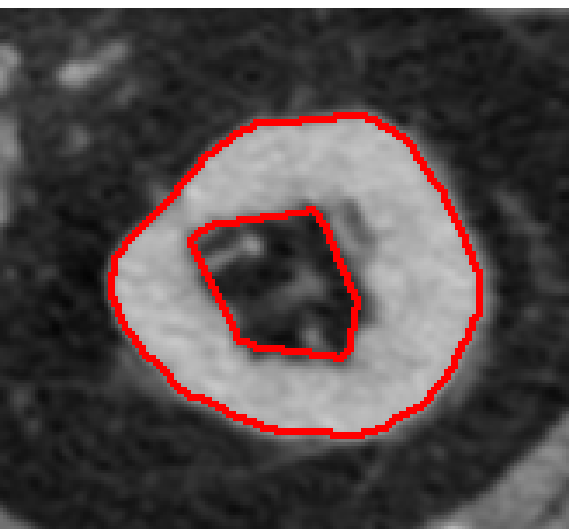}
	\includegraphics[height=1.5cm, width=1.5cm]{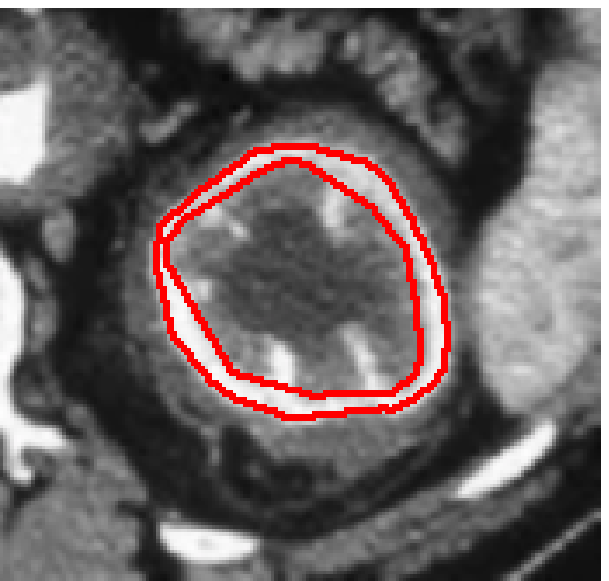}
	\includegraphics[height=1.5cm, width=1.5cm]{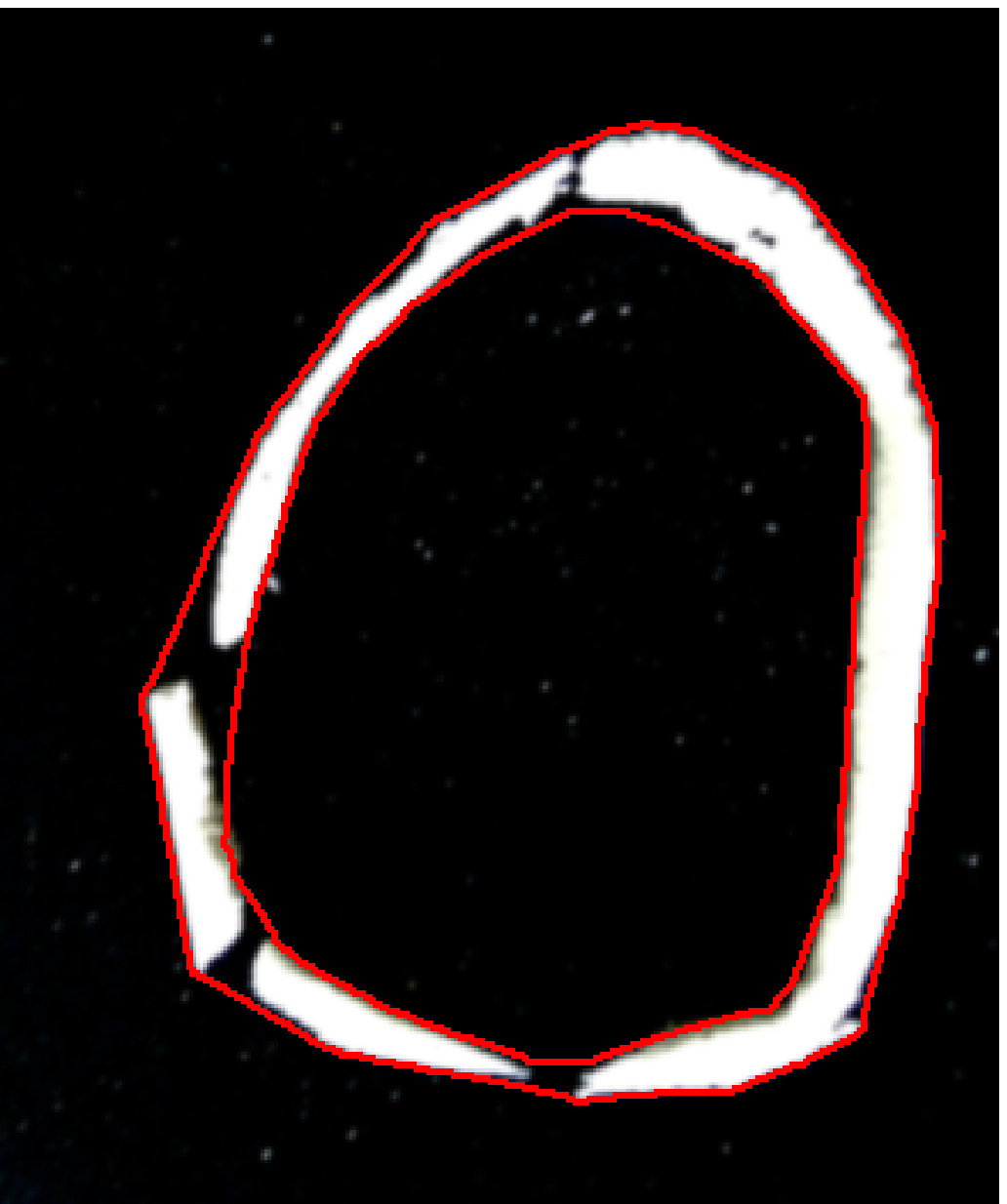}
	\includegraphics[height=1.5cm, width=1.5cm]{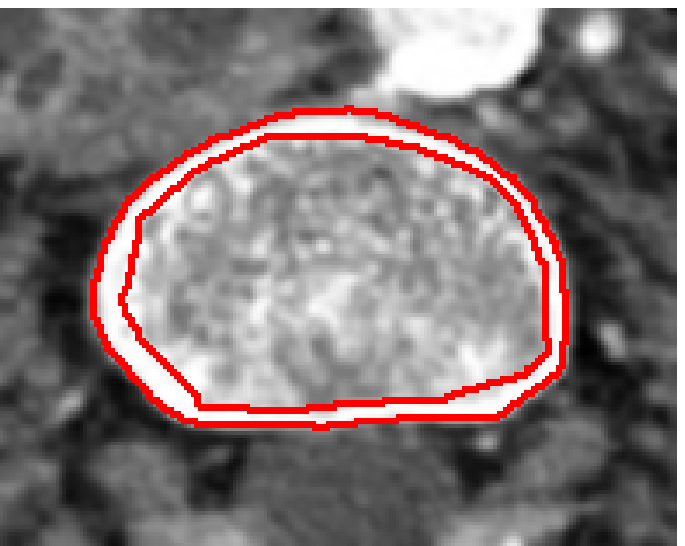}
	\includegraphics[height=1.5cm, width=1.5cm]{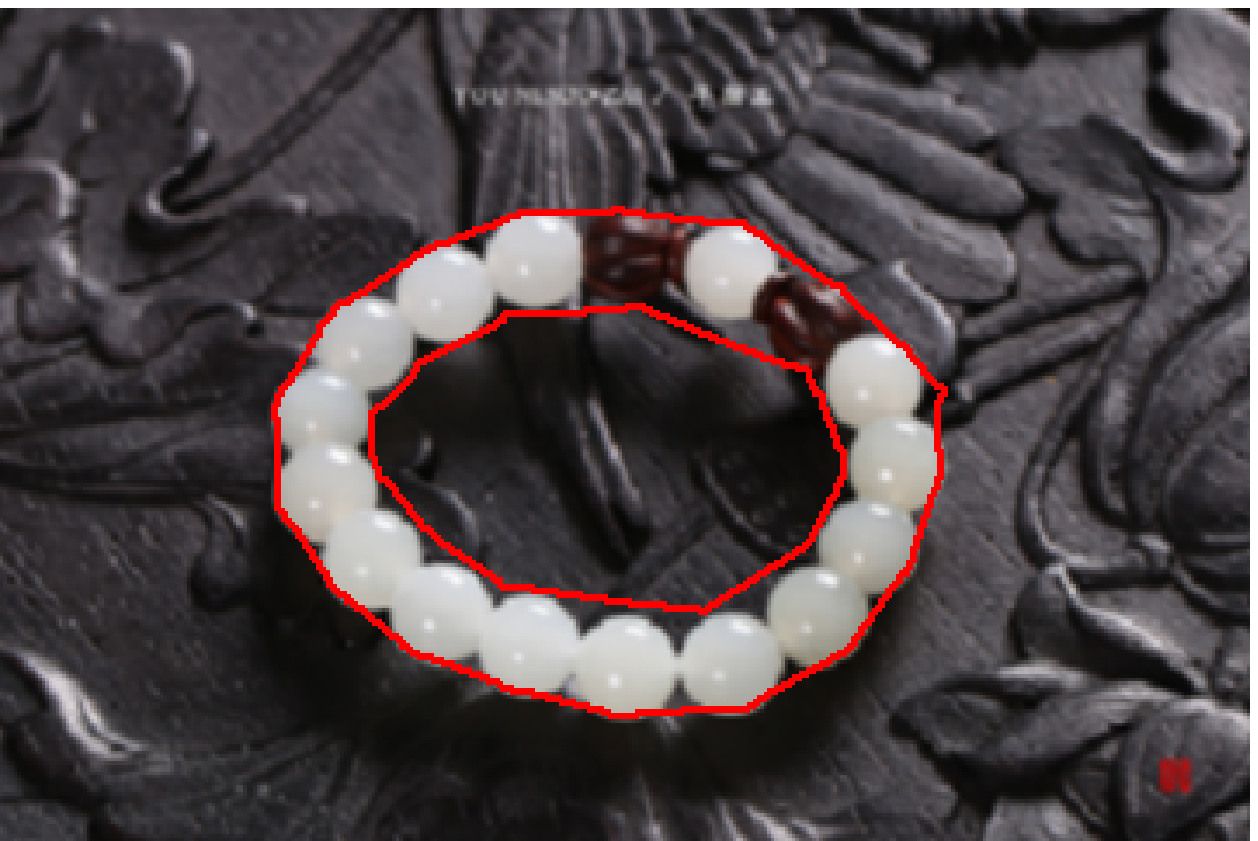}
	\caption{Tested images with labels and segmentation results correspoding.}
	\label{fig:ring_label}
\end{figure}

\subsubsection{Experiments using the interactive procedure}
The proposed interactive procedure is performed on some challenging images.  Because of weak and cluttered edges or nonuniform intensities of concerned objects (see Fig. \ref{fig:Interactive}),
it is necessary to apply the interactive procedure by
adding proper labels on the object and background to yield desirable result.
The labels and results are shown in Fig. \ref{fig:Interactive}, where the labels given at different steps are marked in different colors.

\begin{figure}[!t]
	\centering
	\includegraphics[height=1.6cm, width=2cm]{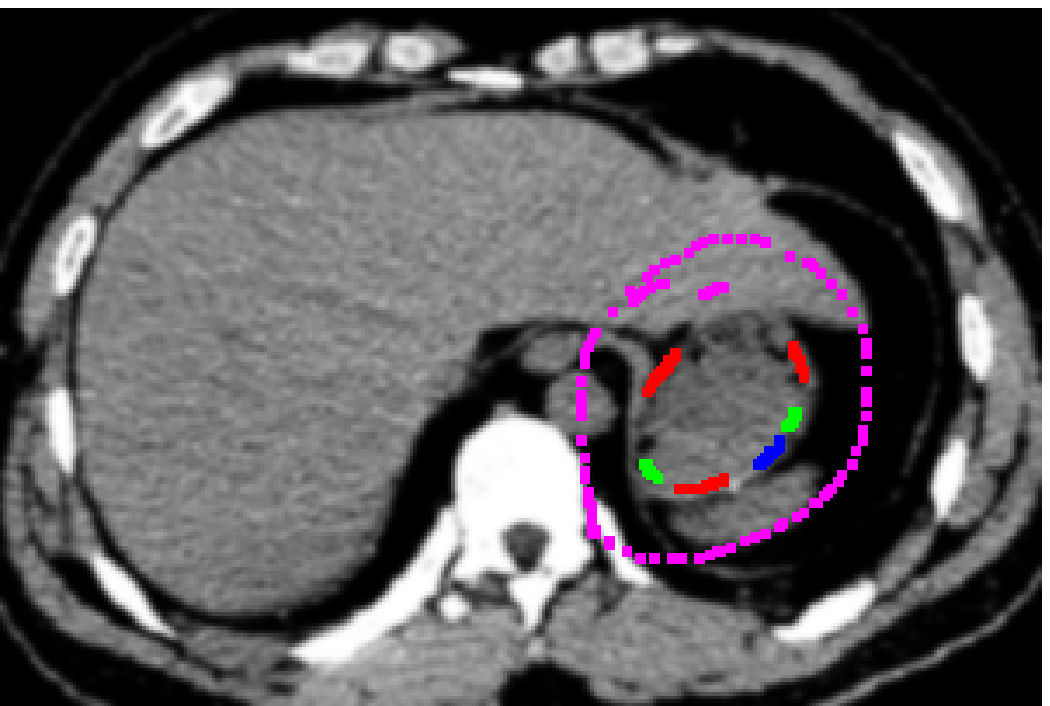}
	\includegraphics[height=1.6cm, width=2cm]{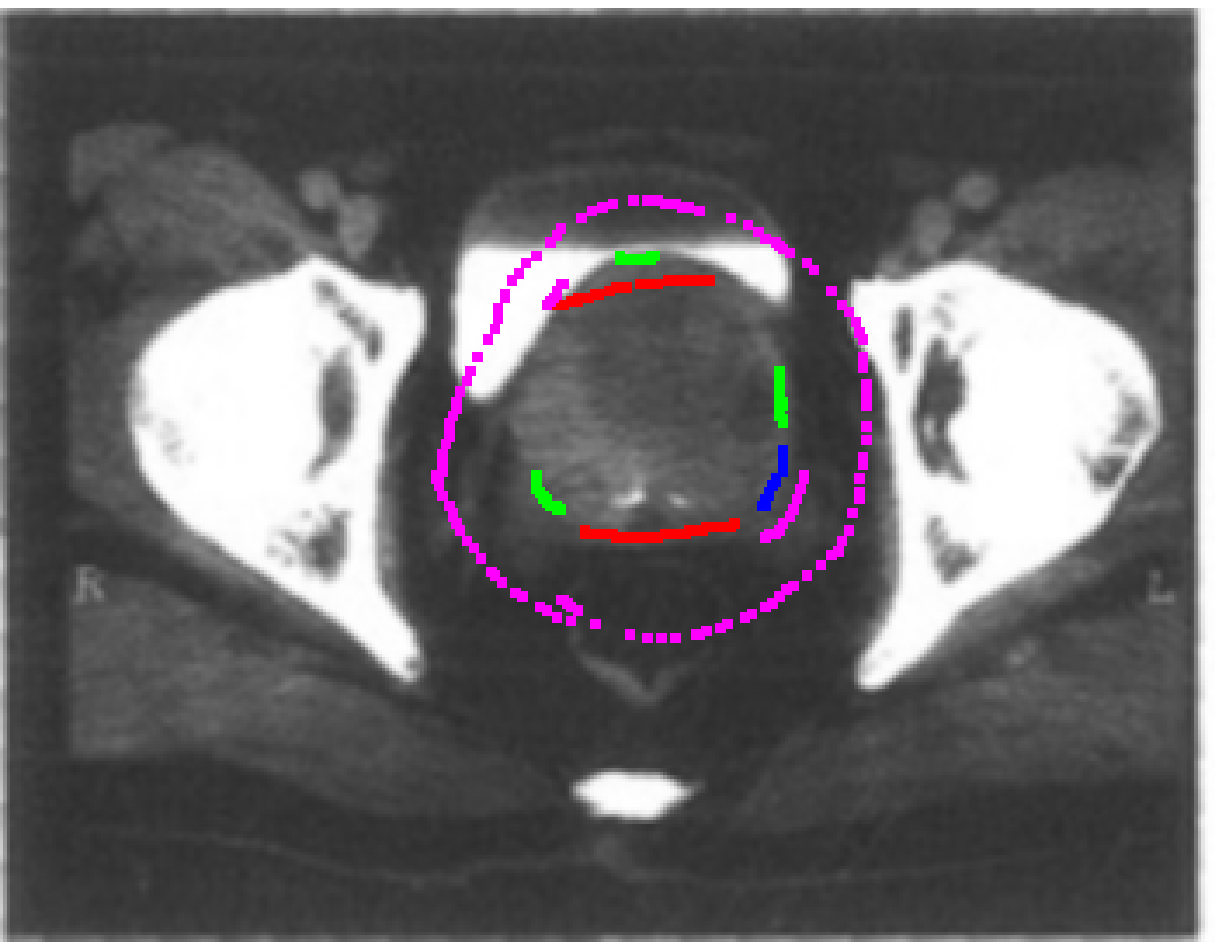}
	\includegraphics[height=1.6cm, width=2cm]{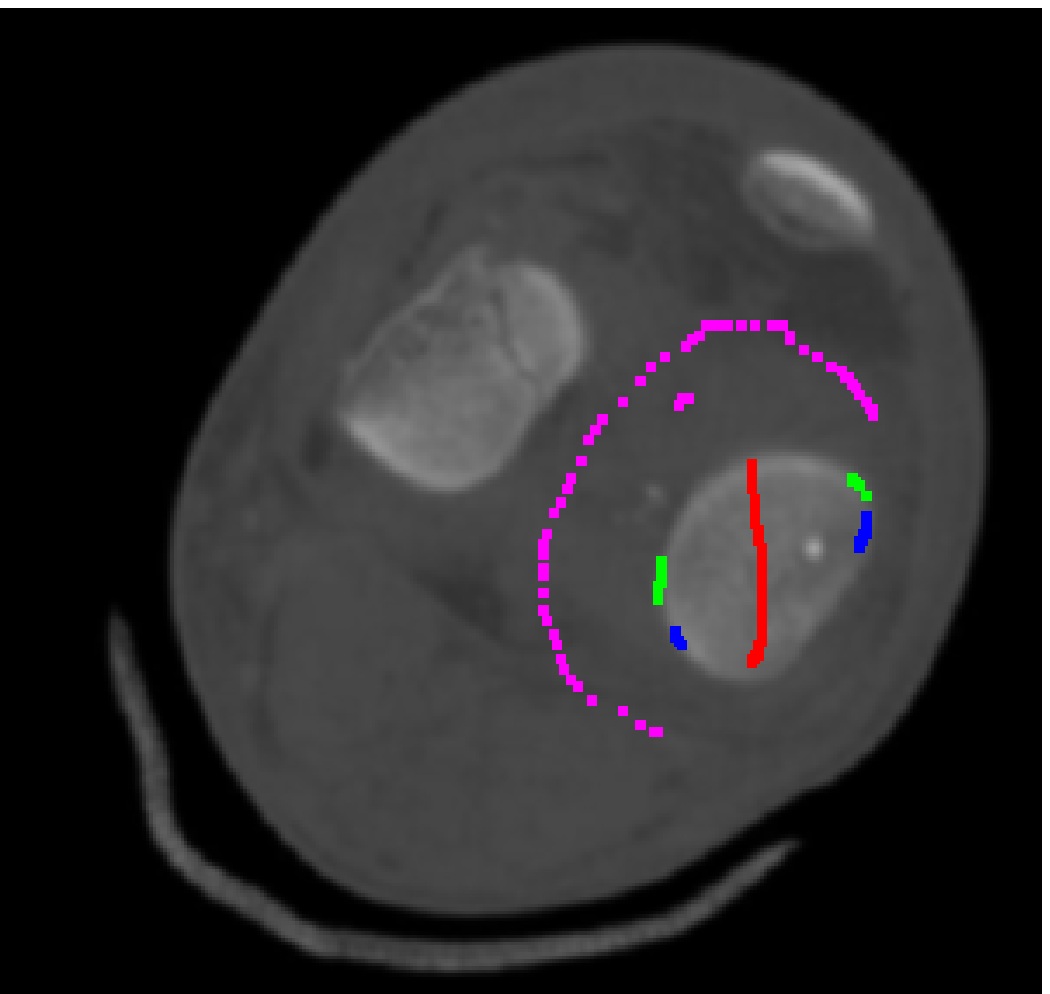}
	\includegraphics[height=1.6cm, width=2cm]{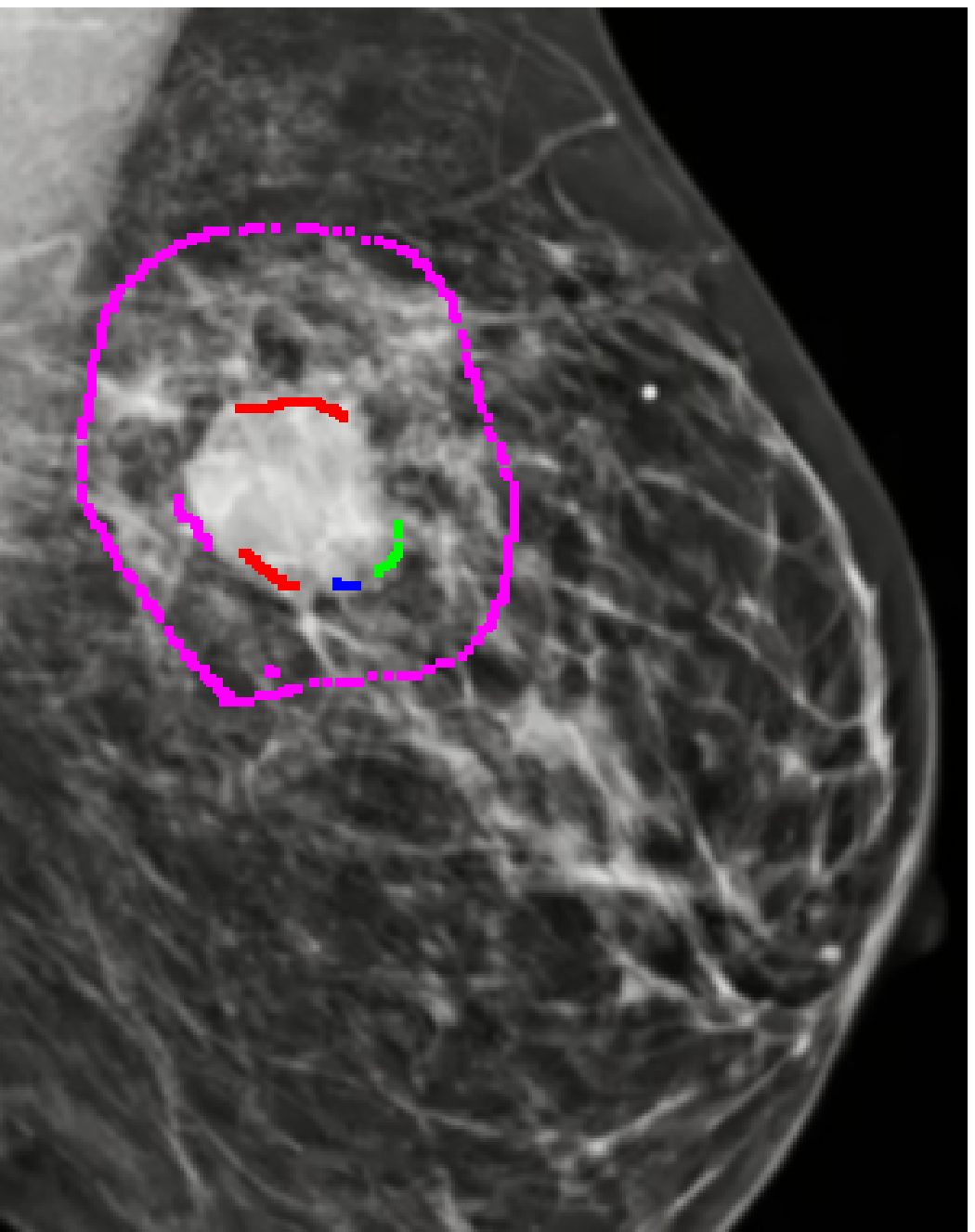}\\
	\includegraphics[height=1.6cm, width=2cm]{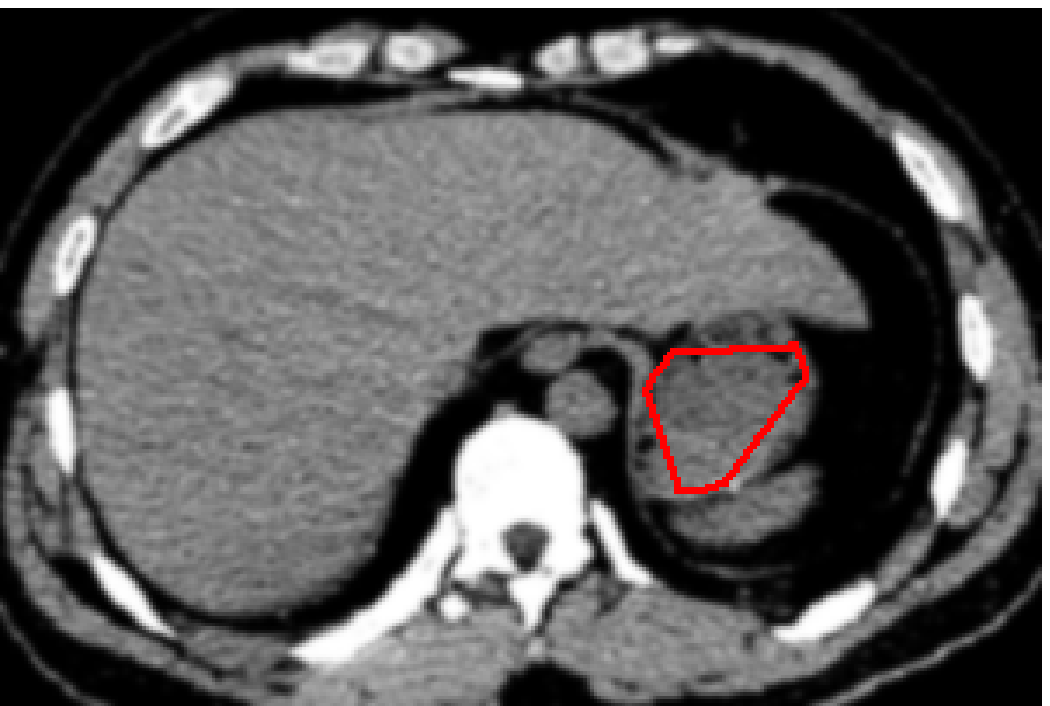}
	\includegraphics[height=1.6cm, width=2cm]{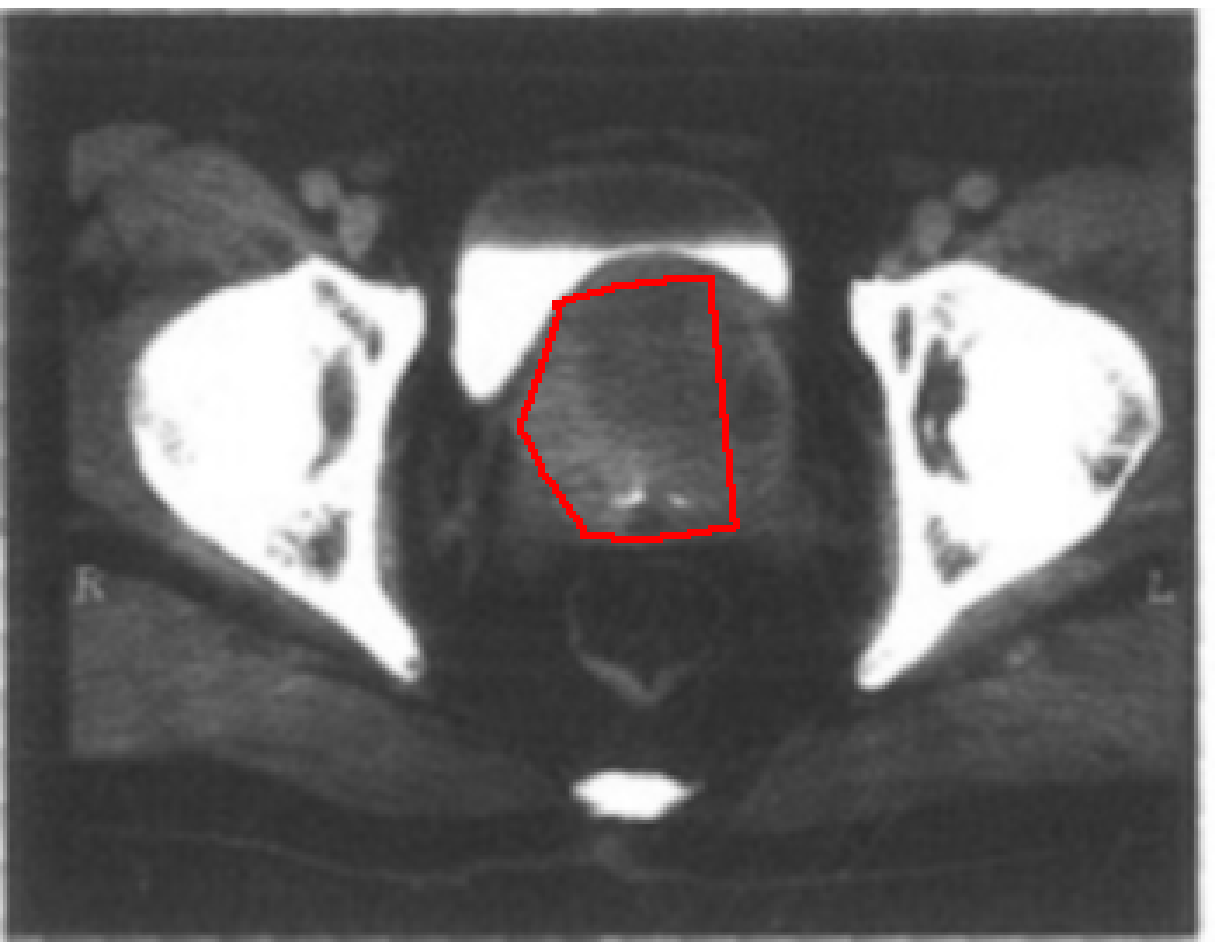}
	\includegraphics[height=1.6cm, width=2cm]{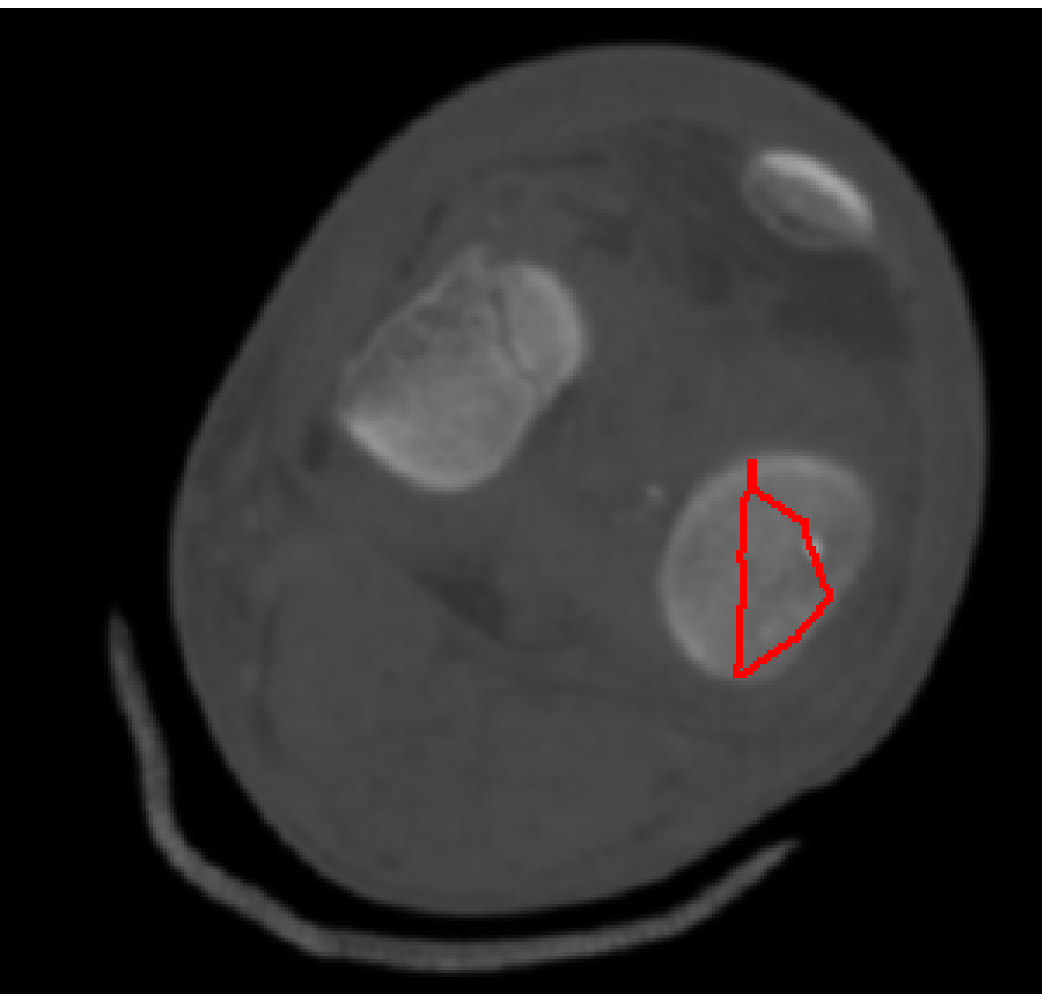}
	\includegraphics[height=1.6cm, width=2cm]{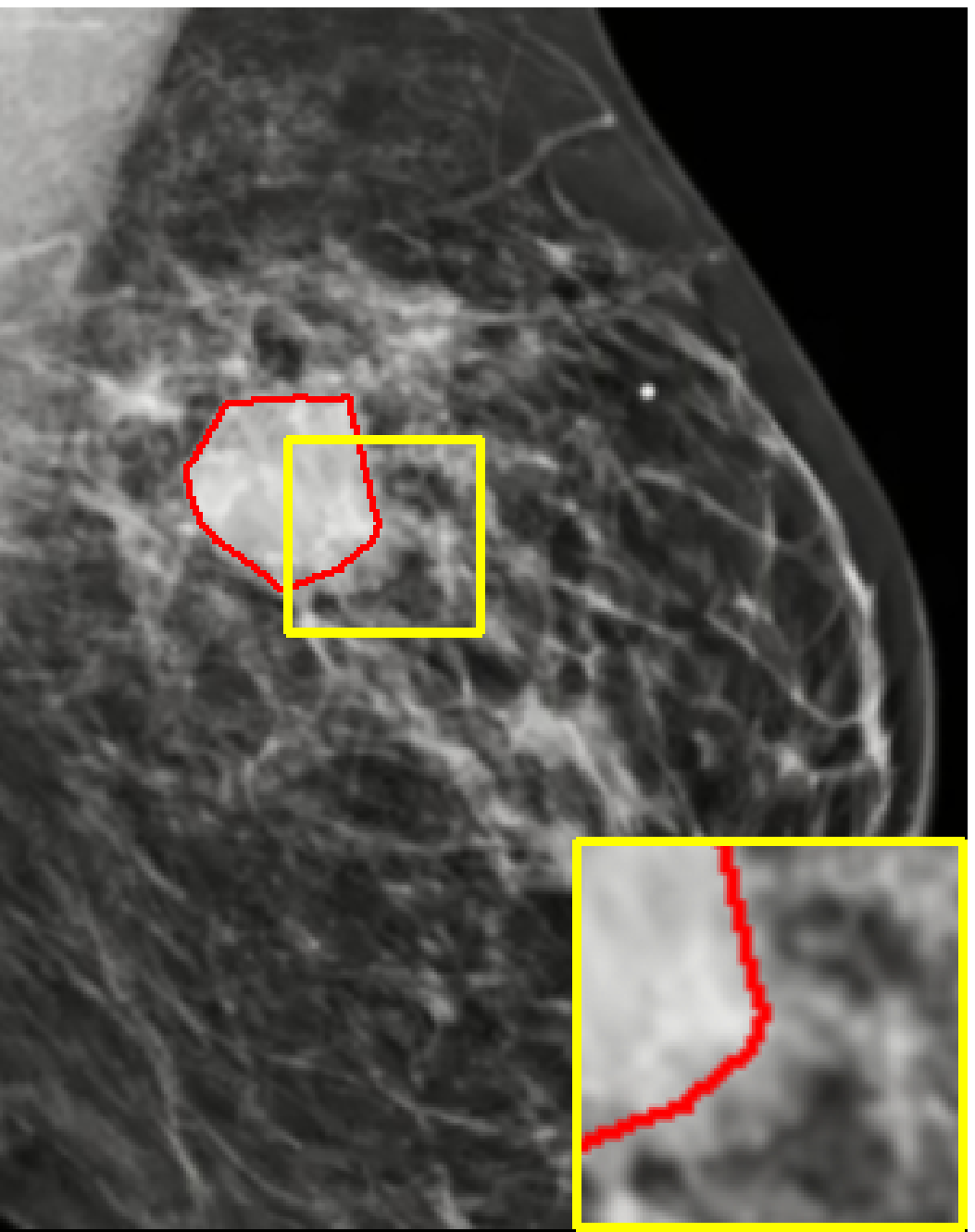}\\
	\includegraphics[height=1.6cm, width=2cm]{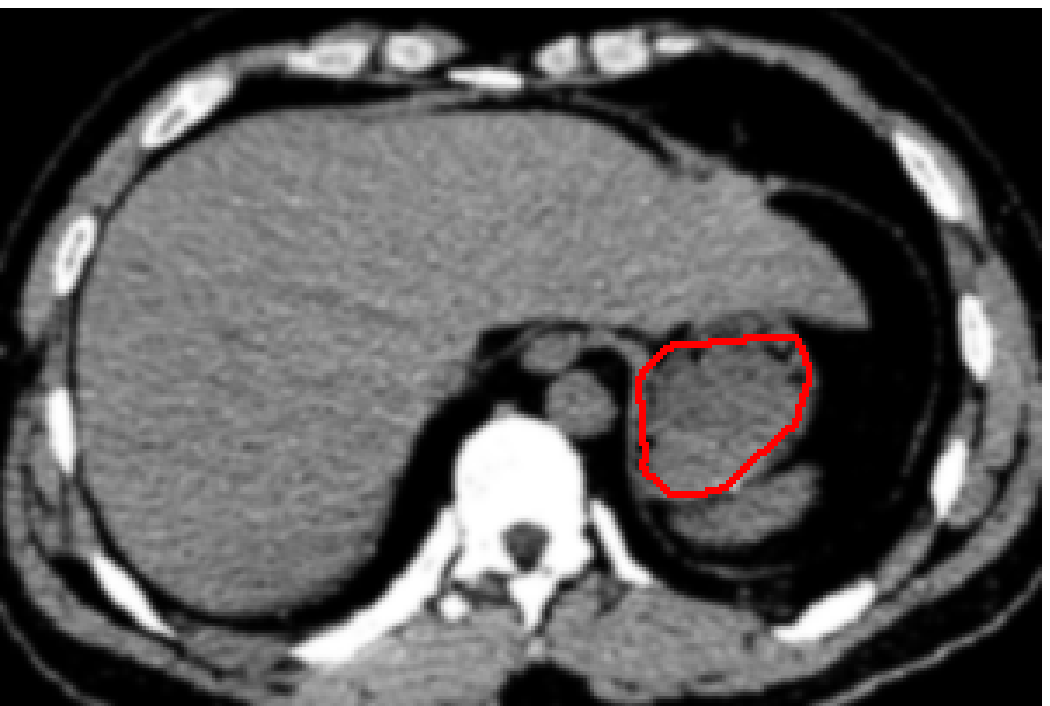}
	\includegraphics[height=1.6cm, width=2cm]{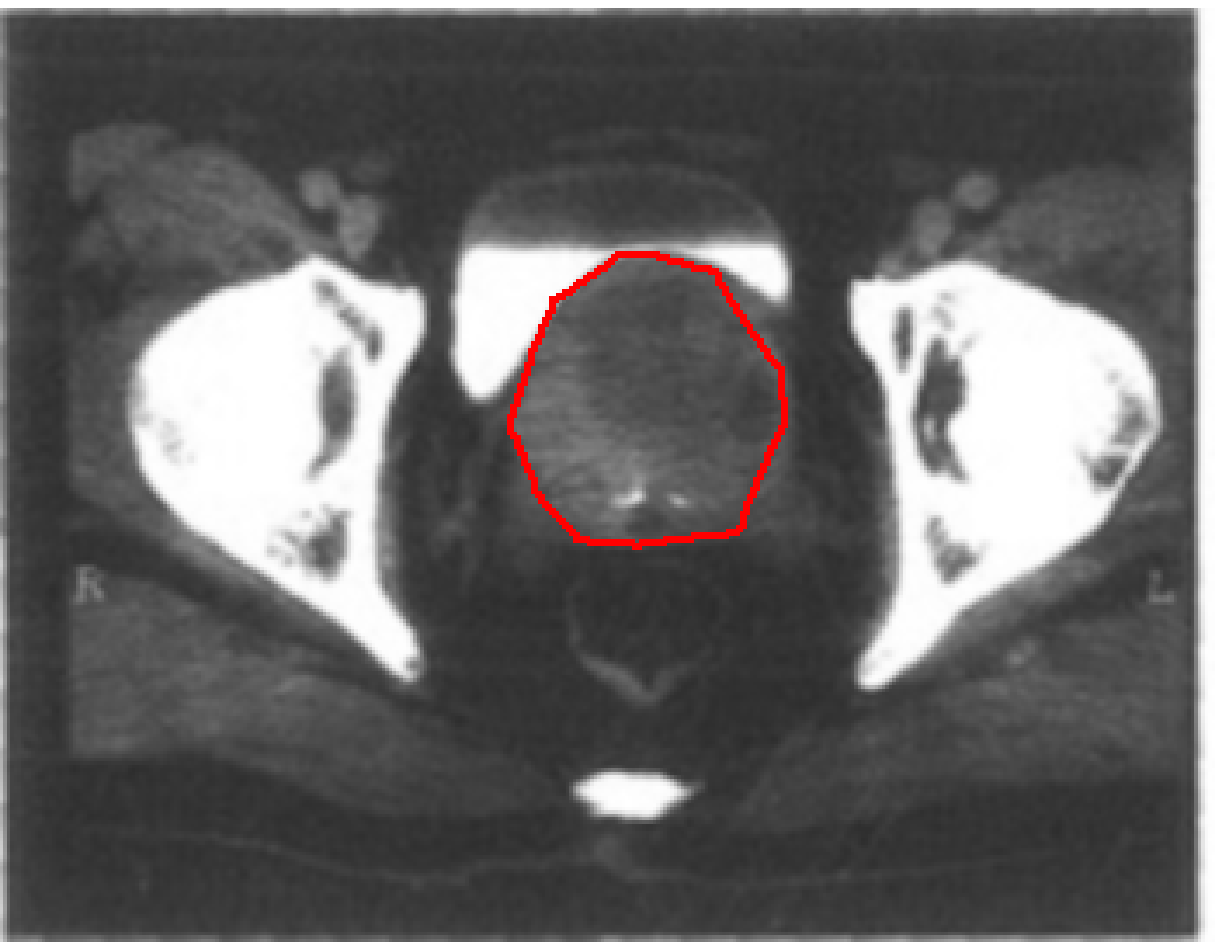}
	\includegraphics[height=1.6cm, width=2cm]{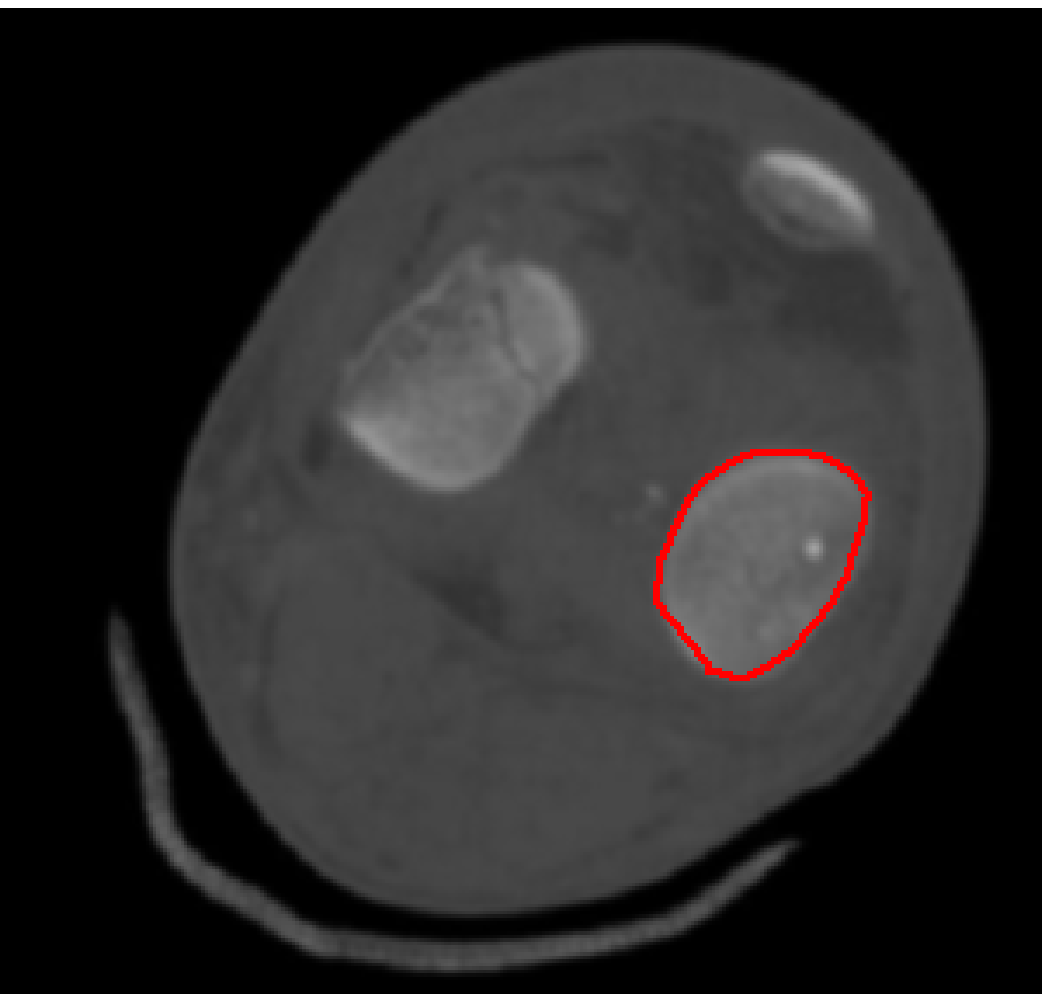}
	\includegraphics[height=1.6cm, width=2cm]{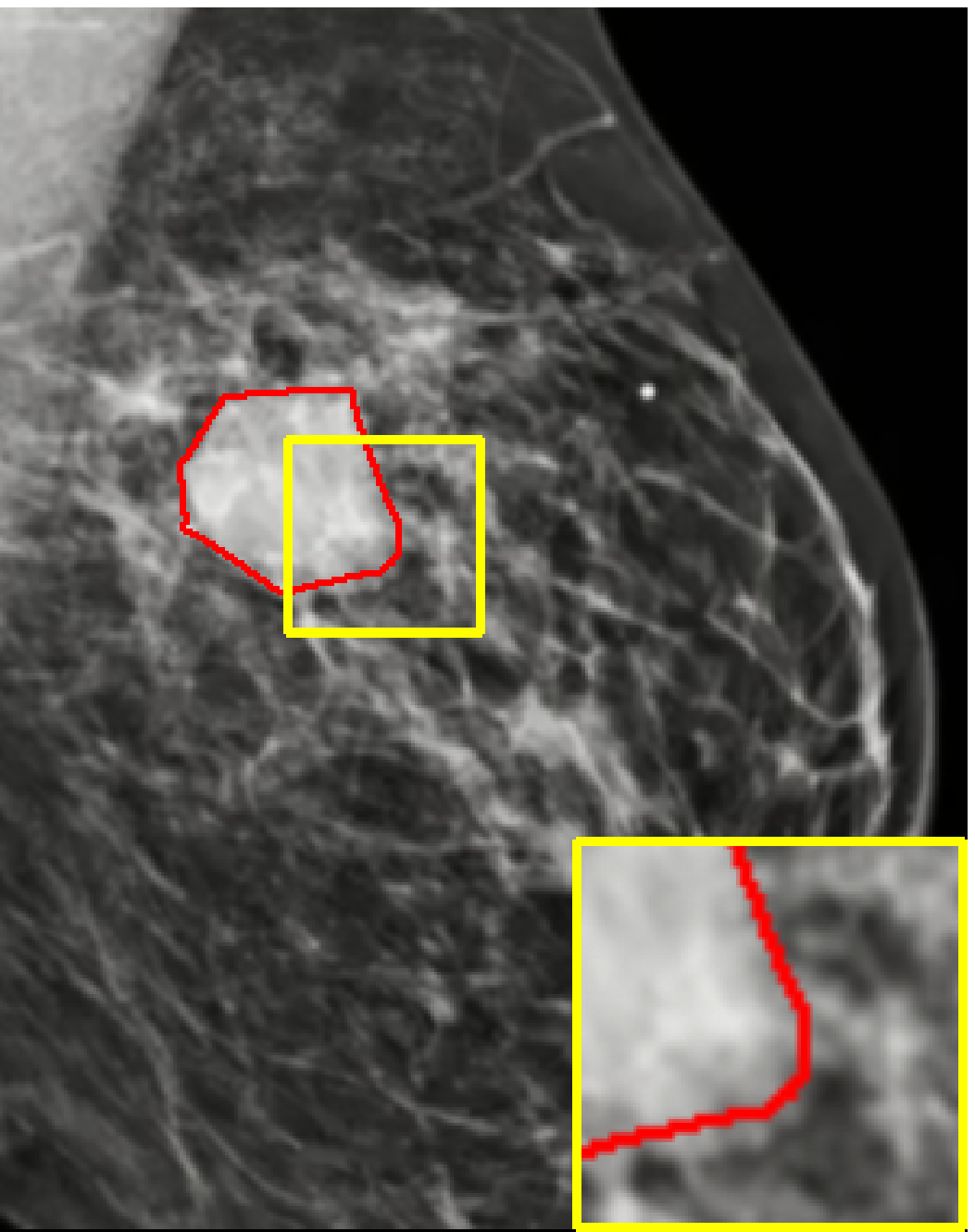}
	\\
	\includegraphics[height=1.6cm, width=2cm]{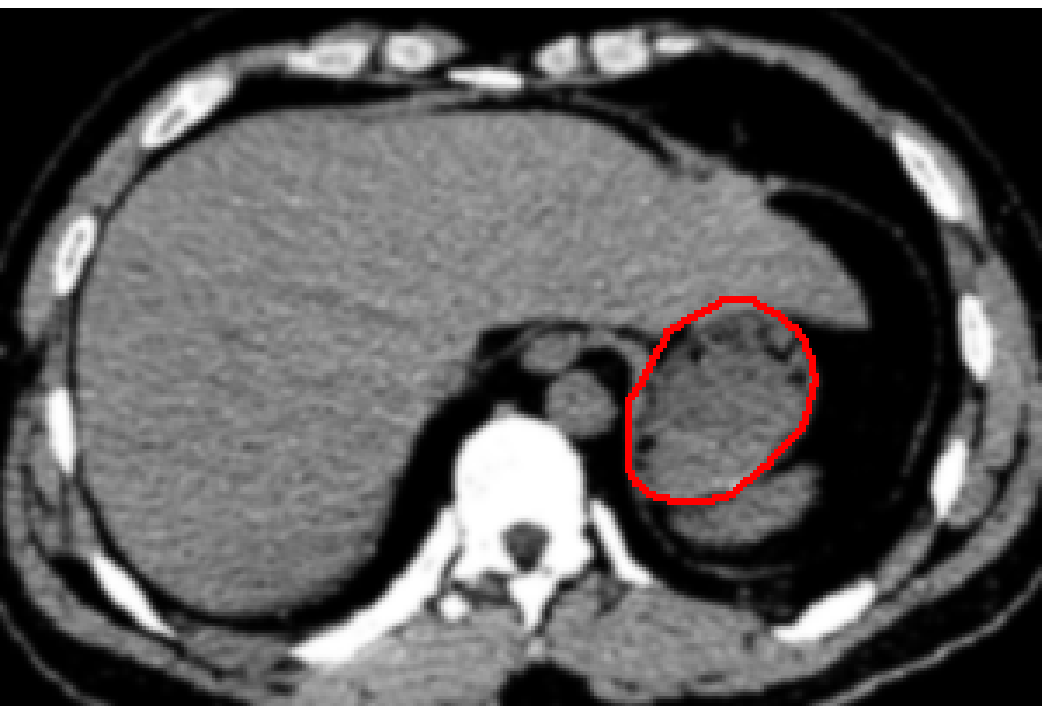}
	\includegraphics[height=1.6cm, width=2cm]{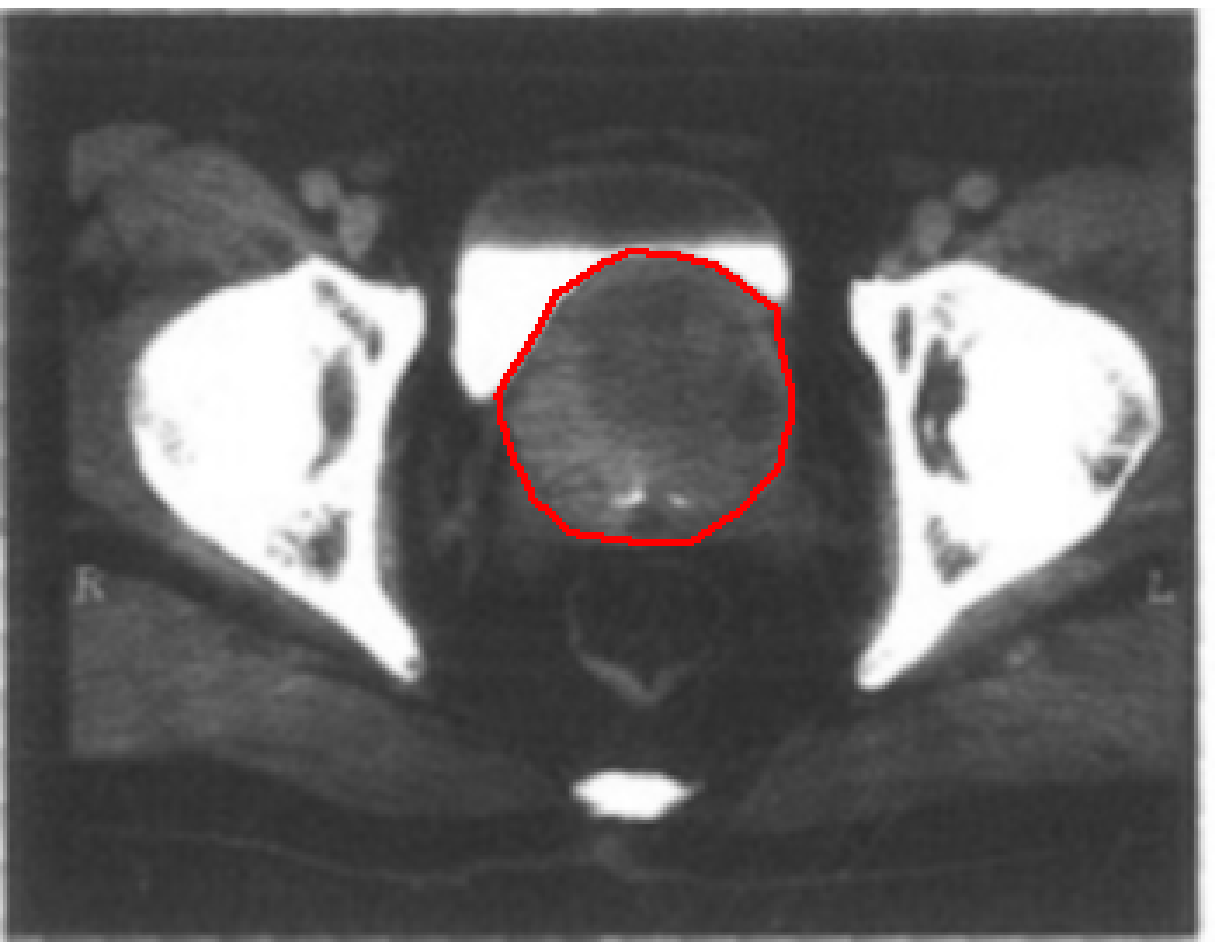}
	\includegraphics[height=1.6cm, width=2cm]{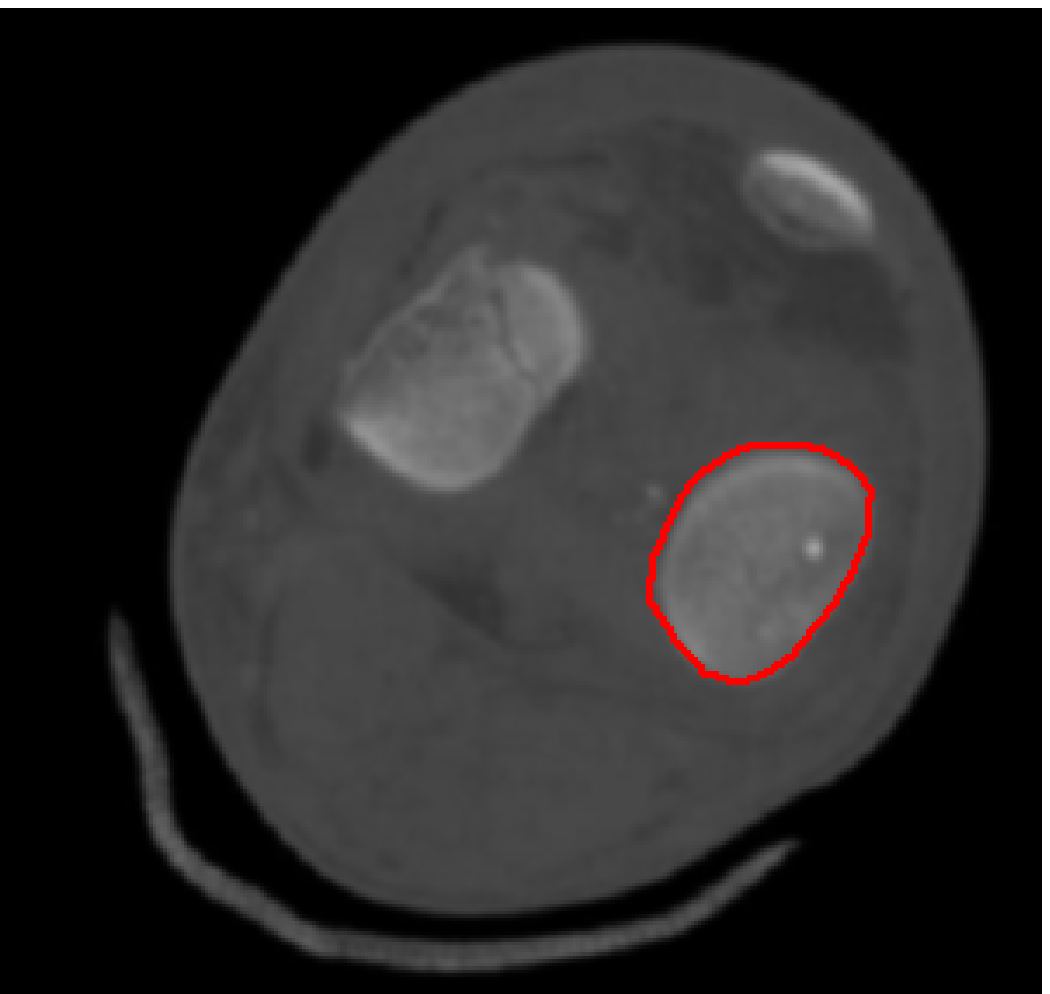}
	\includegraphics[height=1.6cm, width=2cm]{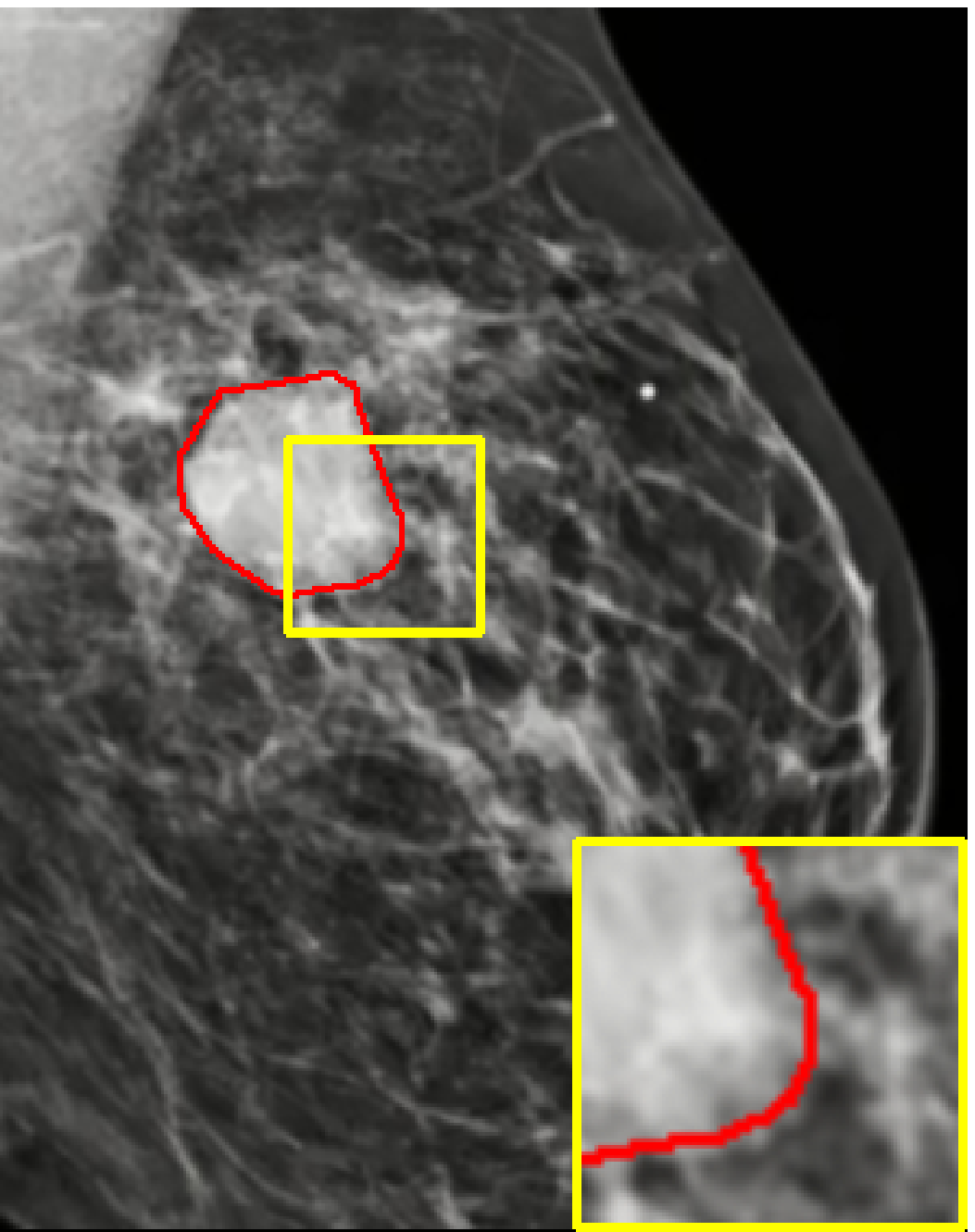}
	\caption{Labeled images (1st row) and results (2nd - 4th rows) by the interactive procedure:
		The labels for background are marked in purple, and
		the labels subscribed for foreground at the 1st, 2nd, and 3rd iterations are marked in red, green, and blue, respectively.  Parts of interest are zoomed.}
	\label{fig:Interactive}
\end{figure}

Comparing the results,
we can observe that the segmentation accuracy is improved gradually with some more labels are subscribed.
Let's take the left abdomen image as an instance. Boundary of the concerned object is very weak and the intensities on its background are nonuniform severely.
By using the initial labels (red), the proposed method cannot extract the object accurately.
With more and more labels  added,
the segmentation result is improved step by step until an accurate segmentation results are obtained.
In addition, the breast image listed in the fourth column is more challenging because of the cluttered textures and edges.
Therefore, the initial segmentation result is far from satisfactory.
When some additional labels are subscribed properly, our proposed method is able to segment it successfully.
In summary, these results demonstrate that the interactive procedure is capable of dealing with complicated image segmentation by adding proper labels gradually.

\subsection{Convex hull computation}\label{exp:ch}
In this subsection we present some numerical examples for convex hull
computation of given sets, which are used in \cite{li2021new}.
For clean data set (resp. noisy data set),
the proposed model (\ref{eq:mod_convexhull}) (resp. model (\ref{eq:convexhull_noise})) is applied to compute exact (approximate) convex hull.
We compare the results by our method and the LS method \cite{li2021new} quantitatively in the sense of shape-distance, where the results of clean data sets by the
quick-hull algorithm \cite{barber1996quickhull} are viewed as ground truth.
We chose the same parameters with the segmentation issue of convex objects except the radii of radial functions in the convexity constraint term, which are  $\text{mod}(l+1,22),\text{mod}(l+3,22),\text{mod}(l+5,22),\text{mod}(l+7,22),\text{mod}(l+9,22)$ and  $l:=2\text{mod}(\lfloor{k/100}\rfloor,10)$.
\begin{figure*}[!t]
	\centering
	\includegraphics[height=1.6cm, width=1.6cm]{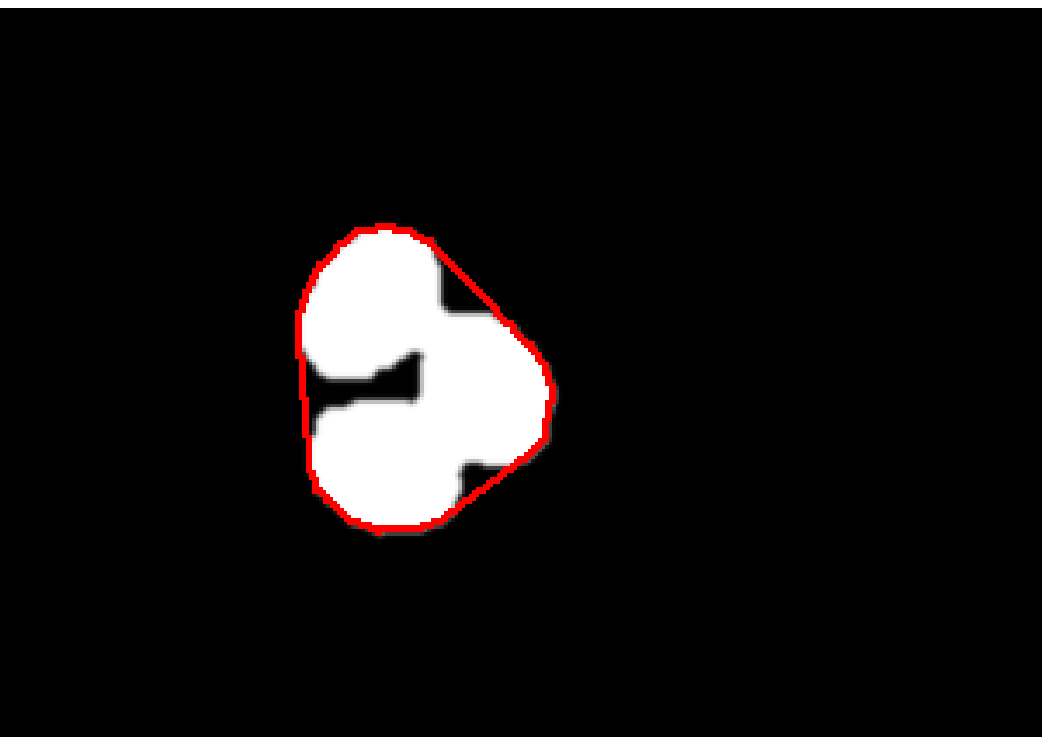}
	\includegraphics[height=1.6cm, width=1.6cm]{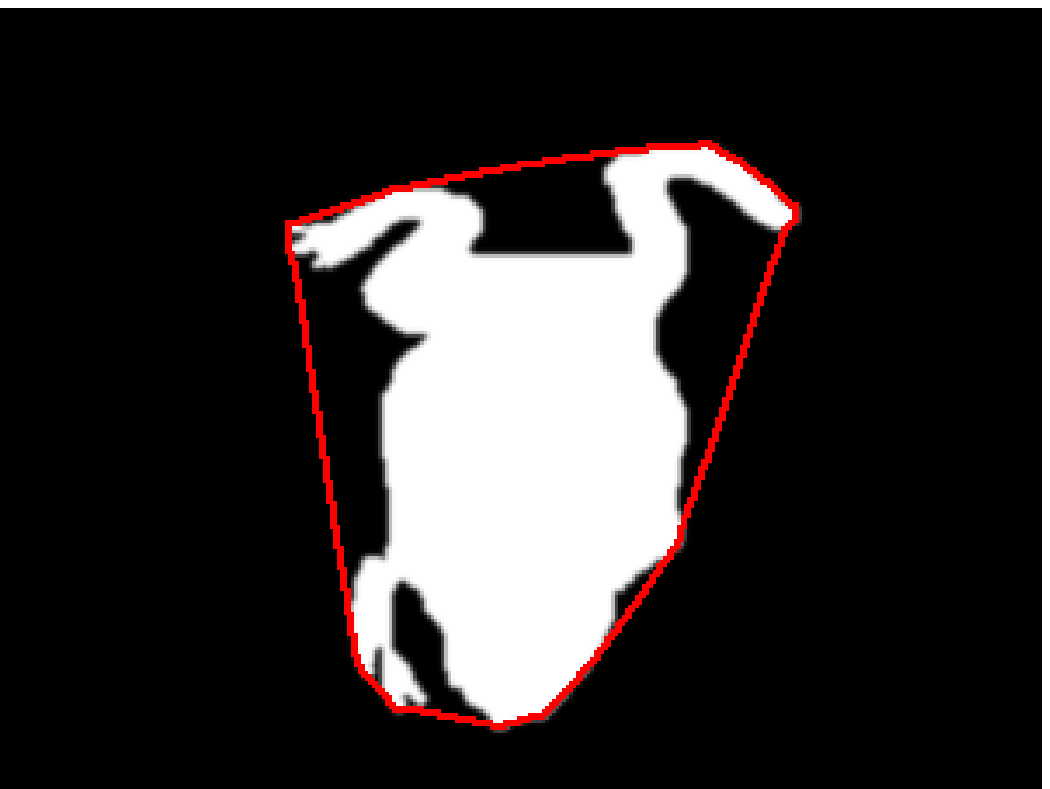}
	\includegraphics[height=1.6cm, width=1.6cm]{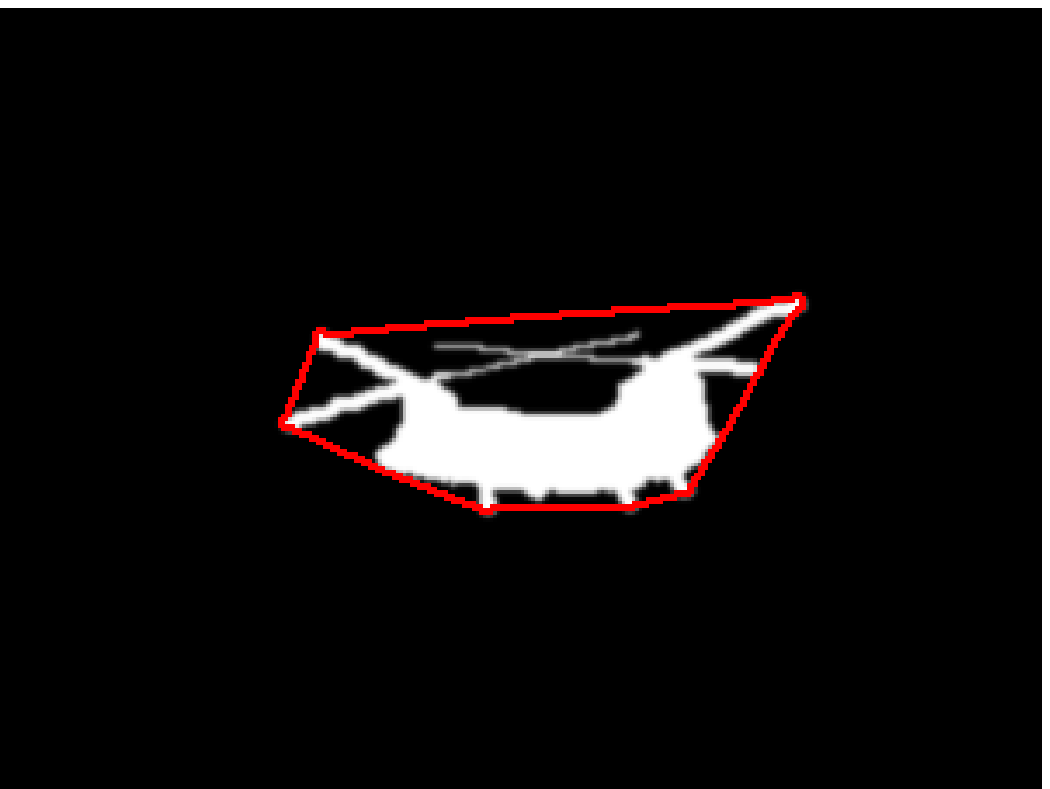}
	\includegraphics[height=1.6cm, width=1.6cm]{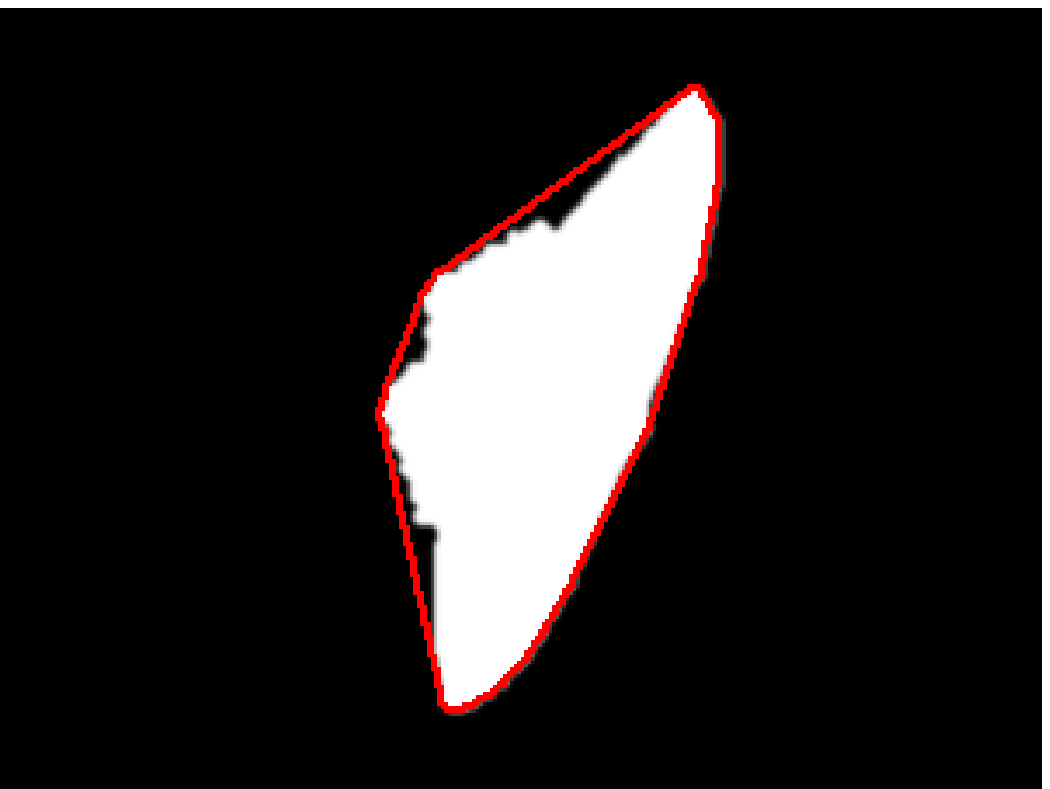}
	\includegraphics[height=1.6cm, width=1.6cm]{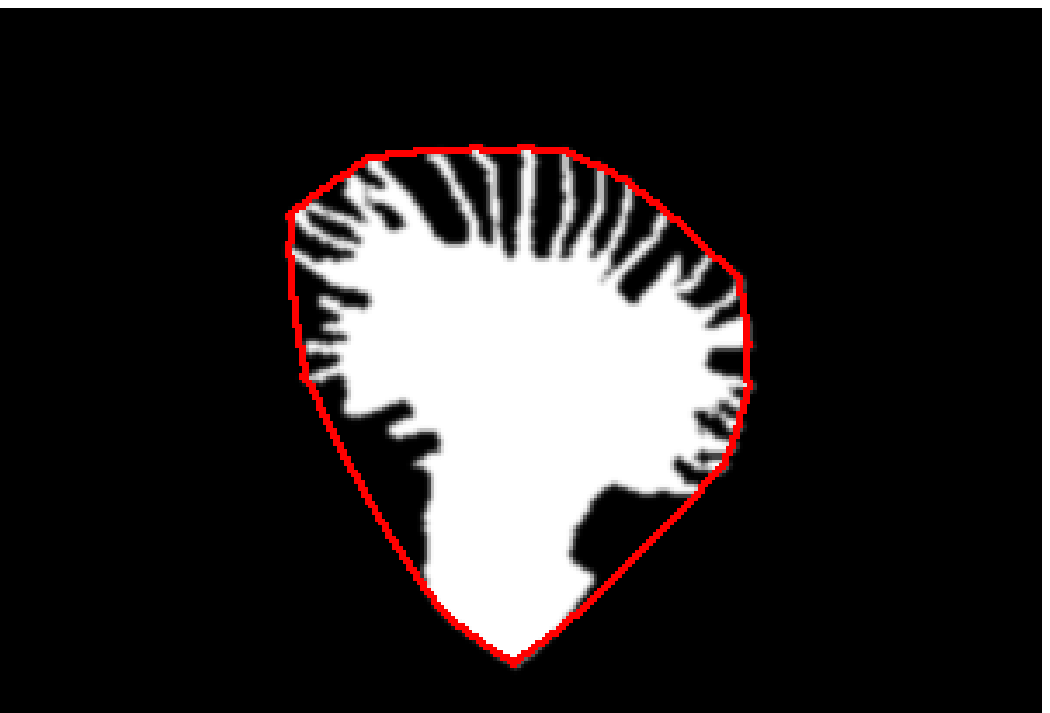}
	\includegraphics[height=1.6cm, width=1.6cm]{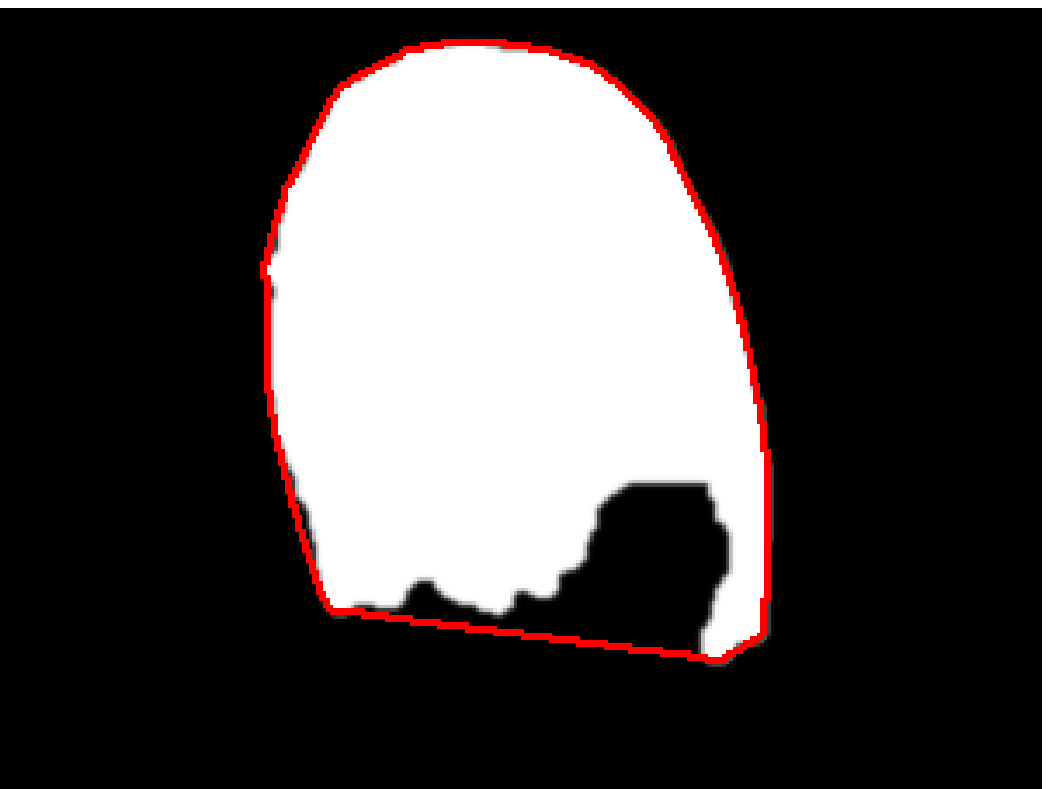}
	\includegraphics[height=1.6cm, width=1.6cm]{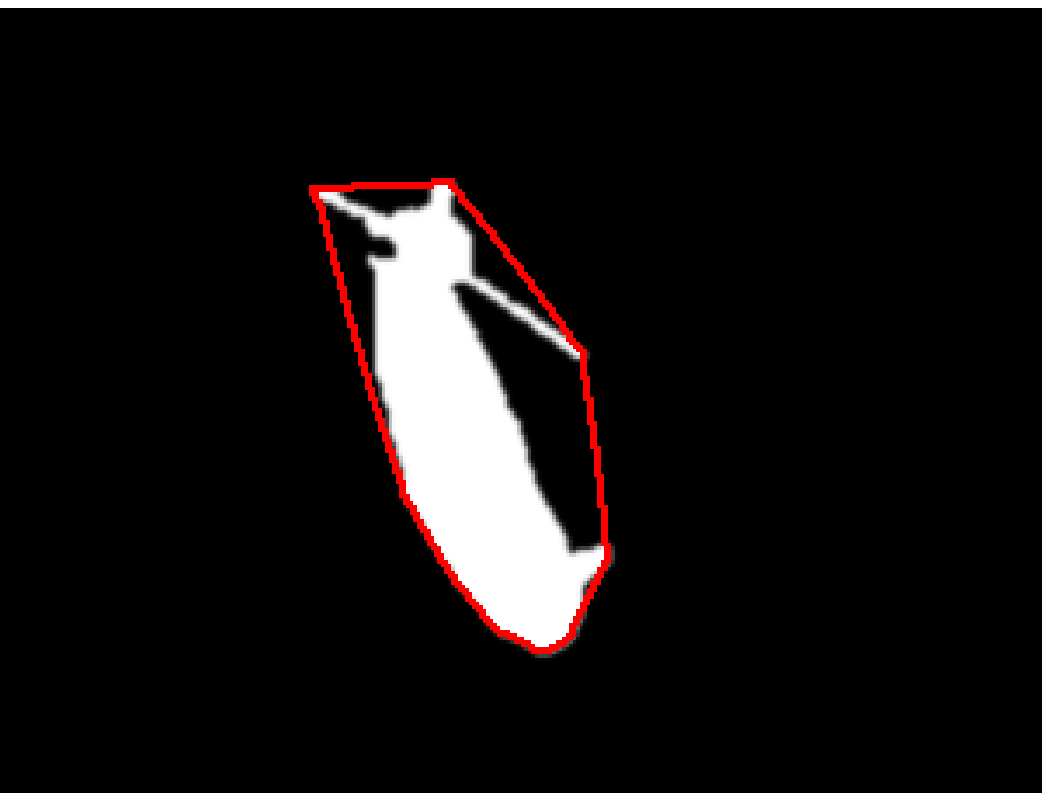}
	\includegraphics[height=1.6cm, width=1.6cm]{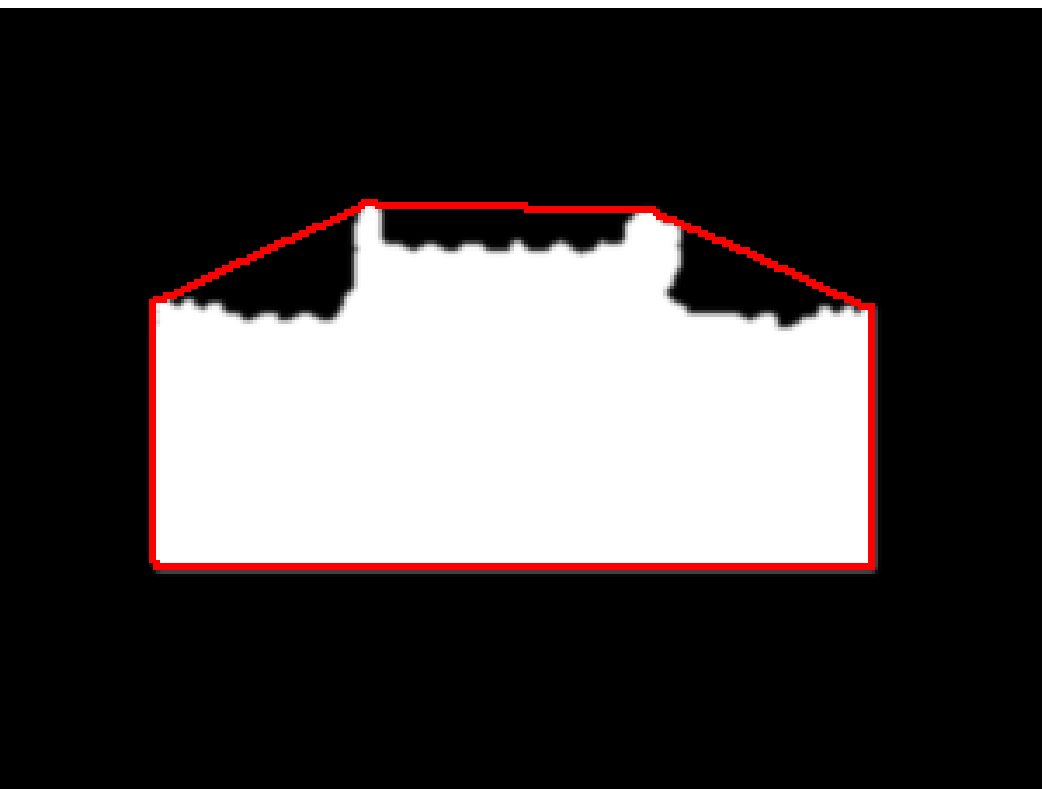}
	\includegraphics[height=1.6cm, width=1.6cm]{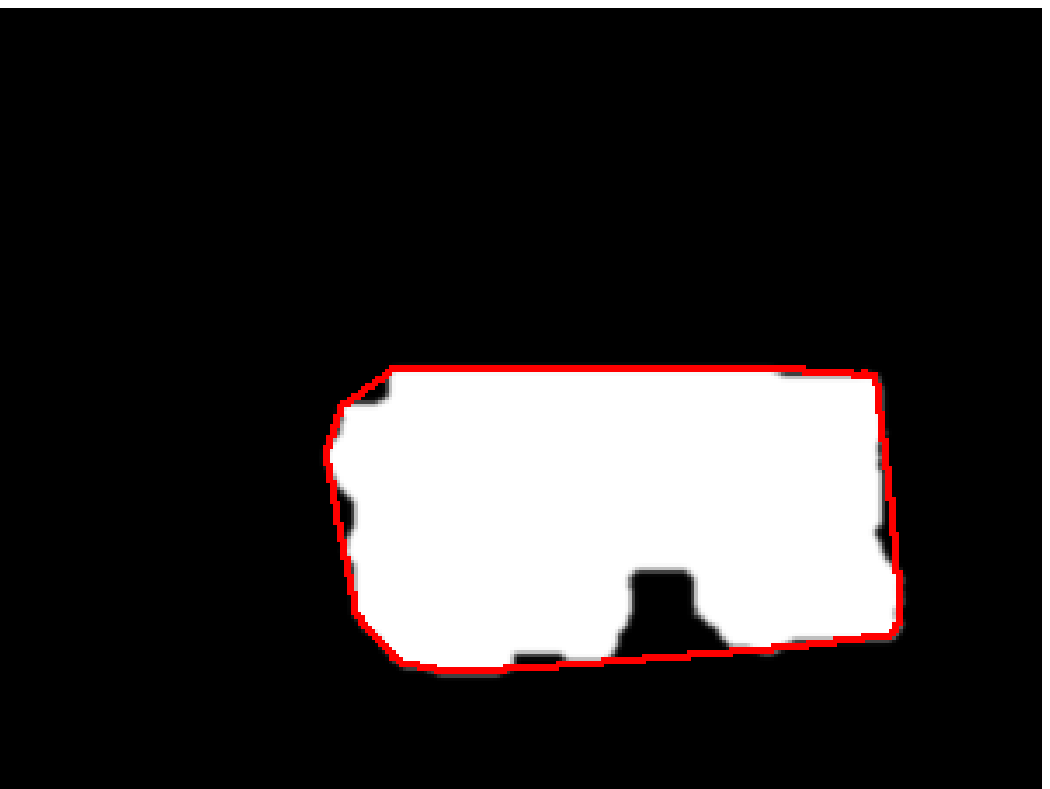}\\
	\caption{The convex hulls of clean data set and noisy data sets. They are named img1 to img9 from left to right.}
	\label{fig:convexhull_our}
\end{figure*}

\begin{table*}[!t]
	\caption{The shape-distances of the results of clean data set by the LS method and ours.}
	\label{tab:error_compare}
	\centering
	\begin{tabular}{c|ccccccccc}
		\hline
		name & img1 & img2 & img3  &img4 & img5 &img6 &img7 &img8 & img9\\
		\hline
		LS(\%) & 1.28 & 1.51 & 1.13 & 0.92 & 0.73 &0.61 &1.08 &0.63 &0.79\\
		
		our(\%) & 1.29& 0.96 & 1.09 & 0.90 & 0.75 &0.61 & 1.07 &0.63 &0.77\\
		\hline
	\end{tabular}
\end{table*}

In Fig. \ref{fig:convexhull_our}, some results
of the given data without noises are demonstrated.
The relative shape-distances are shown in Tab. \ref{tab:error_compare}, the average shape-distances of LS and the proposed methods are $0.964\%$ and $0.897\%$, from which we can see our proposed algorithm can yield
more accurate results than the LS method (For the
result of the LS method, one can refer to \cite{li2021new} in details).
\begin{figure}[htbp]
	\centering
	\includegraphics[height=1.6cm, width=1.6cm]{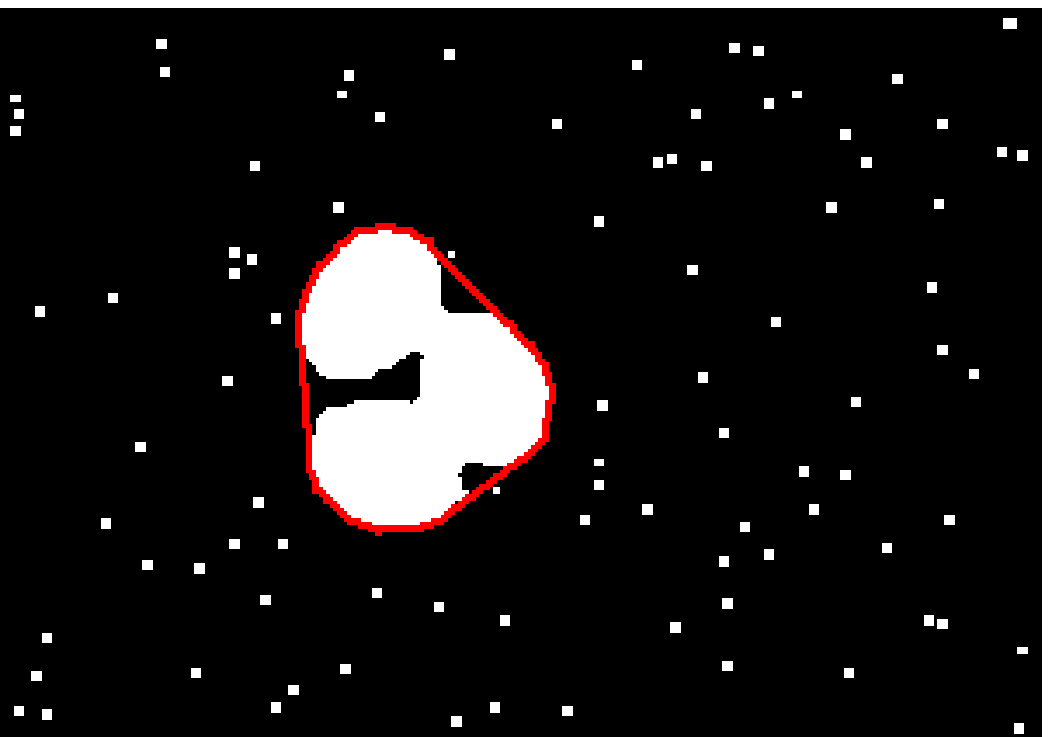}
	\includegraphics[height=1.6cm, width=1.6cm]{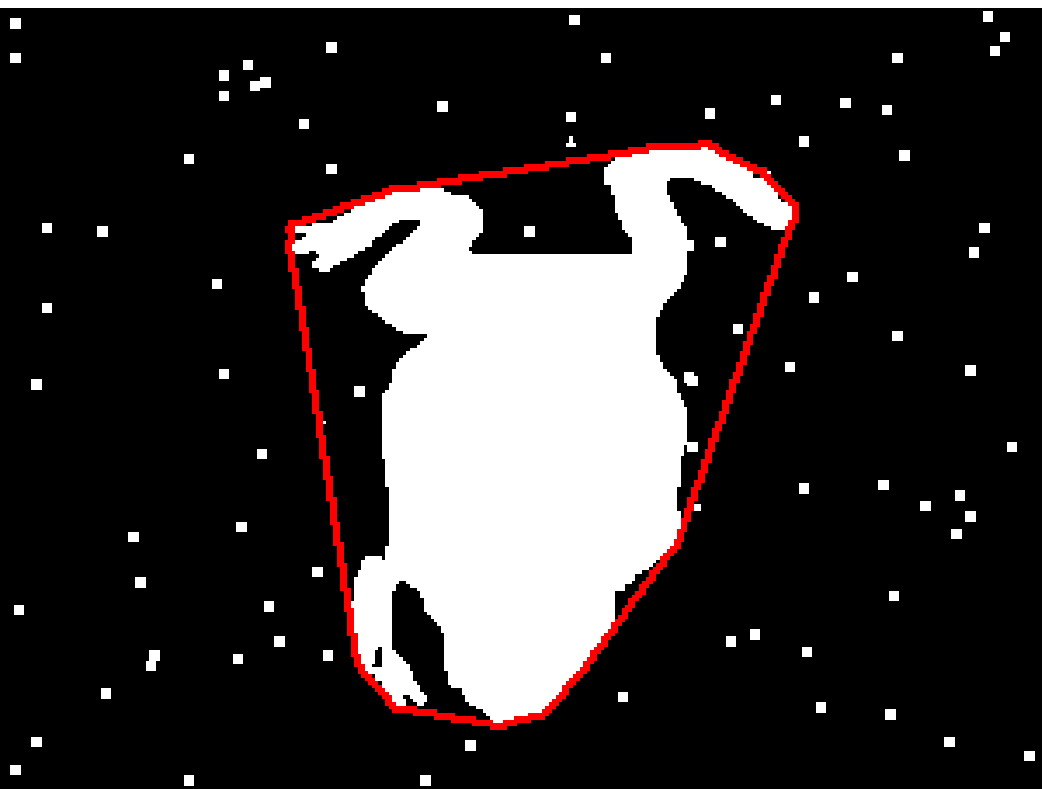}
	\includegraphics[height=1.6cm, width=1.6cm]{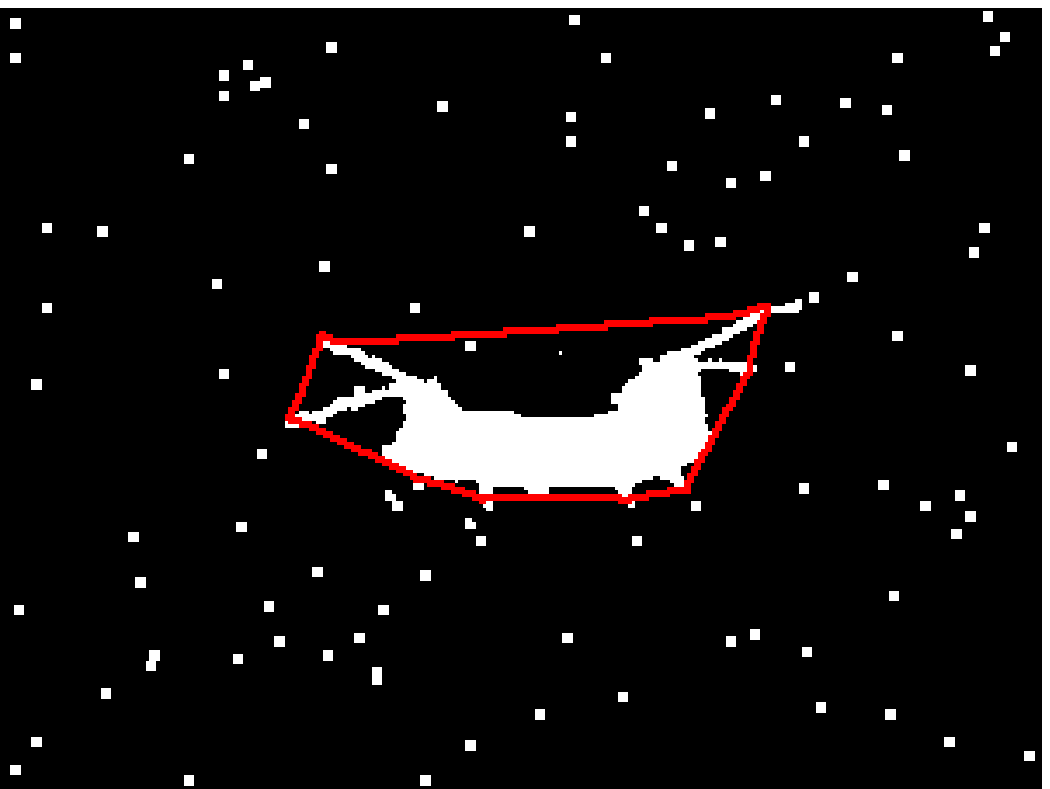}
	\includegraphics[height=1.6cm, width=1.6cm]{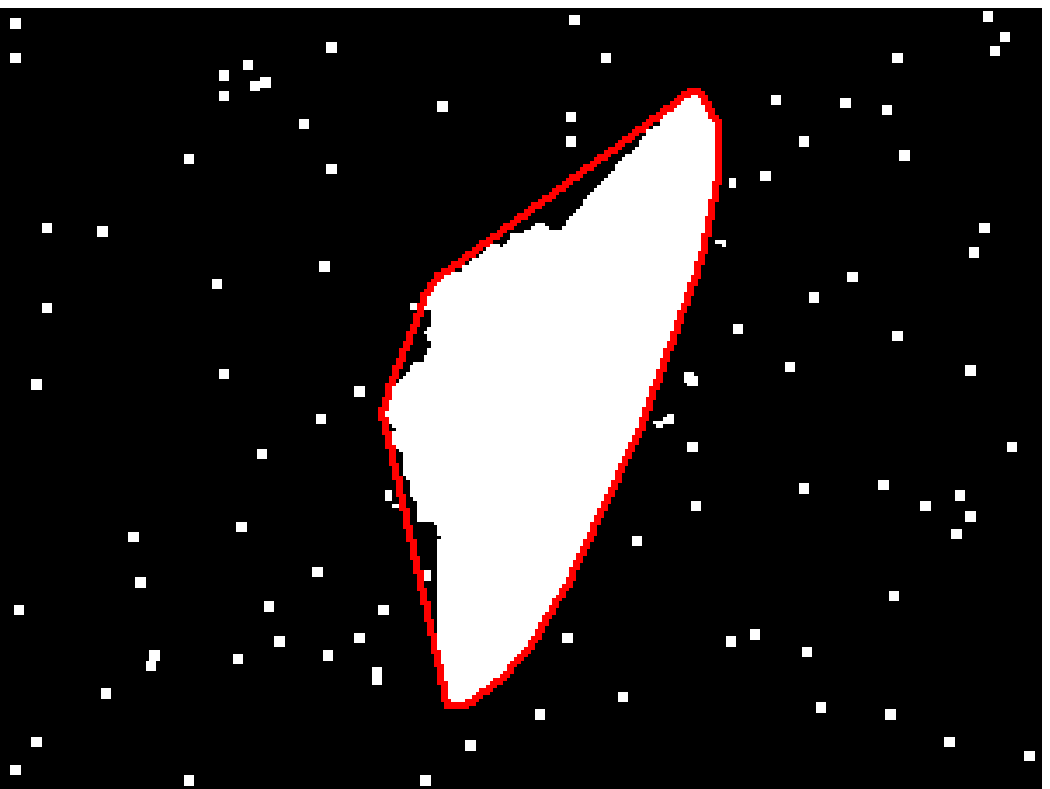}
	\includegraphics[height=1.6cm, width=1.6cm]{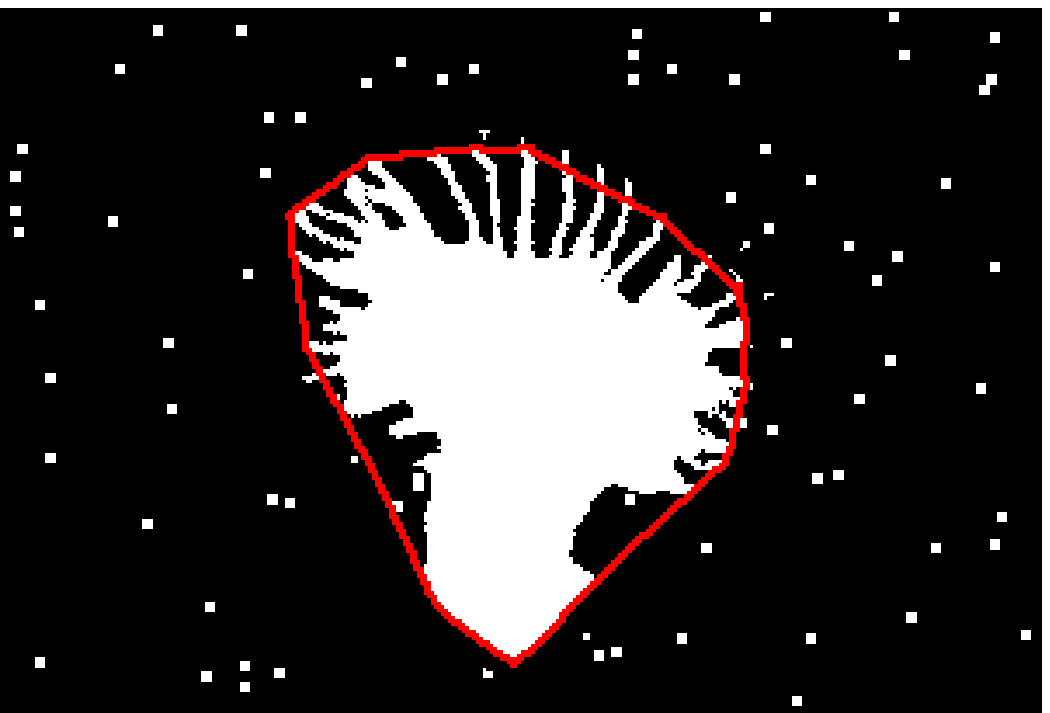}
	\includegraphics[height=1.6cm, width=1.6cm]{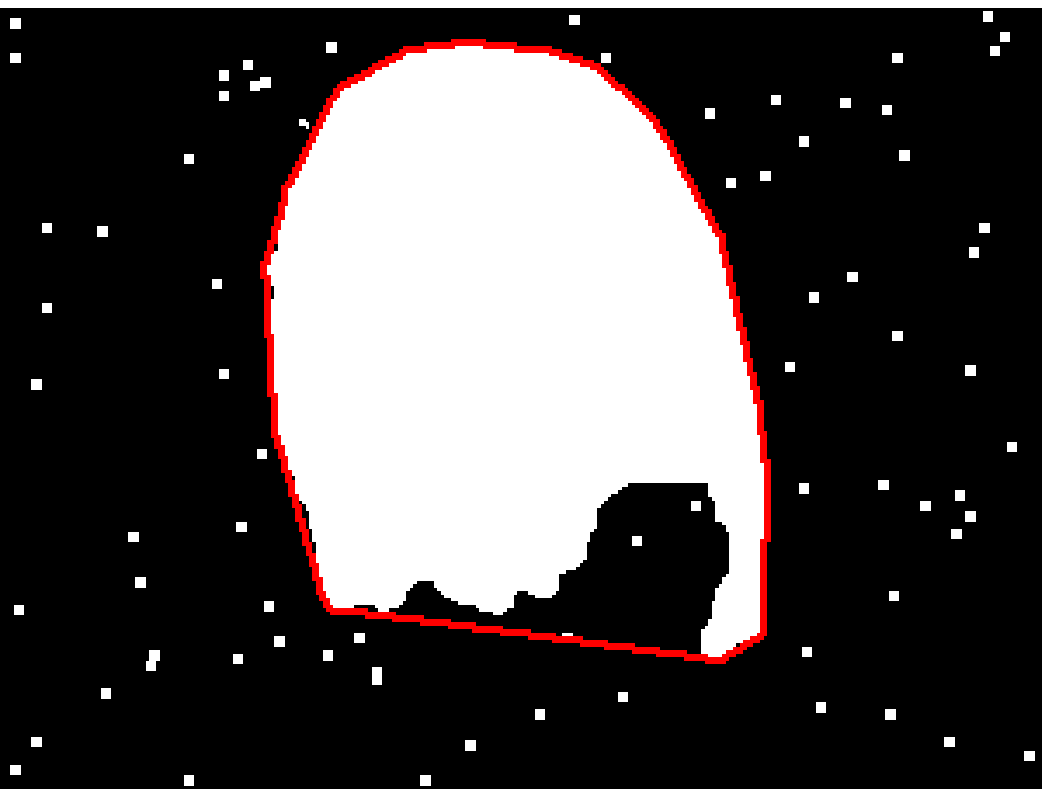}
	\includegraphics[height=1.6cm, width=1.6cm]{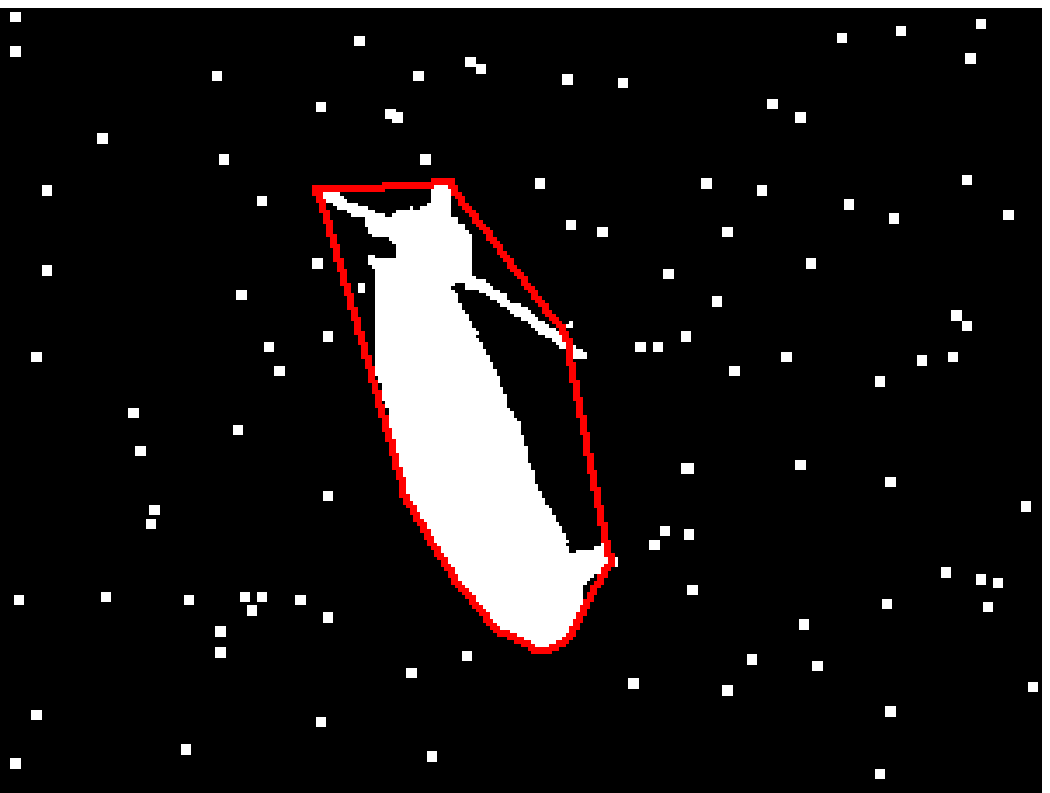}
	\includegraphics[height=1.6cm, width=1.6cm]{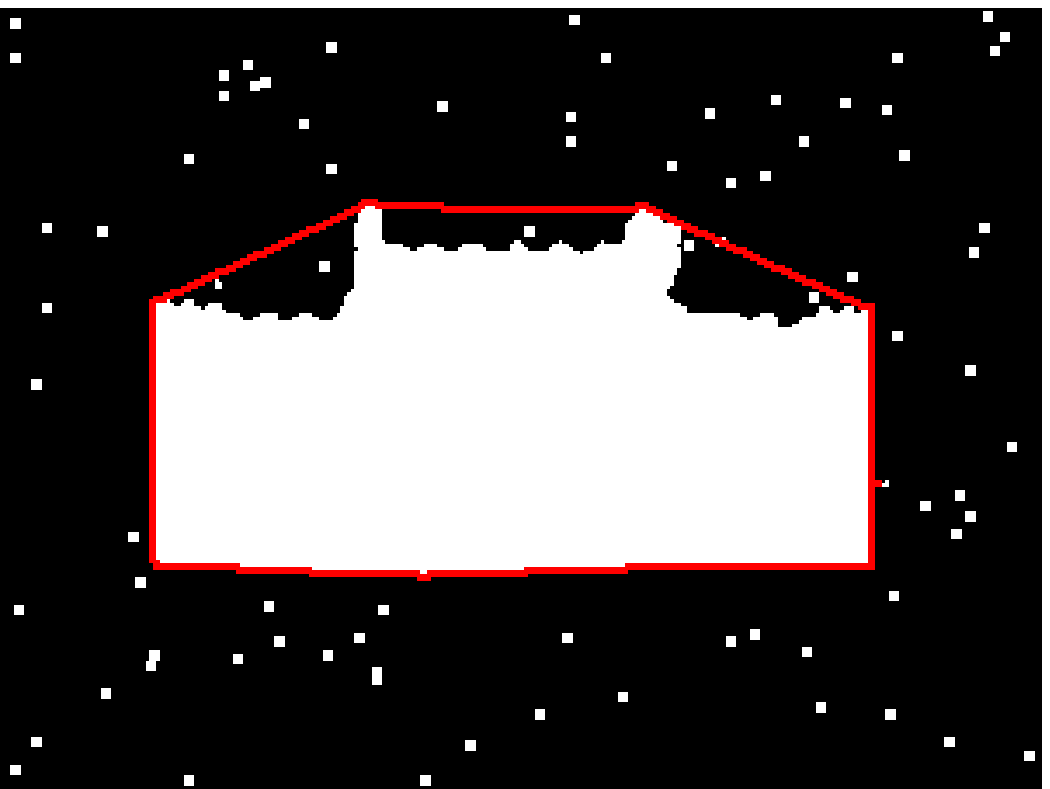}
	\includegraphics[height=1.6cm, width=1.6cm]{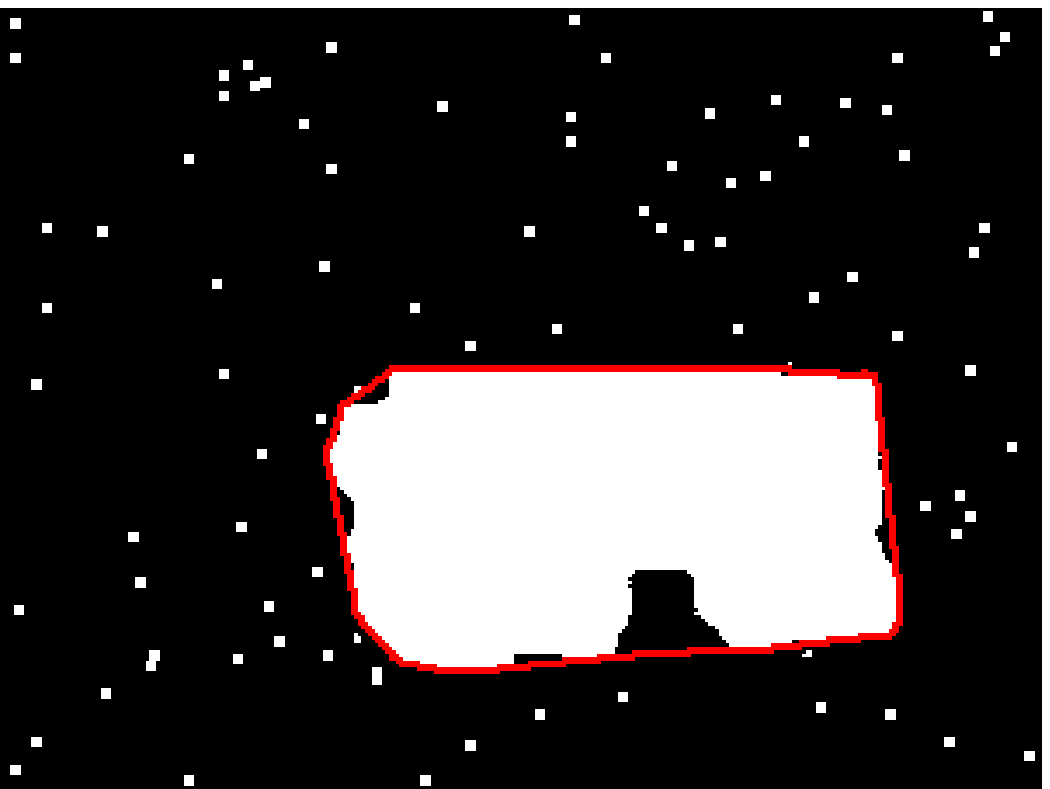}
	\caption{The convex hulls of noisy data sets. They are named img1 to img9 from left to right and top to bottom. }
	\label{fig:convexhull_new}
\end{figure}

The approximated convex hulls of the noisy data set are demonstrated in Fig. \ref{fig:convexhull_new}
by the model (\ref{eq:convexhull_noise}), where the regularization parameter $\lambda=1$. The quantitative comparisons of shape-distances are tabulated in Tab. \ref{tab:error_new}, the average shape-distances of the LS and the proposed methods are $4.524\%$ and $2.037\%$, which show the superiority of our method to the LS method.

\begin{table}[htbp]
	\caption{The shape-distances of the results of noisy  data set by the LS method and ours.}
	\label{tab:error_new}
	\centering
	\begin{tabular}{c|ccccccccc}
		\hline name & img1 & img2 & img3  &img4 & img5 &img6 &img7 &img8 & img9 \\
		\hline
		LS(\%) & 1.28 &4.79 &9.63 &3.33 &4.39 &2.73 &6.85 &3.79 &3.93\\
		our(\%) & 1.29& 1.51 & 3.28 &1.80 &3.37 &1.22 &3.21 &1.88 &0.77\\
		\hline
	\end{tabular}
\end{table}

Lastly, we want to show an interesting performance of the proposed method for the issue involved multiple convex objects, which
is illustrated by numerical experiments on the convex hull computation of multiple disconnected sets.

Intuitively, we need to design a model with multiple indicator functions for multiple convex hulls computation similar to the issue of multiple objects segmentation. Factually,
if the distance(s) between the given sets is large enough with respective the radii of the radial functions for convexity constraint, the model (\ref{eq:convexhull_noisec}) can yield multiple convex hulls numerically  using only one indicator function, while the model will yield a single convex hull otherwise (see Fig. \ref{fig:convexhull_multi}).
For the given two data sets from two real images in
Fig. \ref{fig:convexhull_multi}, we use
radii $\{1,3,5,7,9,11,13,15\}$ for left and $\{1,3,5,7,9,11,13,15,20,25,30\}$ for right.

There are advantages and disadvantages for this property.
On one hand, we should increase the radii of the radial functions to yield a single convex hull if the given set is disconnected,
which will
result in higher computational cost. On the other hand, we can utilize this property and use only one rather than multiple indicator functions to compute multiple convex hulls if the distance(s) between them is large enough, which will reduce computation cost.
This property can also be applied to multiple objects segmentation.

\begin{figure}[htbp]
	\centering
	\includegraphics[width=2.5cm]{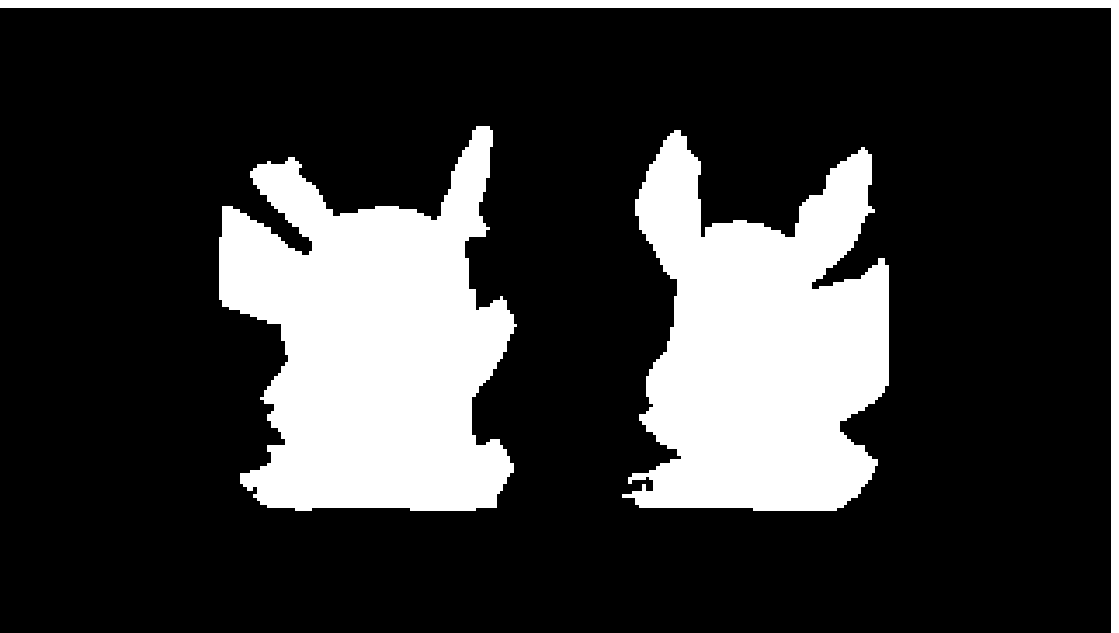}
	\includegraphics[width=2.5cm]{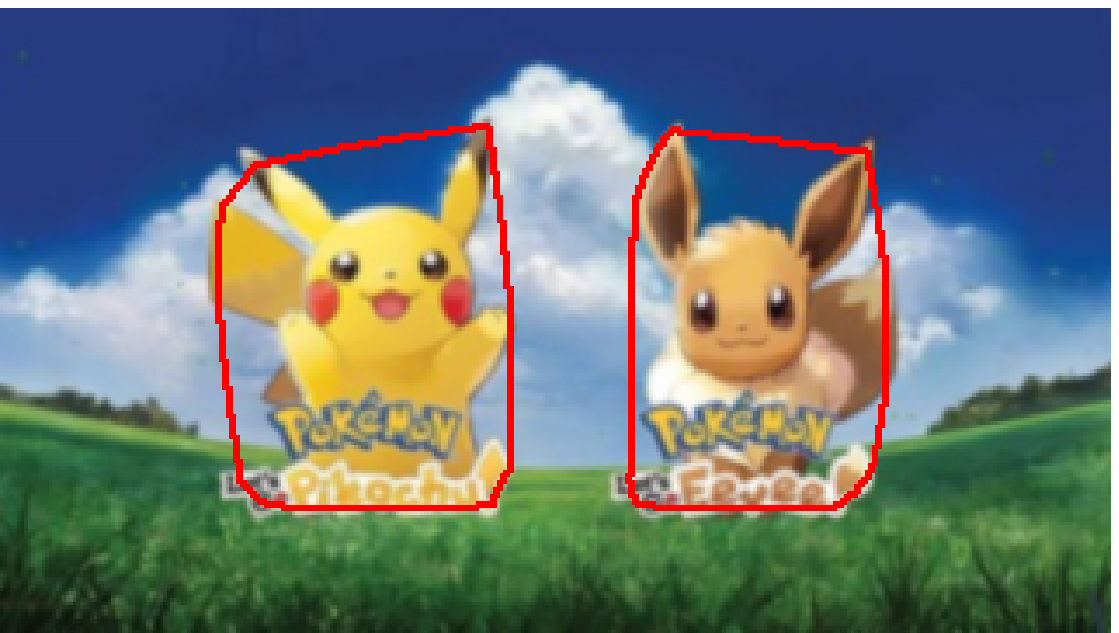}
	\includegraphics[width=2.5cm]{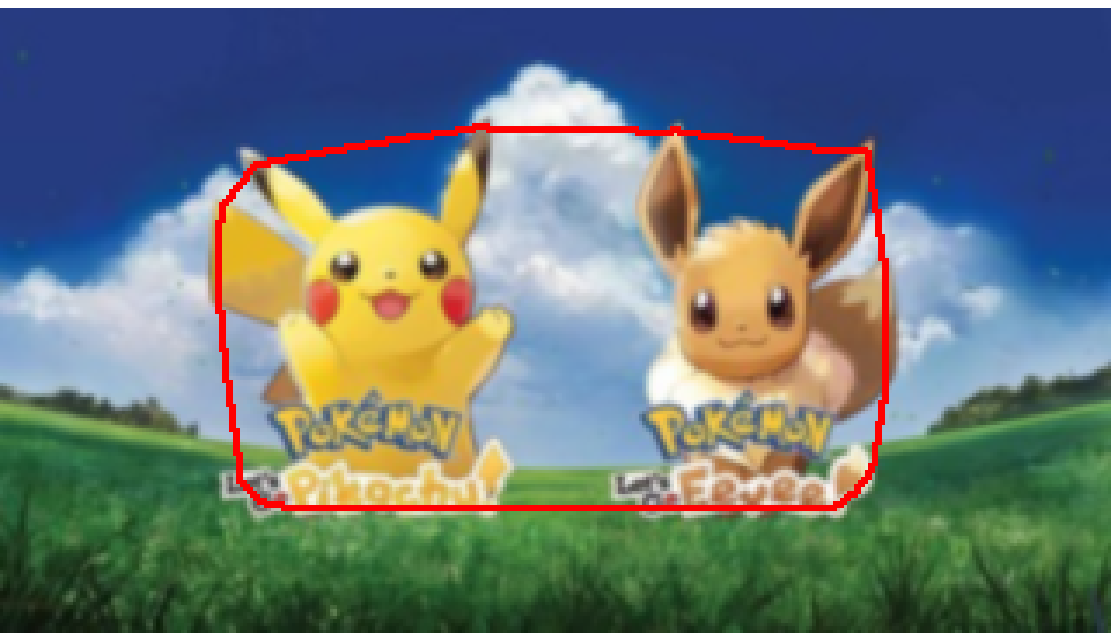}\\
	\includegraphics[width=2.5cm]{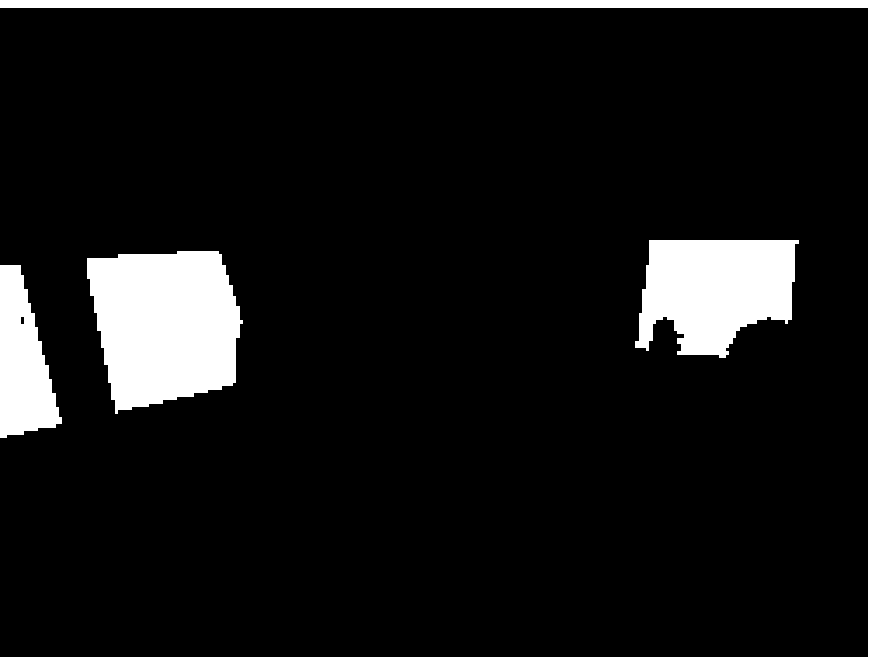}
	\includegraphics[width=2.5cm]{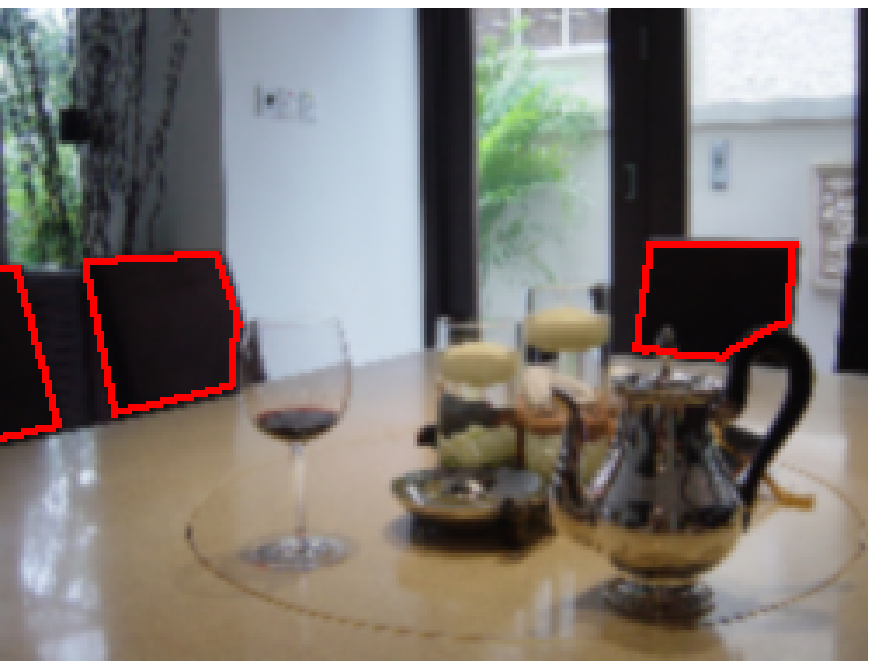}
	\includegraphics[width=2.5cm]{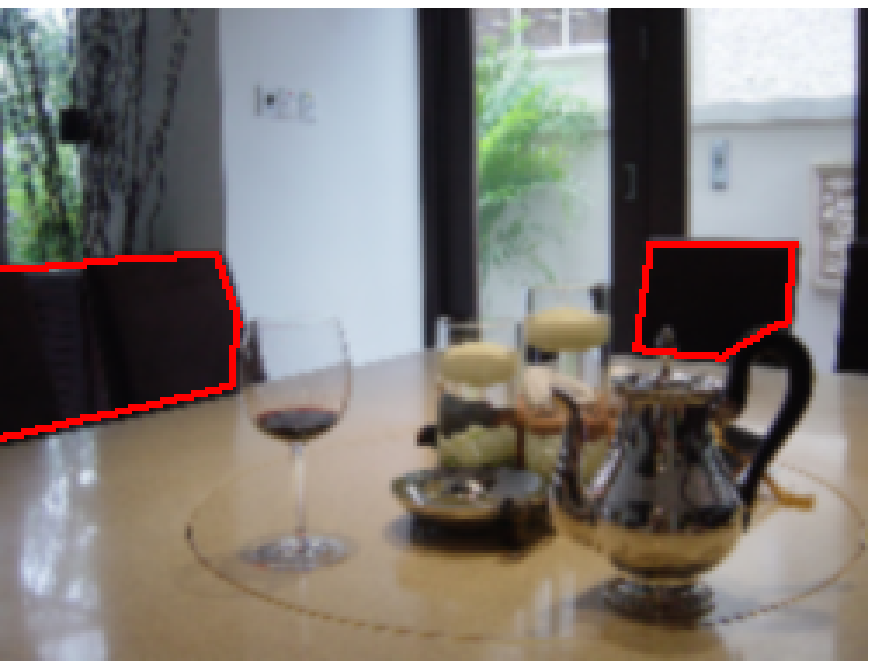}
	\caption{Multiple convex hulls computation. From left to right,
		the given sets and convex hulls using different radii for convexity constraint are demonstrated.}
	\label{fig:convexhull_multi}
\end{figure}

\section{Summary and Outlook}\label{sec5}
Shape priors are playing very important roles  in image segmentation, object recognition, computer vision. Computable characterizations for the shape priors are very vital for many applications. In this paper, we have shown that
the shape convexity in 2D space can be characterized by a quadratic	constraint on its associated indicator function, and experiments on segmentation and convex hull computation were conducted.

Over the past decade, great progresses on the research for the computable characterization of generic shapes (e.g., connectivity, star-shape and convexity) have been made. In our opinion, the following aspects for the research on shape priors deserve more studies.
Firstly, weak shape priors, e.g., objects with weak convexity that allows the curvature
of part of its boundary to be little less than zero, will be useful for many applications.
Secondly, combining generic shape priors and deep learning methods to improve the stability of deep learning methods and segmentation accuracy has attracted
increasing attentions \cite{Liu_star,binary2021Luo}. In addition, learning more precise and complicate shape priors for concrete segmentation issues (e.g. prostate and gland segmentation ) is more practical in real applications.

\section*{Acknowledgments}
This work was supported by the Programs for Science and Technology Development of Henan Province (192102310181, 212102310305), RG(R)-RC/17-18/02-MATH, HKBU 12300819, NSF/RGC Grant N-HKBU214-19, ANR/RGC Joint Research Scheme (A-HKBU203-19), RC-FNRA-IG/19-20/SCI/01, and National Natural Science Foundation of China (No.11971149).



\bibliography{ref}
\end{document}